\theoremstyle{plain}
\newtheorem{theorem}{Theorem}[section]
\newtheorem{lemma}[theorem]{Lemma}
\theoremstyle{definition}
\newtheorem{definition}[theorem]{Definition}
\newtheorem{assumption}[theorem]{Assumption}
\newtheorem{example}[theorem]{Example}
\newtheorem{note}[theorem]{Note}
\theoremstyle{remark}
\def\eqref#1{equation~\ref{#1}}
\def\1{\bm{1}}
\def\eps{{\epsilon}}
\def\dd{\mathrm{d}}
\def\vs{{\bm{s}}}
\def\vx{{\bm{x}}}
\def\mA{{\bm{A}}}
\def\mB{{\bm{B}}}
\def\mX{{\bm{X}}}
\DeclareMathAlphabet{\mathsfit}{\encodingdefault}{\sfdefault}{m}{sl}
\SetMathAlphabet{\mathsfit}{bold}{\encodingdefault}{\sfdefault}{bx}{n}
\def\gA{{\mathcal{A}}}
\def\gH{{\mathcal{H}}}
\def\gI{{\mathcal{I}}}
\def\gL{{\mathcal{L}}}
\def\gM{{\mathcal{M}}}
\def\gO{{\mathcal{O}}}
\def\gS{{\mathcal{S}}}
\def\gV{{\mathcal{V}}}
\def\gX{{\mathcal{X}}}
\newcommand{\E}{\mathbb{E}}
\newcommand{\R}{\mathbb{R}}
\newcommand{\softmax}{\mathrm{softmax}}
\newcommand{\KL}{D_{\mathrm{KL}}}
\definecolor{mycolor}{rgb}{0.1,0.1,0.8}
\newcommand*{\DKL}[2]{\KL(#1\|#2)}
\newcommand*{\DEP}{\operatorname{DEP}}
\newcommand*{\SEP}{\operatorname{SEP}}
\newcommand*{\NEW}{\operatorname{NEW}}
\newcommand*{\PPL}{\operatorname{TER}}
\newcommand*{\SER}{\operatorname{SER}}
\newcommand*{\PROC}{\operatorname{REVR}}
\newcommand*{\mask}{\texttt{[m]}}
\newcommand*{\ulblue}[1]{\textcolor{mycolor}{{#1}}}
\newcommand{\RETURN}{\STATE \textbf{return} }
\title{Theoretical Benefit and Limitation of Diffusion Language Model}
\author{%
  Guhao Feng\thanks{Equal Contribution} \\
Peking University \\
  \And Yihan Geng$^*$ \\
  Peking University \\
  \And Jian Guan \\
  Ant Group\\
  \And Wei Wu \\
  Ant Group\\
  \And Liwei Wang \\
Peking University \\
  \And Di He \\
  Peking University \\
}
\begin{document}

\maketitle

\begin{abstract}
Diffusion language models have emerged as a new approach for text generation. By enabling the parallel sampling of multiple tokens in each diffusion step, they appear to offer a more efficient alternative to auto-regressive models. However, our observations show that current open-sourced diffusion language models require more sampling steps to achieve comparable accuracy on representative tasks--resulting in even higher inference costs than their auto-regressive counterparts. To investigate whether this is an inherent limitation, we conduct a rigorous theoretical analysis of a widely adopted variant: the Masked Diffusion Model (MDM). Surprisingly, our analysis reveals that the conclusion is highly sensitive to the choice of evaluation metric. Under mild conditions, we prove that when the target is near-optimal perplexity, MDMs can achieve this goal in a constant number of sampling steps, independent of sequence length. This result demonstrates that efficiency can, in principle, be attained without compromising generation quality. However, when targeting low sequence error rate--which is important for assessing the ``correctness" of a generated sequence, such as a reasoning chain--we show that in the worst case, the required sampling steps must scale linearly with sequence length, thereby eliminating the efficiency advantage. Our analysis establishes the first theoretical foundation for understanding the comparative strengths and limitations of MDMs, offering practical guidance on when to favor MDMs over auto-regressive models and vice versa.
\end{abstract}

\vspace{-10pt}
\section{Introduction}
\vspace{-5pt}
\label{sec:intro}

Diffusion models \citep{Ho2020Denoising,Song2020ScoreBased} have emerged as a powerful paradigm in generative modeling, establishing state-of-the-art performance in image synthesis \citep{Karras2022Elucidating,song2021denoising}. Their extension to discrete domains has opened new possibilities for generating sequences, such as natural language \citep{Campbell2022ACT,Dieleman2022Continuous,Zheng2023ARD,lou2024discrete,campbell2024generative,lovelace2024diffusion} and biological sequences \citep{rastogi2022semi,vignac2022digress,sun2023difusco,avdeyev2023dirichlet}. Among various discrete diffusion architectures, masked diffusion models (MDMs) \citep{shi2024simplified,sahoo2024simple,ou2024your}—which generate sequences by iteratively converting masks to tokens—have demonstrated competitive performance across language generation tasks.

While auto-regressive models generate sequences token-by-token, discrete diffusion models can generate multiple tokens simultaneously during each step, offering the potential for greater efficiency. However, efficiency and quality are often two sides of the same coin, and the key lies in identifying the trade-off points of different approaches. Unfortunately, as shown in \cref{fig:gsm8k}, we observed that for two recent open-sourced large MDMs, achieving performance comparable to that of left-to-right generative models incurs higher computational costs, as more sampling steps are required. This leads us to ask: do discrete diffusion models really offer a better trade-off than auto-regressive models, achieving superior efficiency while maintaining high-quality generated content? The answer may vary. If MDMs require fewer steps (i.e., neural network executions) without compromising quality, they could offer a more favorable trade-off than auto-regressive models. However, if the number of executions needed to maintain quality is similar to or exceeds that of auto-regressive models, then MDMs may not present a clear advantage.

To address the above question, we present the first theoretical analysis of the efficiency of Masked Diffusion Models (MDMs) and find that the efficiency-accuracy trade-off is highly sensitive to the choice of evaluation metric. We adopt two complementary metrics to assess the efficiency of MDMs in language modeling. The first metric, \textit{token error rate} (TER), quantifies token-level accuracy, which correlates with the fluency of the generated text. In practice, \textit{perplexity} is a widely used metric for measuring token-level errors of language models \citep{jelinek1977perplexity,Devlin2019BERT}; thus, we define the metric of TER by perplexity in this paper. The second metric, \textit{sequence error rate} (SER), evaluates the correctness of an entire sequence, which is crucial for reasoning tasks requiring logically correct sequences. We provide a natural definition of SER that reflects the correctness of the whole sequence. Together, these metrics enable a comprehensive evaluation of MDMs at both the token and sequence levels.

We first provide a positive theoretical result regarding TER. We prove that under mild conditions, MDMs can achieve near-optimal TER with sampling steps regardless of the sequence length $L$. Compared to the auto-regressive model, which must be executed $L$ times to generate the sequence, MDMs demonstrate substantial efficiency gains, especially when the generation length is long. However, we show that this efficiency advantage diminishes when SER is considered. We theoretically prove that to achieve a low SER, in the worst case, the number of required sampling steps for MDMs must scale at least linearly with sequence length. Intuitively, this limitation arises from the fact that SER, as a metric for the entire sequence, requires the generated sequence to be free of any error in the whole sequence, which forces MDMs to sample only a small number of tokens per step to mitigate such inconsistencies. As a result, the number of required sampling steps can be significant. It is notable that each MDM sampling step usually incurs a higher computational cost than an auto-regressive step under the same architecture, thus MDMs offer no efficiency advantage. 

To fully validate our theoretical findings, we conduct synthetic experiments and examine MDMs trained on formal languages, including $n$-gram languages and Hidden Markov Models (HMMs), systematically analyzing the relationship between performance and efficiency under both TER and SER metrics. All empirical results align with our theoretical predictions: For MDMs, achieving a low SER requires a significant number of sampling steps, and this requirement increases as sequence length grows. In contrast, obtaining a satisfactory TER demands fewer sampling steps, with this number remaining relatively constant regardless of sequence length. These findings offer practical guidance for selecting when to deploy diffusion language models based on specific application needs and requirements.
\vspace{-10pt}
\section{Related Work}
\vspace{-5pt}
\textbf{Discrete Diffusion Models. }  
The auto-regressive paradigm has achieved significant success in language modeling \citep{dai2019transformer,floridi2020gpt,achiam2023gpt}. However, its left-to-right, token-by-token generation approach is not without limitations. Notably, it faces challenges such as restricted controllability \citep{zhang2023tractable} and inefficiencies in inference speed \citep{leviathan2023fast}. To overcome these drawbacks, inspired by the success of diffusion models in image generation \citep{SohlDickstein2015Deep,song2021denoising,Karras2022Elucidating} researchers have adapted these techniques for NLP tasks \citep{austin2021structured,He2022DiffusionBERT,chen2022analog,Meng2022Concrete,ye2023diffusion,Gulrajani2023LikelihoodBased,zhang2024language}. Discrete diffusion models, in particular, have shown promising results, achieving comparable performance with auto-regressive models across a range of NLP benchmarks.  

\looseness=-1Discrete diffusion models can be categorized based on the initialization strategy of the reverse process: (1) reverse processes that begin with masked sequences and (2) reverse processes that start with sequences of tokens sampled randomly from the vocabulary. The first category, termed \emph{masked diffusion models} (MDMs), includes models such as SEDD Absorb \citep{lou2024discrete} and its streamlined variants in subsequent works \citep{sahoo2024simple,zhao2024improving,shi2024simplified,ou2024your,zheng2024masked}. The second category encompasses models like SEDD \ \ Uniform \citep{lou2024discrete}, as well as extensions introduced in follow-up studies \citep{campbell2024generative}. Notably, \citet{gat2024discrete,davis2024fisher} and \citet{campbell2024generative} further extend flow-matching to the discrete domain, with differing initialization strategies: the former employs masked sequences, while the latter utilizes a customized distribution for the reverse process.

\textbf{Masked Diffusion Models. }  
Among the two primary classes of discrete diffusion models, MDMs have consistently demonstrated superior performance and scalability \citep{lou2024discrete,campbell2024generative}. For instance, in \citet{lou2024discrete}, the masked variant of SEDD significantly outperforms its uniform counterpart across a range of benchmarks. Similarly, \citet{campbell2024generative} reports that the masked variant achieves better results in most language tasks. 
Based on MDMs, some sampling strategies have been proposed to enhance efficiency or generation quality \citep{sahoo2024simple,ou2024your,wang2025remaskingdiscretediffusionmodels,kim2025trainworstplanbest}. Furthermore, recent advancements have successfully scaled MDMs to 8 billion parameters \citep{gat2024discrete,nie2024scaling,gong2024scaling,shi2024simplified,nie2025large,dream2025}, underscoring their robustness and adaptability to large-scale NLP models. In this paper, we focus on MDMs, and our theoretical contributions can be applied to all MDMs, including the masked variant of discrete flow matching. Notably, concurrent with our theoretical analysis of MDMs, \citep{li2024promisespitfallsgenerativemasked} also conduct an in-depth theoretical and empirical study on another class of MDMs. While their work primarily investigates the statistical complexity of learning these models, our analysis concentrates on the efficiency-accuracy trade-off of MDMs during inference.

\vspace{-2pt}

\vspace{-5pt}
\section{Masked Diffusion Language Model}
\vspace{-3pt}
\label{sec:mdm}

Without loss of generality, we study the sequence generation task where the sequence length is upper bounded by $L$. Let $\gV$ denote the vocabulary. The MDM \citep{lou2024discrete,shi2024simplified,gong2024scaling,sahoo2024simple} extends the vocabulary $\gV$ by introducing a special mask token $\mask$. The forward diffusion process progressively transforms an initial sequence $\vx_0 = (x_0^1, x_0^2, \dots, x_0^L) \in \gV^L$ into a fully masked sequence $\vx_1 = (\mask, \mask, \dots, \mask)$ by independently masking each token according to a predefined schedule. Conversely, the reverse process defines a generative model that reconstructs a sequence by iteratively modifying a fully/partially masked sequence. Below, we formally define both the forward and reverse processes.

\textbf{Forward process. }
Given a sequence $\vx_0$ and a masking schedule $\alpha_t$, the distribution of the sequence $\vx_t$ at time $t\in[0,1]$ is expressed as:
\vspace{-10pt}
\begin{equation}
\label{def:forward}
    \begin{gathered}
        q_{t|0}(\vx_t|\vx_0) = \prod_{i=0}^{L-1} q_{t|0}(x_t^i|x_0^i), \\
        \text{where} \quad q_{t|0}(x_t^i|x_0^i) =
        \begin{cases}
        \alpha_t, & x_t^i = x_0^i, \\
        1-\alpha_t, & x_t^i = \mask.
        \end{cases}
    \end{gathered}
\end{equation}
The masking schedule $\alpha_t$ is designed such that $\alpha_0 = 1$, ensuring that the sequence remains unmasked at the start of the process. Similar to the continuous diffusion methods \citep{Ho2020Denoising, song2021denoising, Karras2022Elucidating}, we set $\alpha_1 = 0$ (or a value approaching zero), ensuring the sequence is fully masked at the end of the forward process.

\textbf{Reverse process. }
The reverse process reconstructs a sequence from a masked version by reversing the forward dynamics. Given the sequence at time $t$ and the original sequence $\vx_0$, the conditional distribution of the sequence at time $s<t$, is defined as:
\begin{equation*}
    q_{s|t,0}(x_s^i|\vx_t, \vx_0) = \frac{1-\alpha_s}{1-\alpha_t} \delta_{x_t^i}(x_s^i) + \frac{\alpha_s - \alpha_t}{1-\alpha_t} \delta_{x_0^i}(x_s^i),
\end{equation*}
where $\delta_{x}(y)$ is the Kronecker delta function. Marginalizing over $\mathbf{x}_0$ yields the true reverse process $q(\mathbf{x}_{s}|\mathbf{x}_t)$: 
\vspace{-5pt}
\begin{equation}
\label{eq:rev_proc}
    \begin{gathered}
        q_{s|t}(\vx_s|\vx_t) = \prod_{i=0}^{L-1} q_{s|t}(x_s^i|\vx_t), \quad
        \text{where} \\ q_{s|t}(x_s^i|\vx_t) =
        \begin{cases}
        1, & x_t^i \neq \mask, x_s^i = x_t^i, \\
        \frac{1-\alpha_s}{1-\alpha_t}, & x_t^i = \mask , x_s^i = \mask, \\
        \frac{\alpha_s - \alpha_t}{1-\alpha_t} q_{0|t}(x_s^i|\vx_t), & x_t^i = \mask , x_s^i \neq \mask, \\
        0, & \text{otherwise.}
        \end{cases}
    \end{gathered}
\end{equation}
In MDM, a parameterized reverse model $p_\theta$ is often employed to approximate the distribution $q_{0|t}(x_s^i|\vx_t)$. This model is trained by minimizing the evidence lower bound (ELBO) \citep{lou2024discrete,shi2024simplified,gong2024scaling,sahoo2024simple} on the negative log-likelihood of the data distribution $q_0$.

\textbf{Inference.}  
Inference within the MDM framework entails discretizing the reverse process to iteratively reconstruct sequences from a fully masked sequence. Let $T$ denote the number of sampling steps. Starting with a fully masked sequence, the denoising process proceeds via $q_{s|t}(\vx_s \mid \vx_t)$, where $s = \frac{i}{T}$ and $t = \frac{i+1}{T}$. At each step, the model first samples $\vx_0$ from the conditional distribution $p_\theta(\vx_0 \mid \vx_t)$, followed by masking specific tokens according to $q(\vx_s \mid \vx_t, \vx_0)$. 

In practice, the reverse model is parameterized using a factorized denoising model, where the conditional distribution $p_\theta(\vx_0 \mid \vx_t)$ is expressed as:
\begin{equation}
\label{eq:parallel_sample}
    p_\theta(\vx_0 \mid \vx_t) = \prod_{i=1}^L p_\theta(x_0^i \mid \vx_t).
\end{equation}
Here, each token is predicted independently using $p_\theta(x_0^i \mid \vx_t)$, allowing for efficient parallel sampling. However, this factorized approach imposes a significant limitation: it disregards interdependencies between tokens within the sequence. As a result, the factorized model $p_\theta(\vx_0 \mid \vx_t)$ cannot exactly match the true reverse distribution $q(\vx_0 \mid \vx_t)$ \citep{xu2024energy}. In this work, we analyze the conditions under which this sampling method achieves a favorable balance between efficiency and the quality of the generated sequences. Motivated by these observations, this paper begins by re-examining the theoretical upper bounds on the performance and efficiency of MDMs.

\begin{figure*}[t]
    \centering
    \begin{minipage}{0.48\textwidth}
    \centering
        \includegraphics[width=1\linewidth,height=0.75\linewidth]{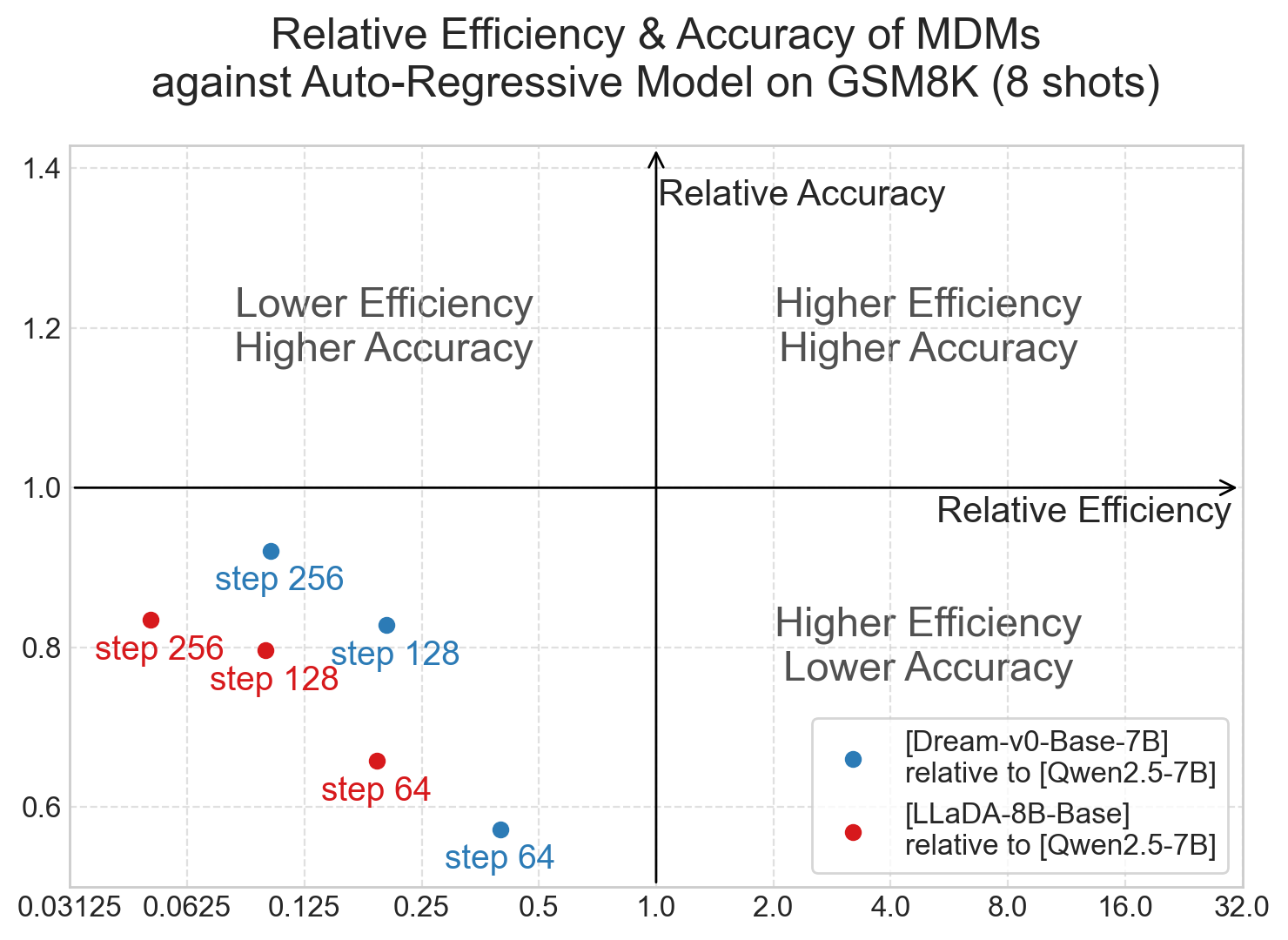}
    \end{minipage}
    \begin{minipage}{0.48\textwidth}
    \centering
        \includegraphics[width=1\linewidth,height=0.75\linewidth]{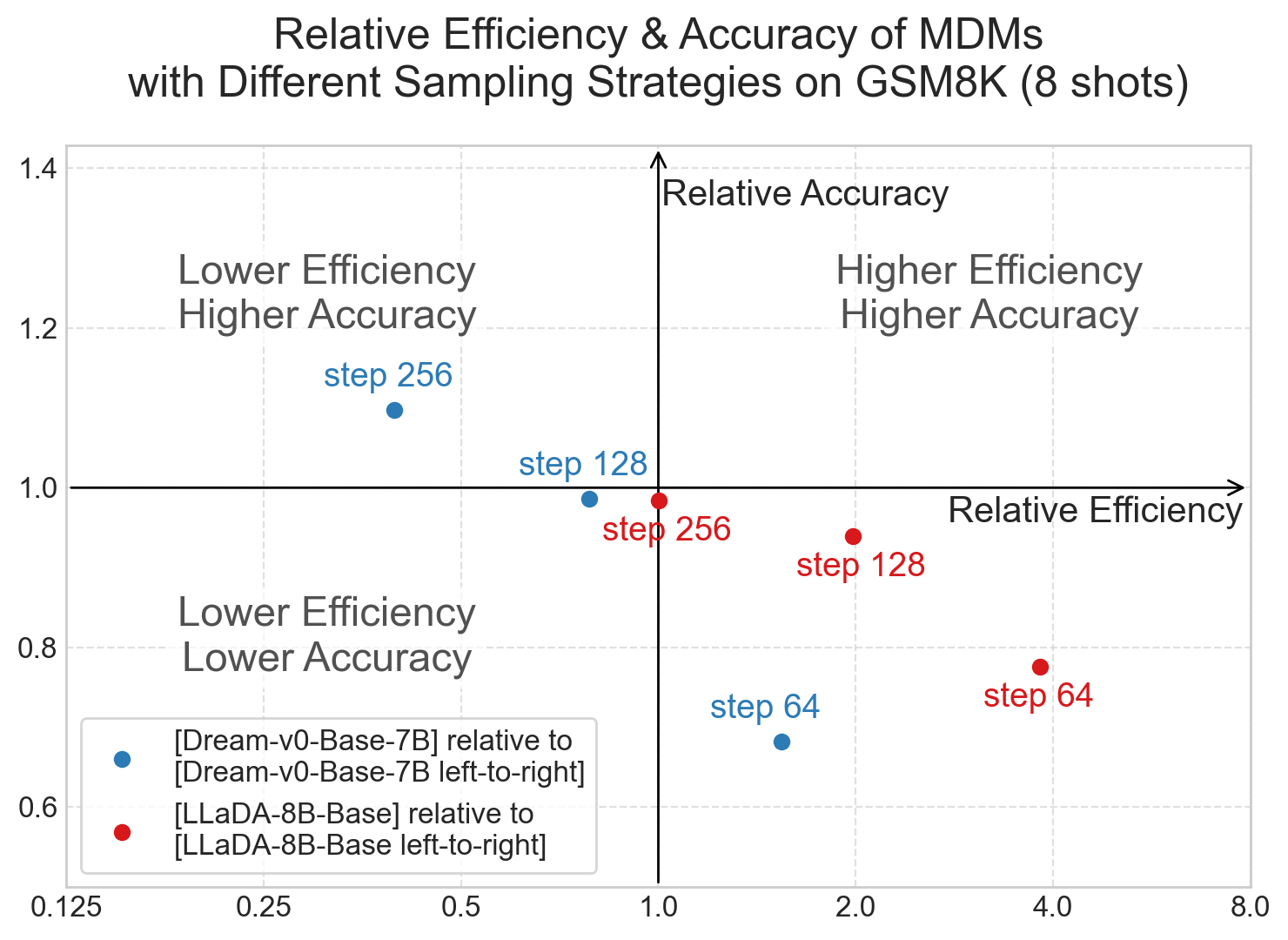}
    \end{minipage}
    \caption{Efficiency and accuracy of MDMs on GSM8K (8-shot): The left figure uses Qwen2.5-7B as the evaluation baseline 
 and plots the relative accuracy and efficiency of Dream-v0-Base-7B and LLaDA-8B-Base across different sampling steps. The origin can be considered as the baseline result. The first quadrant demonstrates models that can simultaneously achieve higher efficiency and accuracy. From the figure, we can see that the MDMs all fall into the third quadrant, indicating lower efficiency and lower performance. The right figure uses a baseline in which the MDMs are constrained to generate the left-most masked token at each step, effectively mimicking auto-regressive decoding using the same model. Still, even under this setting, none of the configurations fall into the first quadrant.}
    \label{fig:gsm8k}
    \vspace{-15pt}
\end{figure*}

\textbf{Efficiency-accuracy trade-off of open-sourced MDMs.}  
\label{pre:analysis}
The efficiency-accuracy trade-off is crucial for understanding the strengths of different generative models. In \cite{lou2024discrete,nie2024scaling}, researchers demonstrate that MDMs offer a more favorable trade-off than auto-regressive models in terms of perplexity—-that is, the learned MDMs can achieve competitive perplexity with higher token throughput. However, to the best of our knowledge, there has been limited empirical investigation of this trade-off in realistic tasks such as reasoning. 

To address this gap, we conducted experiments on the GSM8K and MBPP benchmarks \citep{cobbe2021gsm8k,austin2021programsynthesislargelanguage}, using two recently released large-scale MDMs in 2025, Dream-v0-Base-7B and LLaDA-8B-Base \citep{dream2025,nie2025large}. We directly evaluate these checkpoints using different numbers of decoding steps, and compare the accuracy and efficiency against similar-size auto-regressive baseline Qwen2.5-7B \citep{qwen2.5}. The results on GSM8K are presented in \cref{fig:gsm8k}, the results on MBPP are presented in \cref{app:MBPP_Experiments}. Notably, across all configurations (sampling steps), the performance and efficiency of MDMs are significantly worse than that of auto-regressive models. 

A key contributing factor to this gap is that the model checkpoints were trained on different private datasets, which inherently introduces bias into the comparison. To ensure fairness, we also construct a baseline by running the MDMs in an auto-regressive manner: at each diffusion step, the model is constrained to generate the left-most masked token, thereby producing the full sequence in $L$ steps. However, even compared to this naive baseline, none of the configurations fall into the first quadrant of the plot in \cref{fig:gsm8k}—-indicating that, compared to auto-regressive decoding, MDMs may not be able to simultaneously achieve superior efficiency and improved performance. Full details are given in \cref{app:pre_experiments}. 

\vspace{-10pt}
\section{Theoretical Analysis}
\vspace{-5pt}
\label{sec:theory}
Motivated by the observations above, this paper begins by re-examining the theoretical trade-off between the accuracy and efficiency of MDMs. In image generation, the primary goal is typically to produce visually appealing and seamless images \citep{heusel2017gans}. Language generation is more task-specific. Depending on the application, the users may prefer fluent outputs, as in article writing, or precise and accurate reasoning, as in problem-solving tasks. In this section, we explore the sampling efficiency of MDMs in addressing various language tasks with respect to different evaluation metrics. 
\vspace{-5pt}
\subsection{Notations and Problem Setting}
\vspace{-3pt}
\label{sec:problem_setting}

Our investigation employs the hidden Markov model (HMM) framework to analyze natural language generation. This section establishes the formal notation and problem setting that underlie our subsequent analysis.

HMMs \citep{eddy1996hidden} provide a probabilistic foundation for modeling sequential data with latent structures, where observed sequences are generated by an underlying sequence of unobservable hidden states. Formally, an HMM $\gH=(\gS,\gV,\mA,\mB)$ is characterized by the following components: a finite set of hidden states $\gS = \{s_1, s_2, \dots, s_N\}$, an observable vocabulary $\gV$, a state transition probability matrix $\mA \in \mathbb{R}^{N \times N}$, an emission probability matrix $\mB \in \mathbb{R}^{N \times |\gV|}$, and an initial state distribution $\boldsymbol{\pi} \in \mathbb{R}^N$. Given a sequence of observations $\vx = (x_1, x_2, \dots, x_L) \in \gV^L$ and a sequence of hidden states $\vs = (s_1, s_2, \dots, s_L) \in \gS^L$, the generative process of an HMM is governed by the following probabilistic relations:
\vspace{-2pt}
\begin{equation*}
    \begin{gathered}
           \Pr(s_1) = \boldsymbol{\pi}_{s_1}, \quad \Pr(x_i \mid s_i) = \mB_{s_i, x_i}, \\
           \Pr(s_i \mid s_{1:i-1}) = \Pr(s_i \mid s_{i-1}) = \mA_{s_{i-1}, s_i}.
    \end{gathered}
\end{equation*}
This formulation enables HMMs to capture both the sequential dependencies among hidden states and their probabilistic relationships with observed data. In the field of NLP, HMMs serve as the fundamental statistical tools to model natural language \citep{eddy1996hidden,marti2001using}. A notable special case of HMM is the $n$-gram language model \citep{brown1992class}, which estimates the probability of a token given its preceding \(n-1\) tokens. Despite their simplicity, $n$-gram models are foundational tools in NLP tasks \citep{brown1992class,de2010improved}. Moreover, \citet{liu2024infini} suggests that scaling up $n$-gram models can also achieve performance comparable to modern large language models.

Formally, we aim to address the following question: If MDMs have the capability to approximate a target HMM model, what are the computational costs, and do MDMs offer advantages over auto-regressive models? To evaluate the approximation quality of MDMs, we adopt two widely used metrics: \textit{TER} and \textit{SER}, which quantify different aspects of a model's performance.

\textbf{Token Error Rate.} In practice, perplexity is one of the most widely used metrics for evaluating token-level errors in language models. It quantifies the uncertainty of a model in predicting the next token in a sequence and serves as a standard measure for assessing the quality of text generation. In this paper, we define the TER by perplexity. Models with lower TER are generally considered more effective at generating fluent and coherent text. Formally, given a ground-truth language model $q$ and an evaluated model $p$, the TER is computed as:
\begin{equation}
    \operatorname{TER}(p) = 2^{\mathbb{E}_{\vx \sim q} \left[ -\frac{\log (p(\vx))}{|\vx|} \right]}.
\end{equation}

\textbf{Sequence Error Rate.} The SER evaluates the correctness of an entire sequence rather than individual tokens. Let $q$ represent a target language defined over a vocabulary $\gV$, and let $\gL_q = \{\vx \in \gV^* \mid q(\vx) > 0\}$ denote the support set of distribution $q$. For a generative model $p$, the SER is defined as:
\vspace{-5pt}
\begin{equation}
\label{eq:SER_def}
    \operatorname{SER}(p) = 1 - \sum_{\vx \in \gL_q} p(\vx).
\end{equation}
This metric quantifies the probability that the model generates sequences falling outside the support set of the ground-truth distribution.

Compared to TER, SER imposes a stricter evaluation criterion by requiring the correctness of entire sequences. This makes SER particularly well-suited for tasks that demand logical consistency or reasoning, where the correctness of the complete reasoning chain is crucial. 
\vspace{-5pt}
\subsection{MDMs Can Generate Low-TER Sentences Efficiently}
\vspace{-3pt}
\label{sec:positive}

In this subsection, we rigorously examine the efficiency of sampling in MDMs, demonstrating that MDMs are capable of efficiently generating sentences with near-optimal TER. To establish the main theoretical results, we assume that the MDMs have enough expressive power and begin with the following assumption:

\begin{assumption}[Learning with Small Error]
\label{ass:perfect_learning}
    Let $q$ denote the target language model with vocabulary $\gV$, and let $p_\mathbf{\theta}$ represent the reverse model trained to approximate the reverse process generating the target language under a masking schedule $\alpha_t$. Assume there exists $\epsilon_\text{learning} > 0$ such that the KL divergence between $p_\mathbf{\theta}$ and the reverse process distribution generating the language $q$ is bounded by $\epsilon_\text{learning}$, i.e.,
    \begin{equation*}
        \DKL{q_{0|t}(x_0^i \mid \vx_t)}{p_\mathbf{\theta}(x_0^i \mid \vx_t)} < \epsilon_\text{learning}, \quad \forall\ t \text{ and } \vx_t.
    \end{equation*}
\end{assumption}

It is worth noting that $p_\mathbf{\theta}(x_0^i \mid \vx_t) = q_{0|t}(x_0^i \mid \vx_t)$ represents the optimal solution to the ELBO loss during training. \cref{ass:perfect_learning} implies that the MDM model is well-trained and approximates the ground-truth distribution with only a small error.

During MDM inference, the time interval $[0, 1]$ is discretized into $N$ steps, where $t_i = \frac{i}{N},\ i \in [N]$, and iteratively reconstruct sequences from a fully masked sequence. The following theorem shows that the sequence distribution generated by the reverse process, even with a small number of sampling steps, can achieve near-optimal TER. Consequently, MDMs exhibit high efficiency in generating $n$-gram language.
\begin{theorem}[TER Bounds for $n$-Gram Language Generation]
\label{thm:acceleration_ngram}
    For any $n$-gram language $q$ and any $\epsilon > 0$, let $p_\mathsf{\theta}$ denote the reverse model and $L$ denote the sequence length. The distribution over sequences generated by $p_\mathsf{\theta}$ is denoted as $p$. For any $L>O\big( \frac{n-1}{\epsilon^{n+0.5}}\big)$, under \cref{ass:perfect_learning}, there exists a masking schedule $\alpha_t$ such that, with $N = O\big( \frac{n-1}{\epsilon^n}\big)$ sampling steps, the TER of the MDM is upper-bounded by:
    \begin{equation}
        \begin{gathered}
            \log\operatorname{TER}(p) \leq \log\operatorname{TER}(q) + \epsilon_\text{learning} + 4\epsilon\log |\gV|. \\
        \end{gathered}
    \end{equation}
\end{theorem}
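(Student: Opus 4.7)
The overall plan is to translate the TER gap into a KL divergence between the target $n$-gram distribution $q$ and the MDM sampling distribution $p$, then control this KL by a per-step analysis of the reverse process that exploits the local dependency structure of $n$-grams. First I would observe that, by the definition of perplexity, $\log\operatorname{TER}(p) - \log\operatorname{TER}(q) = \tfrac{1}{L}\DKL{q}{p}$, so it suffices to prove $\DKL{q}{p} \leq L(\epsilon_\text{learning} + 4\epsilon\log|\gV|)$. Letting $Q$ and $P$ denote the joint distributions of the trajectory $(\vx_{t_0},\ldots,\vx_{t_N})$ under the true reverse process and under the MDM's discretized factorized sampler respectively, the data-processing inequality gives $\DKL{q(\vx_0)}{p(\vx_0)} \leq \DKL{Q}{P}$, and the chain rule yields $\DKL{Q}{P} = \sum_{i=0}^{N-1} \mathbb{E}_Q\bigl[\DKL{q_{t_{i}|t_{i+1}}(\vx_{t_i}\mid\vx_{t_{i+1}})}{p_\theta(\vx_{t_i}\mid\vx_{t_{i+1}})}\bigr]$.

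Each per-step divergence decomposes into two sources. The \emph{learning error} is controlled directly by \cref{ass:perfect_learning}: summing the bound $\epsilon_\text{learning}$ over the positions touched in each step and normalizing by $L$ contributes at most $\epsilon_\text{learning}$ to the per-token bound. The harder term is the \emph{factorization error}, which persists even when every marginal $p_\theta(x_0^i\mid\vx_t)$ equals the true $q_{0|t}(x_0^i\mid\vx_t)$, because the MDM samples $\vx_0$ according to the product $\prod_i p_\theta(x_0^i\mid\vx_t)$ while $q_{0|t}(\vx_0\mid\vx_t)$ in general has residual correlations.

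For an $n$-gram source, $q_{0|t}(\vx_0\mid\vx_t)$ has a crucial \emph{block-factorized} structure: any two masked positions separated by a run of at least $n-1$ unmasked tokens are conditionally independent. Hence, at a step with unmasking fraction $\beta_i$, the factorization error is concentrated on ``collision'' events in which several positions inside a common $n$-gram window get revealed in the same step. I would prove a per-step lemma showing that, for an $n$-gram, the factorization KL at step $i$ is bounded by $C(n)\,L\,\beta_i^{n}\log|\gV|$ with $C(n) = O(n-1)$, essentially because the dominant bad configuration requires the entire fresh $n$-gram pattern to be filled in during one step. Then I would take the uniform schedule $\alpha_{t_i}=1-i/N$, so $\beta_i = \Theta(1/N)$, summing to a total factorization KL of $O\bigl((n-1)L\log|\gV|/N^{n-1}\bigr)$; choosing $N = O((n-1)/\epsilon^n)$ makes this at most $3\epsilon L\log|\gV|$, with the mild lower bound $L > O((n-1)/\epsilon^{n+0.5})$ used to absorb boundary discretization terms from the first and last few steps (where the schedule-induced $\beta_i$ can deviate) into the last $\epsilon\log|\gV|$ slack.

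The main obstacle will be the sharp per-step lemma of Step 3. A naive union bound over ``any two positions in a window colliding'' yields only a $\beta^2$ dependence and forces $N$ to be much smaller than $1/\epsilon^n$ is permissive; worse still, overlapping $n$-gram windows create correlated errors that are not immediately additive. Recovering the $\beta^n$ scaling requires showing that whenever fewer than $n-1$ positions inside a window are freshly unmasked, the already-unmasked neighbors pin down the $n$-gram factor exactly, so such partial collisions contribute zero factorization error; only truly saturated windows matter. Making this accounting rigorous, and carefully handling the interaction between overlapping windows (so that $L$ windows contribute $L$, not $L^2$, to the final bound), is where the bulk of the work lies and is what ultimately gives the $(n-1)/\epsilon^n$ sampling-step complexity.
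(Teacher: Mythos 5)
Your overall skeleton (reduce the TER gap to $\tfrac1L\DKL{q}{p}$, pass to the trajectory via data processing/Jensen, chain-rule into per-step conditional KLs, split each step into a learning-error term controlled by Assumption 4.1 and a factorization-error term) matches the paper's proof. The gap is in your key per-step lemma, and it is a real one: the factorization error at a step with unmasking fraction $\beta_i$ is \emph{not} $O\bigl((n-1)L\beta_i^{n}\log|\gV|\bigr)$. The relevant bad event is not ``an entire $n$-gram window is filled in during one step''; it is ``two positions revealed in the same step are not separated by a contiguous run of at least $n-1$ \emph{already revealed} tokens.'' Your claim that partial collisions contribute zero because ``already-unmasked neighbors pin down the $n$-gram factor'' only holds when those neighbors form such a full run of length $n-1$; scattered revealed tokens do not screen off the Markov dependence, and two simultaneously revealed positions can be correlated under $q_{0|t}(\cdot\mid\vx_t)$ even when they lie in different windows far apart. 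Concretely, at the first step of your uniform schedule nothing is yet revealed, there are no separators at all, and all $L/N$ freshly revealed positions are mutually dependent, contributing on the order of $(L/N)\log|\gV|$ to the KL in that single step---already vastly exceeding your claimed $L N^{-n}\log|\gV|$. A further warning sign is that your bound, if true, would give $N=O\bigl((C(n)/\epsilon)^{1/(n-1)}\bigr)$, far stronger than the theorem you are asked to prove.

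The correct per-step accounting (the paper's Lemmas C.5--C.10) counts ``dependencies'': the number of freshly revealed positions minus the number of separator-delimited intervals they occupy. When a fraction $p_i$ of positions is already revealed, the expected number of separators is about $Lp_i^{n-1}/(n-1)$, and a balls-in-bins argument gives an expected dependency count of order $(n-1)L\delta_i^{2}/p_i^{n-1}$ per step, which degenerates to $\approx L\delta_i$ when $p_i$ is small. This is why the paper does \emph{not} use a uniform schedule: it takes a large first step $\delta_1=\epsilon$ (paying the worst-case cost $\epsilon L\log|\gV|$ once) to seed enough separators that every subsequent step has $p_i\geq\epsilon$, and then uniform increments $\delta=\epsilon^{n}/(C(n-1))$ keep each later step's cost at $O\bigl((n-1)L\delta^2/\epsilon^{n-1}\bigr)$, summing over $N\approx 1/\delta=O((n-1)/\epsilon^{n})$ steps to $O(\epsilon L)$. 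To repair your argument you would need to replace the $\beta_i^{n}$ lemma with this separator-based dependency bound and redesign the schedule (or carry out a genuinely new analysis of the early, separator-poor steps of the uniform schedule), rather than absorbing them as ``boundary discretization terms.''
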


The proof of this theorem is presented in \cref{app:positive}. \cref{thm:acceleration_ngram} demonstrates that MDMs can efficiently generate sentences with high fidelity. It is notable that for a given data distribution $q$, the TER of a language model $p$ achieves its global minimum when $p = q$. To ensure a gap of at most $\epsilon$ with the optimal TER during sampling, the number of required sampling steps is bounded by $O\big( \frac{n-1}{\epsilon^n}\big)$. 

The above results suggest that to achieve near-optimal TER, MDMs require only a number of sampling steps that is independent of the sequence length $L$.
In each sampling step, the neural network model, i.e., a Transformer, is executed once. Therefore, informally, the neural network execution count is constant for MDM. This offers substantial efficiency gains over auto-regressive models, where the model must be executed $L$ times, once for each token in the sequence. Such efficiency enables MDMs to handle long-sequence generation tasks effectively while maintaining high-quality outputs.

\vspace{-5pt}
\subsection{MDMs Cannot Generate Low-SER Sentences with A Low Cost}
\vspace{-5pt}
\label{sec:negative}

In this subsection, we examine the SER of sampling in MDMs and highlight a fundamental limitation of MDMs in generating logically rigorous language. We begin by establishing that, with sufficient sampling steps, the MDMs are able to approximate a target HMM model with perfect SER.

\begin{theorem}[Accurate Generation of HMM with Sufficient Steps]
\label{thm:pos_hmm}
    Let $q$ denote any HMM, and let $p_\mathsf{\theta}$ represent the reverse model under an arbitrary masking schedule, where $L$ is the sequence length. Let $p$ denote the distribution over sequences generated by $p_\mathsf{\theta}$. Under \cref{ass:perfect_learning} with a learning error $\epsilon_\text{learning} < O(\frac{\delta}{L})$, and given a sufficient number of reverse steps, the sequence error rate 
    \vspace{-5pt}
    $\operatorname{SER}(p)$ of the generated text satisfies 
    $\operatorname{SER}(p) \leq  \delta.$
\end{theorem}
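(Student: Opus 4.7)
The plan is to exploit the freedom to let $T$ (the number of sampling steps) be arbitrarily large, so that the reverse process effectively reduces to revealing one token at a time. In this regime the failure mode of the factorized sampler \eqref{eq:parallel_sample}---which can produce jointly inconsistent tokens when several masked positions are resolved in one step---disappears, and the only remaining source of SER is per-token prediction error. The argument then splits into (i) a ``no multi-reveal'' event and (ii) a per-step consistency bound.

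First I would bound the probability of a multi-reveal step. From the reverse transition \eqref{eq:rev_proc}, each currently masked coordinate is independently resolved in a given step with probability $(\alpha_s-\alpha_t)/(1-\alpha_t)=O(1/T)$ for a smooth schedule, so a second-moment bound gives the probability of two-or-more simultaneous reveals per step as $O(L^2/T^2)$; summing over the $T$ steps this is $O(L^2/T)$, which is at most $\delta/2$ for sufficiently large $T$. On the complementary event, the process is equivalent to sequentially unmasking the $L$ positions in some random order, each newly revealed token at position $i$ being drawn from the marginal $p_\theta(x_0^i\mid \vx_t)$ with $\vx_t$ the current partial sequence.

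The core one-step estimate is: whenever $\vx_t$ is a valid partial observation (i.e.\ $q_t(\vx_t)>0$), the probability that the freshly revealed token lies outside $\mathrm{supp}\, q_{0|t}(\cdot\mid\vx_t)$ is at most $\epsilon_\text{learning}$. This follows directly from \cref{ass:perfect_learning} together with the data-processing inequality: for any event $A$ with $q(A)=0$, $\DKL{q}{p}\geq -\log(1-p(A))$, which gives $p(A)\leq 1-e^{-\epsilon_\text{learning}}\leq \epsilon_\text{learning}$. A generated sequence lies in $\gL_q$ as long as every revealed token lies in the support of the corresponding true conditional, so a union bound over the $L$ reveals contributes at most $L\epsilon_\text{learning}$ to the SER. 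Combining this with the $\delta/2$ multi-reveal term and choosing $\epsilon_\text{learning}<\delta/(2L)$ yields $\operatorname{SER}(p)\leq \delta$, matching the stated condition $\epsilon_\text{learning}<O(\delta/L)$.

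The step I expect to be the main obstacle is maintaining the inductive invariant ``the current partial sequence is a valid partial observation of some $\vx_0\in\gL_q$''. For a general HMM this needs a short auxiliary lemma---essentially that marginalising the HMM joint distribution over the hidden states and as-yet-unrevealed positions remains strictly positive on every prefix realisable by a valid $\vx_0$---so that $q_{0|t}(\cdot\mid\vx_t)$ is well-defined at every step along the random reveal order and the per-step KL bounds compose cleanly. Handling the tail of ``bad'' histories (where the invariant has already been broken) is where the union-bound structure is essential: once a step produces a token outside the support we simply concede the sequence to $\gL_q^c$, and the above counting only charges $\epsilon_\text{learning}$ per reveal along the ``good'' history.
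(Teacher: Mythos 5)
Your proposal is correct and follows essentially the same route as the paper's proof: a vanishing bound on the probability of any simultaneous multi-token reveal (the paper's \cref{lemma:prob_mul_suff} uses only uniform continuity of $\alpha_t$ to drive $\sum_i\delta_i^2\to 0$, rather than your smoothness-dependent $O(1/T)$ per-step rate, but reaches the same conclusion), combined with the per-step support-preservation estimate $p_\theta\bigl(\operatorname{supp}q_{0|t}(\cdot\mid\vx_t)^c\mid\vx_t\bigr)\le 1-e^{-\epsilon_\text{learning}}$ derived from \cref{ass:perfect_learning} — your data-processing argument on the binary partition and the paper's Jensen computation in \cref{lemma:acc_gen} are the same inequality. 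The only cosmetic difference is that the paper composes the $L$ per-step bounds multiplicatively ($p_\mathrm{acc}\ge e^{-\delta}$) and then intersects with the no-multi-reveal event, whereas you use an additive union bound; both yield $\operatorname{SER}(p)\le\delta$.
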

The complete proof of \cref{thm:pos_hmm} is detailed in \cref{app:proof_hmm_pos}. While this result establishes the theoretical capability of MDMs to achieve low SER, we still need to estimate the computational cost to achieve it. The following theorem provides a negative result for this problem.

\begin{theorem}[SER Bound for HMM Generation]
\label{thm:negative}
    There exists an HMM $q$ over a vocabulary of size $16$ that satisfies the following conditions: for any reverse model $p_\mathsf{\theta}$ under \cref{ass:perfect_learning} with $\eps_\mathrm{learning}<\frac{1}{128}$, and any masking schedule $\alpha_t$, let $p$ denote the distribution over sequences generated by $p_\mathsf{\theta}$. There exists a constant $C$ such that if the number of sampling steps satisfies $N = CL$, where $L$ is the sequence length, the SER of the generated text is lower-bounded by: $\operatorname{SER}(p) > \frac{1}{2}.$
\end{theorem}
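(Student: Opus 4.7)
My plan is to construct an adversarial HMM whose structure forces factorized MDM sampling to fail on adjacent pairs with constant probability, and then argue via a transfer-matrix computation that for $N = CL$ with $C$ a small enough absolute constant, the probability of generating a valid sequence drops below $1/2$ uniformly over the schedule and over the (sufficiently well-learned) model.

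\textbf{Construction and pair-failure computation.} Identify $\gV$ with $\{0,1,2,3\}^2$, so each token decomposes as $x_i = (a_i,b_i)$, giving $|\gV|=16$. Define $q$ to be the HMM that enforces the deterministic ``interface'' constraint $a_{i+1} = b_i$ and draws each $b_i$ uniformly from $\{0,1,2,3\}$ (with $a_0$ uniform); the support $\gL_q$ is exactly the set of sequences satisfying every interface. The local lemma I will prove: under any partial context, the true marginal of a masked $x_i$ has $b_i$ uniform on $\{0,1,2,3\}$, and symmetrically the marginal of an adjacent masked $x_{i+1}$ has $a_{i+1}$ uniform; since these two coordinates are drawn independently in the factorized reverse but the HMM joint forces $b_i = a_{i+1}$, a factorized pair lands in $\gL_q$ with probability only $1/4$. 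Applying Pinsker's inequality to $\epsilon_\mathrm{learning} < 1/128$ converts this into a per-interface success probability at most $1/4 + 1/8 = 3/8$ under the learned factorized marginals.

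\textbf{Reduction to a schedule statistic.} For any schedule $\alpha_t$ with $N$ steps, the MDM unmasking process is equivalent to drawing i.i.d.\ unmasking times $\tau_1,\ldots,\tau_L$ from some distribution $\mu$ on $\{1,\ldots,N\}$ determined by $\alpha_t$, with all positions sharing a common $\tau$-value sampled in parallel from the learned factorized marginals. Grouping positions with a common $\tau$-value into maximal consecutive blocks, a block of size $k$ contains $k-1$ internal interfaces whose satisfaction events involve essentially disjoint coordinate draws. Decomposing the full validity event into same-step interface constraints and cross-step constraints, and \emph{discarding} the cross-step factors (which are each at most $1$ and hence only tighten an upper bound), I obtain $\Pr[\vx \in \gL_q \mid \tau] \leq (3/8)^{X}$ where $X := |\{i : \tau_i = \tau_{i+1}\}|$ counts same-step adjacent pairs and equals $L - B$ for $B$ the total number of blocks. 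This decomposition is precisely what prevents a spurious linear-in-$L$ blowup of the learning error: the $1/16$ per-token TV enters only once per same-step interface, never accumulating along cross-step chains.

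\textbf{Averaging over schedules and the main obstacle.} It remains to bound $E_\tau[(3/8)^X]$ uniformly over $\mu$. A transfer-matrix argument with $K_{ab} = \mu(b)\,(3/8)^{\mathbb{1}\{a=b\}}$ expresses the quantity as $\mu^\top K^{L-1}\mathbf{1}$; row sums of $K$ are $1 - (5/8)\mu(a)$, so for uniform $\mu$ the vector $\mathbf{1}$ is an eigenvector and $E[(3/8)^X] \leq (1 - 5/(8N))^{L-1} \leq \exp(-5(L-1)/(8N))$. With $N = CL$ and any $C$ strictly below $5/(8\log 2) \approx 0.90$, this bound is strictly less than $1/2$, so $\operatorname{SER}(p) > 1/2$. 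The main obstacle I anticipate is twofold: first, rigorously establishing that the uniform $\mu$ is the adversary's worst case (i.e.\ maximizes $E[(3/8)^X]$), which I expect to handle via an exchangeability / Schur-concavity argument on bin labels or a direct spectral bound on $K$ for non-uniform $\mu$; and second, making the ``essentially disjoint draws'' claim for interface events precise, which requires conditioning on the block-wise sampling order and, if the shared-token dependence bites, restricting attention to every other same-step interface so that the remaining $\lceil X/2\rceil$ events are genuinely independent at the cost of only replacing $X$ by $X/2$ in the exponent.
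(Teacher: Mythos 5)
Your strategy is sound and reaches the right conclusion, but by a genuinely different construction than the paper's. The paper partitions the sequence into \emph{independent} intervals of length $l=5$ and builds a language in which every pair of positions inside an interval shares a uniform bit (each token carries $l-1=4$ bits, giving $|\gV|=2^{l-1}=16$); it then shows that with probability at least $1/N$ some two positions of an interval are unmasked at the same step, that such a collision is fatal with probability $p_e\ge\frac12-2\sqrt{2\eps_\mathrm{learning}}$, and simply multiplies the $L/l$ independent interval-failure probabilities. Your construction is a plain $2$-gram chain $a_{i+1}=b_i$ on $\gV=\{0,1,2,3\}^2$ (also of size $16$), and you pay for the loss of between-interval independence with a transfer-matrix bound on the number of same-step adjacent pairs. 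What the paper's interval independence buys is a trivial final product bound; what your chain buys is a far smaller HMM (a $16$-state Markov chain versus the paper's hidden-state space of size $l\cdot2^{l(l-1)/2}$), a larger and cleaner per-collision failure probability ($\ge 5/8$ after Pinsker rather than $\approx 1/2$), and the arguably stronger message that the obstruction already appears for the simplest nontrivial language class.

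Two of your flagged loose ends are real but are both closed by the same thinning device, so neither requires the machinery you anticipate. First, your local lemma is false as stated---if $x_{i+1}$ is already revealed, the true marginal of $b_i$ is a point mass---but you only invoke it when $x_i$ and $x_{i+1}$ are \emph{both} masked at the relevant step (and the revealed context is still consistent with $\gL_q$), where it is correct; state it that way. Second, you need neither Schur-concavity in $\mu$ nor full within-block independence: restrict to the odd interfaces $S=\{1,3,5,\dots\}$. The token pairs $(x_i,x_{i+1})$, $i\in S$, are disjoint, so the factorization $p_\theta(\vx_0\mid\vx_t)=\prod_j p_\theta(x_0^j\mid\vx_t)$ makes the surviving interface events conditionally independent given the history at each step, yielding $\Pr[\vx\in\gL_q\mid\tau]\le(3/8)^{X_S}$ with $X_S=\sum_{i\in S}\mathbb{I}\{\tau_i=\tau_{i+1}\}$; and these indicators depend on disjoint pairs $(\tau_i,\tau_{i+1})$ of i.i.d.\ unmasking times, hence are genuinely independent with success probability $\|\mu\|_2^2\ge 1/N$ for \emph{every} $\mu$ by Cauchy--Schwarz, so $\E[(3/8)^{X_S}]\le\bigl(1-\tfrac{5}{8N}\bigr)^{(L-1)/2}$. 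This halves your exponent (forcing roughly $C<5/(32\log 2)$ uniformly over $L$), but the theorem only asks for some constant $C$, so the argument goes through.
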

\vspace{-3pt}
The proof of \cref{thm:negative} is presented in \cref{app:proof_neg}. \cref{thm:negative} shows that to generate sequences with low SER, the number of sampling steps in MDMs must scale at least linearly with the sequence length $L$, indicating that the number of neural network executions is comparable between MDMs and autoregressive models. However, this scaling law of MDMs typically leads to much higher computational costs compared to autoregressive models. For instance, in the case of Transformer-based architectures, each execution step in MDMs involves a quadratic computational complexity in terms of $L$, as opposed to the linear complexity of auto-regressive Transformer models in each generation step (through reusing the stored KV caches). Consequently, in accuracy-critical applications, MDMs offer no computational efficiency advantage over auto-regressive models. 

\textbf{Extended analysis of \cref{thm:negative}} Since the high SER partly arises from irreducible errors introduced by parallel sampling, it is natural to ask whether this issue can be mitigated by allowing tokens to be remasked. In \cref{app:remask}, we examine the remasking strategy recently proposed by \citep{wang2025remaskingdiscretediffusionmodels}, and demonstrate that \cref{thm:negative} continues to hold. Furthermore, some prior works \citep{sahoo2024simple,ou2024your} have proposed efficient sampling strategies that reuse cached outputs without requiring additional forward passes through the network when no token is modified from $\mask$ at a given step. Nevertheless, our theoretical results remain applicable to these sampling strategies, which is discussed in \cref{app:ddpm_cache}.

\textbf{Discussion regarding theory and GSM8k result.}  SER is closely related to the accuracy of MDMs in solving mathematical problems, as an incorrect chain of thoughts typically leads to an erroneous answer \citep{NEURIPS2022_9d560961,yu2024thoughtpropagationanalogicalapproach,zhu2024deductivebeamsearchdecoding}. Our theory reveals that MDMs incur higher computational costs when evaluating generations by SER. This finding is consistent with results presented in \cref{fig:gsm8k}. Both our theoretical insights and these prior experiments demonstrate that when MDMs are applied to math problems, their efficiency-performance trade-off is less favorable than that of AR models. For further discussion, please refer to \cref{app:empirial_obs}.

\textbf{Practical guideline of ARs v.s. MDMs.} Based on the theoretical results, conclusions regarding when to favor MDMs depend heavily on the evaluation metric employed. Specifically, MDMs excel in applications where fluency is prioritized. In contrast, for reasoning-intensive tasks that demand highly accurate trajectories, MDMs may fail to offer a significant efficiency advantage over auto-regressive models. As a result, MDMs are better suited for fluent generation tasks, while ARs remain the preferred choice for tasks that require precise, step-by-step reasoning.

\vspace{-8pt}
\section{Experiments}
\vspace{-5pt}
\begin{table}[ht]
\label{tab:exp_result_combined} 
\begin{center}
\vspace{-5pt}
\begin{tabular}{@{}lcccccccc@{}}
\toprule
\multicolumn{8}{c}{\textbf{SER with Different Sampling Steps}}\\
\midrule
\textbf{N-gram} & \textbf{Length} & \textbf{128} & \textbf{256} & \textbf{512} & \textbf{1024} & \textbf{2048} & \textbf{AR}\\
\midrule
\multirow{3}{*}{\textbf{2-gram}} & \textbf{512} & $36.2\pm 4.2$ & $24.5\pm 3.2$ & $14.4 \pm 2.9$ & $7.4 \pm 2.2$ & $5.4\pm 1.2$ & \ul{$0.2\pm 0.1$} \\
\cmidrule(lr){2-8}
& \textbf{1024} & $59.5 \pm 4.9$ & $40.3 \pm 4.6$ & $23.2 \pm 3.2$ & $14.6 \pm 3.0$ & $9.4 \pm 1.5$ & \ul{$0.3\pm 0.2$} \\
\cmidrule(lr){2-8}
& \textbf{2048} & $88.70 \pm 2.8$ & $71.5 \pm 3.7$ & $50.2 \pm 4.4$ & $40.2\pm 3.9$ & $26.9 \pm 3.3$ & \ul{$0.3\pm 0.1$} \\
\midrule
\multirow{3}{*}{\textbf{3-gram}} & \textbf{512} & $37.3\pm 3.6$ & $21.6\pm 3.3$ & $16.4\pm 2.7$ & $11.6 \pm 3.0$ & $7.9\pm 2.1$ & \ul{$0.2\pm 0.1$} \\
\cmidrule(lr){2-8}
& \textbf{1024} & $64.8 \pm 4.7$ & $39.4 \pm 5.3$ & $26.6 \pm 3.0$ & $19.3 \pm 3.2$ & $13.9 \pm 2.7$ & \ul{$0.3\pm 0.3$} \\
\cmidrule(lr){2-8}
& \textbf{2048} & $86.0 \pm 3.9$ & $66.1 \pm 5.3$ & $48.2 \pm 5.2$ & $48.2 \pm 5.6$ & $37.8\pm 5.2$ & \ul{$0.2\pm 0.2$} \\
\midrule
\multirow{3}{*}{\textbf{4-gram}} & \textbf{512} & $51.8 \pm 4.4$ & $39.2 \pm 3.2$ & $34.0\pm 3.5$ & $30.0 \pm 2.7$ & $25.7\pm 2.6$ & \ul{$0.5\pm 0.1$} \\
\cmidrule(lr){2-8}
& \textbf{1024} & $67.9 \pm 5.6$ & $52.9 \pm 5.1$ & $41.8 \pm 2.7$ & $36.1 \pm 5.8$ & $35.4 \pm 5.1$ & \ul{$0.4\pm 0.2$} \\
\cmidrule(lr){2-8}
& \textbf{2048} & $80.0 \pm 4.1$ & $68.1 \pm 5.5$ & $62.0 \pm 5.4$ & $62.0\pm 4.3$ & $60.1 \pm 3.8$ & \ul{$0.5\pm 0.4$} \\
\midrule
\multirow{3}{*}{\textbf{HMM}} & \textbf{512} & $19.1\pm 2.1$ & $16.7 \pm 2.3$ & $12.8\pm 2.6$ & $9.1 \pm 2.6$ & $8.2\pm2.2$ & \ul{$0.4\pm 0.1$} \\
\cmidrule(lr){2-8}
& \textbf{1024} & $40.1 \pm 2.5$ & $32.8 \pm 3.1$ & $29.9 \pm 2.6$ & $30.5 \pm 3.0$ & $28.0 \pm 3.3$ & \ul{$0.4\pm0.2$} \\
\cmidrule(lr){2-8}
& \textbf{2048} & $68.5 \pm 5.6$ & $63.4 \pm 4.2$ & $62.9 \pm 5.5$ & $62.9 \pm 4.7$ & $61.8 \pm 5.8$ & \ul{$0.3\pm 0.1$} \\
\bottomrule
\end{tabular}

\vspace{5pt} 

\begin{tabular}{@{}lcccccccc@{}}
\toprule
\multicolumn{8}{c}{\textbf{TER with Different Sampling Steps}}\\
\midrule
\textbf{N-gram} & \textbf{Length} & \textbf{128} & \textbf{256} & \textbf{512} & \textbf{1024} & \textbf{2048} & \textbf{AR}\\
\midrule
\multirow{3}{*}{\textbf{2-gram}} & \textbf{512} & $3.71 \pm .03$ & \ulblue{$3.68 \pm .02$} & \ulblue{$3.67 \pm .01$} & \ulblue{$3.67 \pm .01$} & \ulblue{$3.67 \pm .01$} & \ul{$3.67 \pm .01$} \\
\cmidrule(lr){2-8}
& \textbf{1024} & $3.72 \pm .05$ & {$3.70 \pm .03$} & \ulblue{$3.68 \pm .01$} & \ulblue{$3.68 \pm .02$} & \ulblue{$3.67 \pm .01$} & \ul{$3.67 \pm .01$} \\
\cmidrule(lr){2-8}
& \textbf{2048} & $3.71 \pm .05$ & \ulblue{$3.69 \pm .03$} & \ulblue{$3.67 \pm .02$} & \ulblue{$3.66 \pm .01$} & \ulblue{$3.66 \pm .01$} & \ul{$3.66 \pm .02$} \\
\midrule
\multirow{3}{*}{\textbf{3-gram}} & \textbf{512} & $3.13 \pm .05$ & {$3.10 \pm .03$} & \ulblue{$3.07 \pm .01$} & \ulblue{$3.07 \pm .01$} & \ulblue{$3.07 \pm .01$} & \ul{$3.07 \pm .01$} \\
\cmidrule(lr){2-8}
& \textbf{1024} & $3.14 \pm .06$ & \ulblue{$3.10 \pm .03$} & \ulblue{$3.09 \pm .02$} & \ulblue{$3.08 \pm .01$} & \ulblue{$3.08 \pm .01$} & \ul{$3.08 \pm .02$} \\
\cmidrule(lr){2-8}
& \textbf{2048} & $3.15 \pm .06$ & {$3.11 \pm .03$} & \ulblue{$3.10 \pm .02$} & \ulblue{$3.09 \pm .01$} & \ulblue{$3.08 \pm .02$} & \ul{$3.08 \pm .02$} \\
\midrule
\multirow{3}{*}{\textbf{4-gram}} & \textbf{512} & $3.27 \pm .08$ & {$3.23 \pm .04$} & \ulblue{$3.20 \pm .02$} & \ulblue{$3.20 \pm .01$} & \ulblue{$3.19 \pm .01$} & \ul{$3.19 \pm .01$} \\
\cmidrule(lr){2-8}
& \textbf{1024} & $3.21 \pm .08$ & {$3.16 \pm .03$} & {$3.16 \pm .02$} & \ulblue{$3.14 \pm .01$} & \ulblue{$3.14 \pm .01$} & \ul{$3.14 \pm .01$} \\
\cmidrule(lr){2-8}
& \textbf{2048} & $3.20 \pm .06$ & {$3.16 \pm .03$} & \ulblue{$3.14 \pm .02$} & \ulblue{$3.13 \pm .02$} & \ulblue{$3.12 \pm .01$} & \ul{$3.12 \pm .02$} \\
\midrule
\multirow{3}{*}{\textbf{HMM}} & \textbf{512} & $4.04 \pm .03$ & \ 
 $4.01 \pm .03$ & \ulblue{$4.00 \pm .01$} & \ulblue{$3.99 \pm .01$} & \ulblue{$3.99 \pm .01$} & \ul{$3.99 \pm .01$} \\
\cmidrule(lr){2-8}
& \textbf{1024} & $4.03 \pm .04$ & {$4.01 \pm .02$} & \ulblue{$4.00 \pm .01$} & \ulblue{$3.99 \pm .02$} & \ulblue{$3.98 \pm .02$} & \ul{$3.98 \pm .02$} \\
\cmidrule(lr){2-8}
& \textbf{2048} & $4.03 \pm .02$ & {$4.02 \pm .01$} & \ulblue{$4.01 \pm .01$} & \ulblue{$4.00 \pm .01$} & \ulblue{$4.00 \pm .01$} & \ul{$4.00 \pm .01$} \\
\bottomrule
\end{tabular}

\end{center}
\caption{SER and TER results for MDMs (n-grams, $n\in\{2,3,4\}$; HMMs) vs. AR baseline across varying sequence lengths (512, 1024, 2048) and sampling steps. The 'AR' column (underlined) provides the baseline for comparison. In the TER table, blue highlighting indicates MDM performance statistically similar to the AR baseline according to T-Test. MDMs approach AR TER with $\approx 512$ sampling steps, irrespective of sequence length. Accuracy, however, requires more steps for longer sequences and generally lags behind AR models with a large margin.}
\vspace{-28pt}
\end{table}

As discussed in the previous section, our theory aligns with observations in natural language tasks. However, for comprehensive validation and completeness, we also conducted experiments to empirically support our theoretical results. More precisely, these experiments were designed to investigate the relationship between sampling steps, sequence length, the SER, and the TER. We will first introduce our experimental setup, followed by a presentation of the results.

\vspace{-5pt}
\subsection{Experimental Setup}
\vspace{-5pt}
\looseness=-1\textbf{Tasks and Datasets.}
We evaluated MDMs on several formal languages: $n$-gram languages (with $n \in \{2, 3, 4\}$) and HMMs. For each language type, parameters (e.g., transition matrices, observation matrices, initial distributions) were randomly sampled. A detailed description of this generation process and examples of resulting sequences are available in \cref{app:data}. These formal languages were used to generate datasets of $1{,}000{,}000$ samples each, with $990{,}000$ for training and $10{,}000$ for validation. Datasets were generated with sequence lengths $L \in \{512, 1024, 2048\}$.

\textbf{Model Training.}
Transformer-based architectures served as the backbone for our MDMs, chosen for their scalability and expressiveness in sequence modeling. Detailed architectural specifications, including layer counts, hidden dimensions, and positional encoding schemes, are provided in \cref{tab:model_config} (\cref{app:train}). The training procedure largely followed the framework of \citet{sahoo2024simple}, with specific training configurations detailed in \cref{tab:training_config}. Models were trained for 20 epochs, with convergence monitored on the validation set using perplexity. The trained models successfully achieved perplexity values consistent with the ground-truth language models generating the datasets.

\looseness=-1\textbf{Evaluation Metrics.}
To evaluate the quality of generated sequences in line with our theoretical framework, we used TER and SER as primary metrics. Generative perplexity, computed using the ground-truth model, served as the TER metric. SER was calculated directly via \cref{eq:SER_def} (where accuracy reported in tables is $1 - \text{SER}$), also utilizing the ground-truth models. For sequence generation, we employed the \verb|ddpm_cache| sampler from \citet{sahoo2024simple}; its influence under varying sampling steps is further discussed in \cref{app:ddpm_cache}. We report the inference time per sequence of MDMs with 512 sampling steps and AR models in \cref{tab:infer_time} across different sequence length (detailed inference settings in \cref{app:train}). For robust evaluation, all reported TER and SER values are averages over $2000$ generated sequences for each experimental setting.

\textbf{Auto-Regressive Baseline.}
We also trained AR models with identical architectures and model sizes on the same datasets generated by the formal languages for comparison. These AR models were evaluated using the same metrics. Training configurations for the AR models are detailed in \cref{tab:training_config_AR}.

\vspace{-10pt}
\subsection{Experiment Results}
\vspace{-5pt}
The experiment results are presented in \cref{tab:exp_result_combined}. This table presents the generative perplexity and accuracy on n-gram languages ($n\in\{2,3,4\}$) and HMM. The results are detailed for sequence lengths of 512, 1024, and 2048, across a varying number of sampling steps (128, 256, 512, 1024, and 2048). The final 'AR' column shows the performance of the auto-regressive baseline. 

As \cref{tab:exp_result_combined} shows, under the metric of TER, MDMs achieve near-optimal generative perplexity, closely matching the AR baseline, with approximately 512 sampling steps. Crucially, this required number of sampling steps remains relatively constant across the tested sequence lengths (512, 1024, 2048). This empirically supports our theoretical finding that TER is primarily dependent on the number of sampling steps, rather than sequence length. 

In contrast, achieving low SER with MDMs necessitates a substantially larger number of sampling steps. This requirement becomes more pronounced as sequence length increases. For instance, for 2-gram models generating sequences of length 2048, even 2048 sampling steps yield only 73.1\% accuracy, whereas AR models achieve near 100\%. This aligns with our theory that SER is more sensitive to both the number of sampling steps and the sequence length.

Furthermore, even with a high number of sampling steps (e.g., 2048), a significant gap in SER persists between MDMs and AR models. This is particularly evident for more complex languages like 4-grams and HMMs across all sequence lengths. For example, with 4-gram language of length 512, MDMs with 2048 steps achieve 74.3\% accuracy, while the AR baseline reaches 99.5\%. AR models, due to their token-by-token generation mechanism, consistently achieve near-perfect SER on these tasks. This result further validate our theoretical result that MDMs Cannot generate low-SER sentences with a low cost.

\begin{table}[h]
\centering
\vspace{-8pt}
\begin{tabular}{l*{3}{c}}
\toprule
\textbf{Sequence Length} &  \textbf{512} & \textbf{1024} & \textbf{2048}  \\
\midrule
MDMs (512 steps)  & 3.1s & 4.2s & 4.7s  \\
AR  & 1.7s & 3.3s & 7.0s  \\
\hline
\end{tabular}
\vspace{3pt}
\caption{The inference time per sequence of MDMs with 512 sampling steps and AR models.}
\vspace{-15pt}
\label{tab:infer_time} 
\end{table}

Despite the challenges in achieving low SER, MDMs demonstrate considerable efficiency in generating fluent sequences with a fixed number of sampling steps (e.g., 512 steps), especially for longer sequences. The inference time per sequence, detailed in \cref{tab:infer_time}, quantifies this advantage, showing MDMs to be highly efficient for generating fluent long sequences.

These empirical findings underscores the trade-off between generation efficiency  and performace for MDMs. While MDMs excel at producing fluent outputs efficiently, they require considerably more sampling iterations to achieve low SER. This characteristic is particularly salient for reasoning-intensive tasks that demand high sequence-level correctness. Collectively, these experimental results provide strong empirical corroboration for our theoretical analyses.

\vspace{-10pt}
\section{Conclusion and Limitations}
\label{sec:limitation}
\vspace{-8pt}

This paper provides a rigorous theoretical analysis of the efficiency of MDMs under various metrics. We demonstrate that MDMs can achieve near-optimal TER with a fixed number of sampling steps, regardless of sequence length, making them highly efficient for tasks emphasizing fluency. However, when evaluated using SER, MDMs require sampling steps that scale linearly with sequence length, negating their efficiency advantage over auto-regressive models. Our study focuses on formal languages modeled using HMM, which, while foundational, still differs from modern language models. Extending this analysis to more advanced language models remains an important direction for future work. Additionally, we primarily analyze Masked Diffusion Models, but a broader family of diffusion-based language models, including variants like SEDD-unform \citep{lou2024discrete}. Further exploration is needed to generalize our findings to real-world settings and to systematically analyze other diffusion approaches.



\bibliography{ref}
\bibliographystyle{icml2025}


\clearpage
\appendix

\renewcommand \thepart{} 
\renewcommand \partname{}
\part{Appendix} 
\section{Details of Preliminary Experiments}
\label{app:pre_experiments}

\subsection{Models and Sampling Strategies Compared} 
We follow \citet{dream2025} and \citet{nie2025large} to setup our experiments on GSM8K and MBPP benchmarks. For each benchmark, we compare the following sampling strategies of different models as suggested:
\begin{itemize}
    \item \textbf{Entropy-based sampling }\citep{dream2025} applied to Dream-v0-Base-7B, with different sampling steps.
    \item \textbf{Low-confidence-based sampling} \citep{nie2025large} applied to LLaDA-8B-Base, also with different sampling steps.
    \item \textbf{Left-to-right (AR-style) sampling strategy} for Dream-v0-Base-7B and LLaDA-8B-Base, where one token is generated per step in a sequential, left-to-right manner.
    \item \textbf{Standard auto-regressive generation} for Qwen2.5-7B with default settings.
\end{itemize}

Here, the entropy-based sampling strategy and the low-confidence-based sampling strategy are the recommended sampling strategies for the MDMs \citep{dream2025,nie2025large}. Qwen2.5-7B serves as the standard AR baseline, while the left-to-right sampling strategy of MDMs serves as a naive baseline for the MDMs, aiming to eliminate the influence of other contributing factors, including different training datasets or training methods.

For Dream-v0-Base-7B, we manually implemented the left-to-right sampling process. In the case of LLaDA-8B-Base, setting \verb|block_length=1| during generation naturally obtains the same behavior. This configuration does not stop generation after encountering \verb|EOS|, which may cause an increased time cost, but our results indicate that this behavior does not affect the overall conclusions.




\subsection{Settings for GSM8k Experiments}

Following \citet{dream2025} and \citet{nie2025large}, we use the widely-adopted lm-evaluation-harness\footnote{https://github.com/EleutherAI/lm-evaluation-harness} framework to evaluate the performance. For the GSM8K benchmark, we follow prior work and set the number of few-shot examples to 8 and the maximum length of answer to 256. We assess the GSM8K Accuracy and model efficiency of different models, where efficiency is defined by the inverse of the execution time measured on 8 Nvidia RTX 4090 GPUs with Huggingface's transformers library\footnote{https://github.com/huggingface/transformers}.

In the figure, we report the relative accuracy and efficiency of MDMs compared to the baselines. These relative values are computed by dividing the actual accuracy and efficiency by those of the respective baselines -- Qwen and the MDMs using the left-to-right strategy. 

We list the detailed testing configurations for GSM8K in \cref{tab:gsm8k_config}.


\begin{table}[H]
\label{tab:gsm8k_config}
\begin{center}
\begin{tabular}{lccccc}
\toprule
\textbf{Models} & \textbf{\makecell{Dream w/ \\ entropy}} & \textbf{\makecell{Dream w/ \\ left-to-right}} & \textbf{\makecell{LLaDA w/ \\ low-confidence}} & 
\textbf{\makecell{LLaDA w/ \\ left-to-right}} & 
\textbf{\makecell{Qwen2.5}} \\
\midrule
Max new tokens & 256&256 &256 & 256 & 256 \\
Block length & N/A & N/A & 256 & 1 & N/A \\
Sampling algorithm & Entropy & Sequential & Low-confidence & Sequential & N/A \\
Sampling steps  & \{64,128,256\}& N/A& \{64,128,256\}& 256 & N/A \\
Temperature & 0 & 0 & 0 & 0 & 0 \\
\bottomrule
\end{tabular}
\end{center}
\caption{Testing Configurations for GSM8K}
\end{table}

\subsection{Settings and Results for MBPP Experiments}
\label{app:MBPP_Experiments}

For the MBPP experiment, the evaluation settings are similar to the GSM8K experiment. Specifically, use 3 few-shot examples and set the maximum answer length to 512, as suggested, to evaluate the MBPP Accuracy and efficiency. The definition of efficiency and the method for computing relative accuracy and efficiency values -- by normalizing against the respective baselines -- are identical to those used in the GSM8K evaluation.

The results are presented in \cref{fig:mbpp}, and the detailed testing configurations for MBPP  are listed in \cref{tab:mbpp_config}


\begin{table}[H]
\label{tab:mbpp_config}
\begin{center}
\begin{tabular}{lccccc}
\toprule
\textbf{Models} & \textbf{\makecell{Dream w/ \\ entropy}} & \textbf{\makecell{Dream w/ \\ left-to-right}} & \textbf{\makecell{LLaDA w/ \\ low-confidence}} & 
\textbf{\makecell{LLaDA w/ \\ left-to-right}} & 
\textbf{\makecell{Qwen2.5}} \\
\midrule
Max new tokens & 512&512 &512& 512 & 512 \\
Block length & N/A & N/A & 512 & 1 & N/A \\
Sampling algorithm & Entropy & Sequential & Low-confidence & Sequential & N/A \\
Sampling steps  & \{128,256,512\}& N/A& \{128,256,512\}& 512 & N/A \\
Temperature & 0.2 & 0.2 & 0 & 0 & 0 \\
Top p & 0.95 & 0.95 & N/A & N/A & N/A \\
\bottomrule
\end{tabular}
\end{center}
\caption{Testing Configurations for MBPP}
\end{table}



\begin{figure*}[h]
    \centering
    \begin{minipage}{0.48\textwidth}
    \centering
        \includegraphics[width=1\linewidth,height=0.75\linewidth]{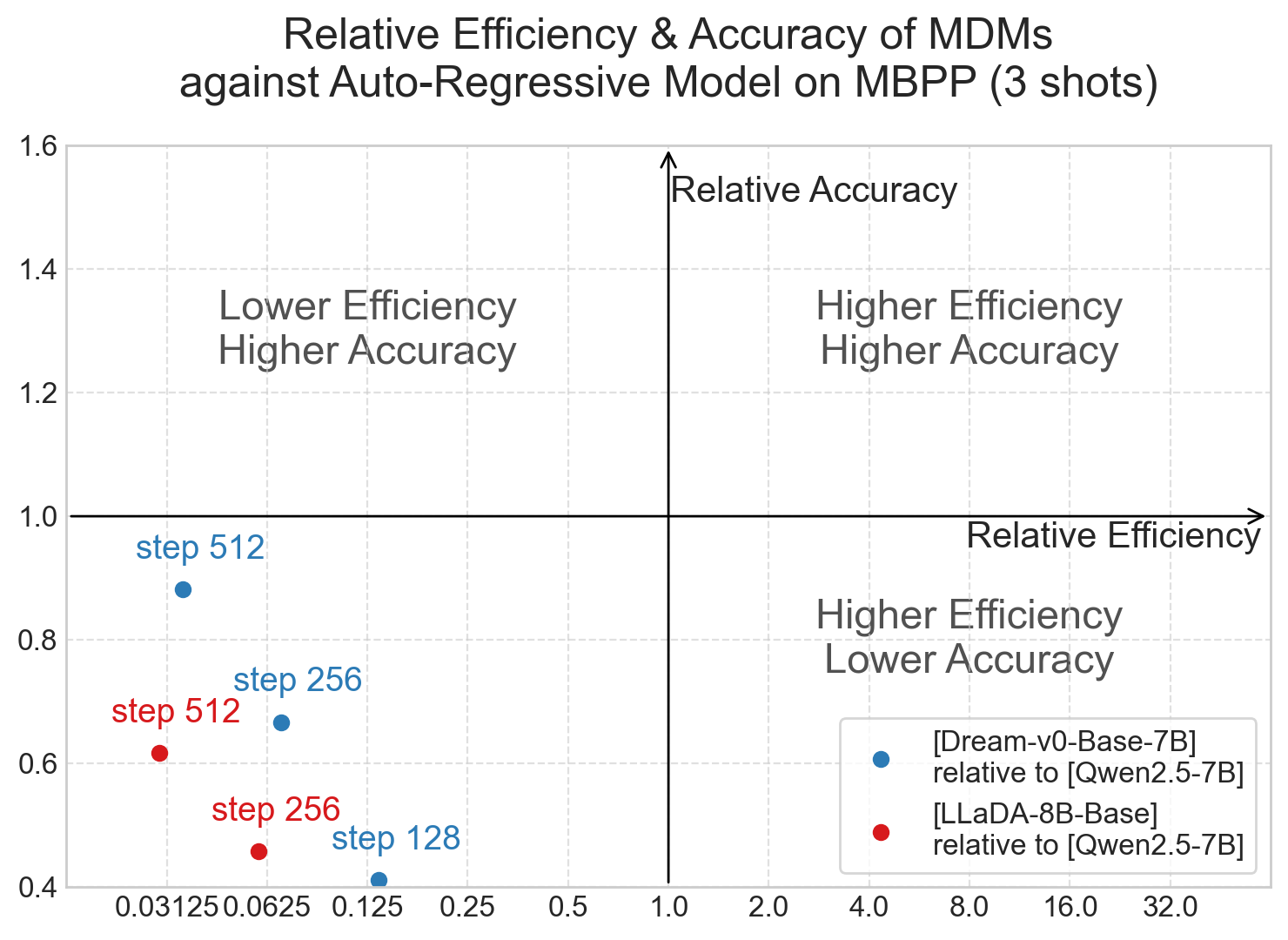}
    \end{minipage}
    \begin{minipage}{0.48\textwidth}
    \centering
        \includegraphics[width=1\linewidth,height=0.75\linewidth]{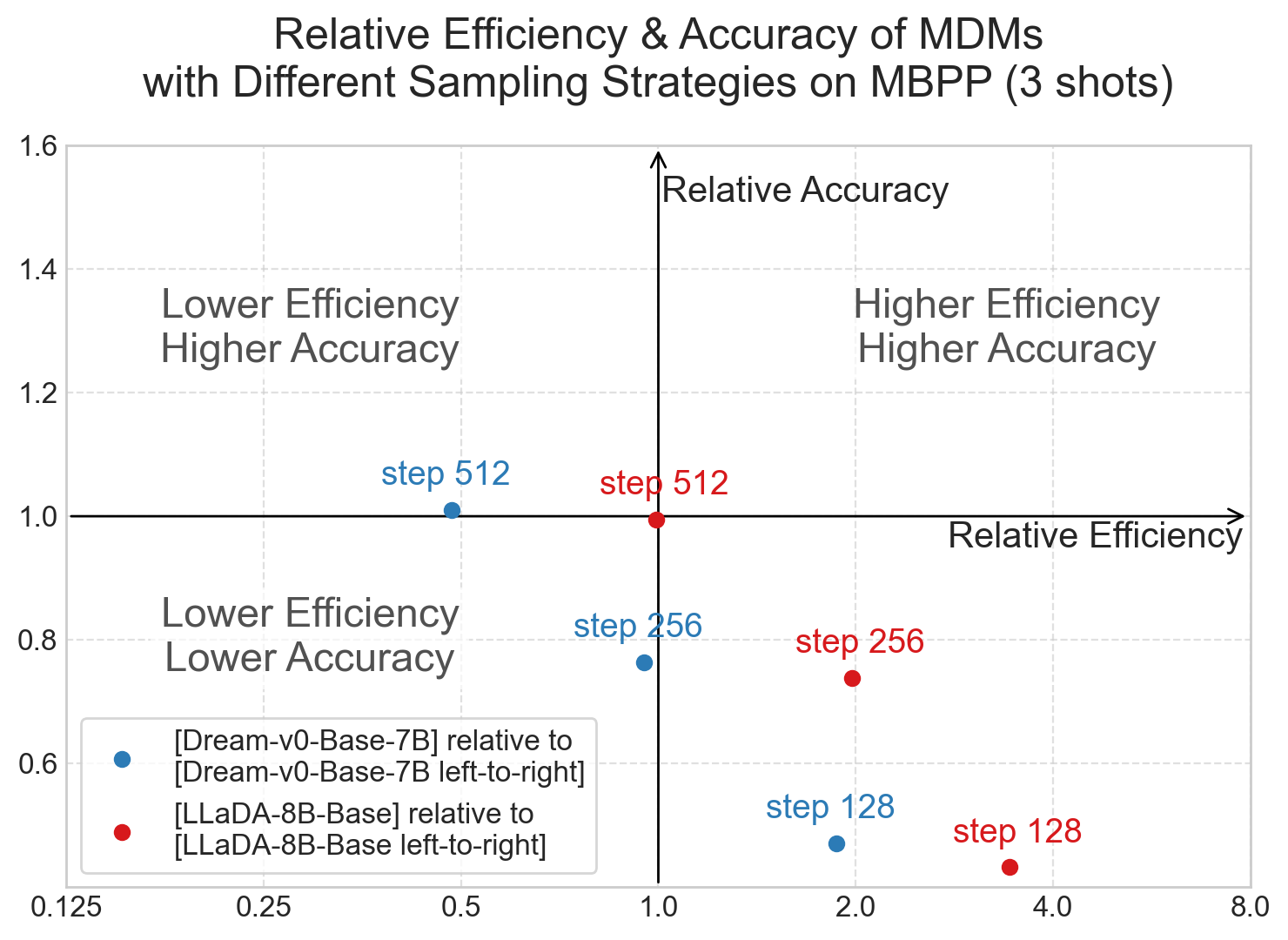}
    \end{minipage}
    \caption{Efficiency and accuracy of MDMs on MBPP (3-shot): 
    Similar to the GSM8K experiment, the left figure uses Qwen2.5-7B as the evaluation baseline 
 and compares the MBPP relative accuracy and efficiency of Dream-v0-Base-7B and LLaDA-8B-Base across different sampling steps. The origin can be considered as the baseline result. The first quadrant demonstrates models that can simultaneously achieve higher efficiency and accuracy. From the figure, we can see that the MDMs all fall into the third quadrant, indicating lower efficiency and lower performance. The right figure uses a baseline in which the MDMs are constrained to generate the left-most masked token at each step, effectively mimicking auto-regressive decoding. Still, even under this setting, none of the configurations fall into the first quadrant.
    }
    \label{fig:mbpp}
\end{figure*}



\section{Further Discussion}
\subsection{Various Metrics in NLP Tasks.}  
\label{app:nlp_task}
Evaluation metrics in NLP tasks are inherently tied to the specific objectives and requirements of their respective domains. For general language modeling tasks, perplexity \citep{jelinek1977perplexity,Devlin2019BERT} remains the metric of choice due to its ability to capture a model's predictive performance effectively. However, domain-specific tasks often demand more specialized evaluation criteria. For instance, in machine translation \citep{bahdanau2014neural,wu2016google}, the BLEU score is widely regarded as a standard measure of translation quality \citep{papineni2002bleu}, while text generation tasks \citep{sutskever2014sequence} frequently rely on metrics such as ROUGE to assess output fidelity \citep{lin2004rouge}. Similarly, tasks requiring reasoning \citep{wei2022chain}, such as mathematics \citep{bubeck2023sparks} or code generation \citep{roziere2023code,ouyang2023llm}, commonly adopt accuracy as an intuitive and straightforward measure of success.
\subsection{Discussion Regarding the Empirical Observations.} 
\label{app:empirial_obs}
The above results reveal that MDMs can efficiently generate low-TER sentences but may incur higher costs when evaluating the generation under SER. One might think these results are contradictory. Note that several previous works have already shown that TER (a.k.a perplexity) may not reflect a model's true performance in solving several long-sequence understanding tasks~\citep{Huang2022OnTL,hu2024perplexityreflectlargelanguage,luden2024beyond}. Thus, it is natural to arrive at different conclusions depending on the metric used. Moreover, many practical scenarios have shown that the choice of evaluation metric significantly influences the conclusion of other problems (see \cref{app:nlp_task} for various metrics in NLP tasks). For instance, while the community has previously focused on the emergence phenomenon, recent works by \citet{wei2022emergent} and \citet{schaeffer2024emergent} demonstrate that this phenomenon may stem from the use of non-smooth evaluation metrics. 

\subsection{Extend to Efficient Sampling Strategies}
\label{app:ddpm_cache}

In \citet{sahoo2024simple} and \citet{ou2024your}, an efficient sampling strategy \verb|ddpm_cache| is proposed, which can reduce the sampling time by a constant order of magnitude. Specifically, this sampler is approximately 3-4 times faster than previously used samplers when the number of sampling steps is large. In this section, we discuss the influence of \verb|ddpm_cache| on our conclusions under different sampling steps.

First, we briefly introduce the principles of \verb|ddpm_cache|. It utilizes the observation that if no locations are sampled at a given step, the sequence remains unchanged. Consequently, when the reverse model is not conditioned on time, the cached value computed during the first time this sequence went through the reverse model can be reused, instead of going through the reverse model again.

This sampling strategy does not affect our main theorems, as they are based solely on the sampled locations at each step, while unsampled locations are not considered. As for the evaluation metrics for computational efficiency in our experiments, we break it down into the following two cases:
\begin{enumerate}[nosep] 
    \item When the number of sampling steps is much smaller than the sequence length, which is the primary scenario we focus on, the expectation of steps where no new locations are sampled is relatively low, resulting in a computational cost that is nearly linear with respect to the number of sampling steps.
    \vspace{5pt}
    \item As the number of sampling steps becomes larger, the computational cost is mainly dependent on the number of valid steps where at least one location is sampled. As a matter of fact, the expectation of the number of valid steps increases as the number of sampling steps increases, and the maximum number of valid steps is equal to the number of sampling steps. In this case, the MDMs offer no computational advantage over auto-regressive models.
\end{enumerate}
Based on the above conclusions, we can find that for tasks requiring a low TER, using \verb|ddpm_cache| can further accelerate the generation of MDMs, suggesting high efficiency. Conversely, for tasks that demand a low SER, we have shown that the number of sampling steps need to be large enough, such that MDMs can not generate with low cost even when using \verb|ddpm_cache|. Therefore, we extend our findings to MDMs with efficient sampling strategies.

\section{Auxiliary Lemma}
\label{app:lemma}

In this section, we present some technical lemmas for the proof of our main results.
\begin{lemma}[Upper Bound for Multi-tokens Sampling]
\label{lemma:kl_mul_token_sample}
    Let $\mX = (X_1, X_2, \ldots, X_k) \in [N]^k$ be a random vector following the distribution $q$, where each component $X_i$ follows the marginal distribution $q_i$. Define $\Tilde{\mX} = (\Tilde{X}_1, \Tilde{X}_2, \ldots, \Tilde{X}_k) \sim p$ as another random vector, where the components $\Tilde{X}_i$ are sampled independently according to $p_i$. Let $\delta=\max_{i}\{\DKL{q_i}{p_i}\}$, then, the KL divergence between $p$ and $q$ satisfies the inequality:
    \[
    \DKL{q}{p} \leq (k-1)\log N+k\delta.
    \]
    
\end{lemma}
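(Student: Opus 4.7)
The plan is to decompose $\DKL{q}{p}$ into two pieces: one measuring how far the joint $q$ is from its own product-of-marginals, and one measuring the marginal-level discrepancy between $q_i$ and $p_i$. Since $p=\prod_{i=1}^k p_i$ by construction, inserting $\prod_i q_i(X_i)$ into the log-ratio gives the chain-rule-style identity
\[
\DKL{q}{p}
= \E_{\mX\sim q}\!\left[\log\frac{q(\mX)}{\prod_i q_i(X_i)}\right]
+ \sum_{i=1}^k \E_{X_i\sim q_i}\!\left[\log\frac{q_i(X_i)}{p_i(X_i)}\right]
= \DKL{q}{\textstyle\prod_i q_i} + \sum_{i=1}^k \DKL{q_i}{p_i}.
\]

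Then I would bound the two terms separately. The second term is immediate from the definition of $\delta$: $\sum_i \DKL{q_i}{p_i}\le k\delta$. For the first term, rewrite it in entropy form as the total correlation
\[
\DKL{q}{\textstyle\prod_i q_i} \;=\; \sum_{i=1}^k H(X_i) - H(\mX).
\]
Because conditioning cannot increase entropy, $H(\mX)\ge H(X_j)$ for every $j$, so in particular $H(\mX)\ge \max_i H(X_i)$. Combining this with the uniform bound $H(X_i)\le \log N$ on $[N]$ yields
\[
\sum_{i=1}^k H(X_i) - H(\mX) \;\le\; \sum_{i=1}^k H(X_i) - \max_i H(X_i) \;\le\; (k-1)\max_i H(X_i) \;\le\; (k-1)\log N.
\]
Adding the two bounds produces exactly $\DKL{q}{p}\le (k-1)\log N + k\delta$.

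I do not expect a genuine obstacle here, since the argument is essentially a clean algebraic decomposition paired with the standard inequality that a joint entropy dominates each of its marginals. The only mildly delicate step is the extraction of a $(k-1)\log N$ rather than a weaker $k\log N$: this requires bounding $H(\mX)$ from below by $\max_i H(X_i)$ instead of just by $0$. If one were satisfied with the looser $k\log N$ one could simply drop $H(\mX)\ge 0$, but recovering the stated coefficient is what makes the lemma tight enough for its downstream use in the main theorems.
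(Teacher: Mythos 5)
Your proof is correct, and it takes a mildly but genuinely different route from the paper's. The paper applies the chain rule to $q$, writing $\DKL{q}{p}=\sum_{i}\E_q\bigl[\log\bigl(q_i(x_i\mid \vx_{<i})/p_i(x_i)\bigr)\bigr]$, treats the $i=1$ term exactly as $\DKL{q_1}{p_1}$, and for each $i>1$ drops the conditional entropy via $q_i(x_i\mid\vx_{<i})\le 1$ to get $\DKL{q_i}{p_i}+H(q_i)\le \DKL{q_i}{p_i}+\log N$; the coefficient $k-1$ arises because the first coordinate contributes no $\log N$. You instead insert $\prod_i q_i$ as an intermediate distribution, splitting the divergence into the total correlation $\DKL{q}{\prod_i q_i}=\sum_i H(X_i)-H(\mX)$ plus $\sum_i\DKL{q_i}{p_i}$, and your $k-1$ comes from the lower bound $H(\mX)\ge\max_i H(X_i)$. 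The two arguments are bounding the same underlying quantity (the total correlation equals $\sum_{i\ge 2} I(X_i;\mX_{<i})$, which is exactly what the paper's term-by-term bound controls), and both land on the identical constant, but your version is arguably cleaner: the decomposition is symmetric in the coordinates, isolates the marginal-approximation error from the dependence structure in one identity, and makes transparent where the $(k-1)\log N$ slack comes from. The paper's version has the minor advantage of not requiring the total-correlation identity and of bounding each step's contribution individually, which mirrors how the lemma is invoked downstream. No gaps in your argument; the only implicit hypothesis (absolute continuity of $q_i$ with respect to $p_i$, so the insertion of $\prod_i q_i(X_i)$ into the log-ratio is legitimate) is harmless, since if it fails then $\delta=\infty$ and the claim is vacuous.
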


\begin{proof}

Using the chain rule for probabilities, the KL divergence can be written as:
\[
\DKL{q}{p} = \mathbb{E}_q\left[\sum_{i=1}^k \log\!\biggl(\frac{q_i(x_i \mid \mathbf{x}_{<i})}{p_i(x_i)}\biggr)\right],
\]
where \(\mathbf{x}_{<i} = (x_1, \ldots, x_{i-1})\). For \(i = 1\), there are no preceding variables, so:
\[
\mathbb{E}_q\left[\log\!\biggl(\frac{q_1(x_1)}{p_1(x_1)}\biggr)\right] = \DKL{q_1}{p_1}.
\]

For \(i > 1\), we bound:
\[
\mathbb{E}_q\left[\log\!\biggl(\frac{q_i(x_i \mid \mathbf{x}_{<i})}{p_i(x_i)}\biggr)\right]
\leq \mathbb{E}_q\left[\log\!\biggl(\frac{1}{p_i(x_i)}\biggr)\right].
\]
Decomposing \(\mathbb{E}_q\left[\log\!\bigl(1/p_i(x_i)\bigr)\right]\), we get:
\[
\mathbb{E}_q\left[\log\!\biggl(\frac{1}{p_i(x_i)}\biggr)\right]
= \mathbb{E}_q\left[\log\!\biggl(\frac{q_i(x_i)}{p_i(x_i)}\biggr)\right] + \mathbb{E}_q\left[\log\!\biggl(\frac{1}{q_i(x_i)}\biggr)\right].
\]
The first term is \(\DKL{q_i}{p_i}\), and the second term is \(-\mathbb{E}_q\bigl[\log q_i(x_i)\bigr]\), which represents the entropy of \(q_i\). Since the entropy of any distribution over \([N]\) is at most \(\log N\), we have:
\[
-\mathbb{E}_q\bigl[\log q_i(x_i)\bigr] \leq \log N.
\]
Thus:
\[
\mathbb{E}_q\left[\log\!\biggl(\frac{q_i(x_i \mid \mathbf{x}_{<i})}{p_i(x_i)}\biggr)\right]
\leq \DKL{q_i}{p_i} + \log N.
\]

Summing over all \(i = 1, \ldots, k\), we obtain:
\[
\DKL{q}{p} = \sum_{i=1}^k \mathbb{E}_q\left[\log\!\biggl(\frac{q_i(x_i \mid \mathbf{x}_{<i})}{p_i(x_i)}\biggr)\right]
\leq \DKL{q_1}{p_1} + \sum_{i=2}^k \bigl(\DKL{q_i}{p_i} + \log N\bigr).
\]

Reorganizing, we have:
\[
\DKL{q}{p} \leq \sum_{i=1}^k \DKL{q_i}{p_i} + (k-1)\log N.
\]
Since \(\DKL{q_i}{p_i} \leq \delta\) for all \(i\), the total sum of marginal KL divergences is bounded by \(k\delta\). Therefore:
\[
\DKL{q}{p} \leq k\delta + (k-1)\log N.
\]

This completes the proof. 

\end{proof}

\begin{lemma}[Chernoff Bound]
\label{lemma:chernoff}
Let \( X_1, \ldots, X_n \) be independent random variables taking values in \(\{0, 1\}\). Define \( X = \sum_{i=1}^n X_i \) as the sum of these independent random variables, and let \(\mu = \mathbb{E}[X]\) denote the expected value of \( X \). Then, the following probabilistic bounds hold:
\begin{equation*}
    \begin{aligned}
        & \Pr(X \geq (1 + \delta)\mu) \leq e^{-\frac{\delta^2\mu}{2 + \delta}}, \quad &\text{for } \delta \geq 0,\\
        & \Pr(X \leq (1 - \delta)\mu) \leq e^{-\frac{\delta^2\mu}{2}}, \quad &\text{for \ } 0 < \delta < 1.
    \end{aligned}
\end{equation*}
\end{lemma}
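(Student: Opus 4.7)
The plan is to apply the standard Chernoff exponential-moment method to both tails. For the upper tail, for any $t > 0$ Markov's inequality applied to $e^{tX}$ yields $\Pr(X \geq a) \leq e^{-ta}\mathbb{E}[e^{tX}]$. By independence, $\mathbb{E}[e^{tX}] = \prod_{i=1}^n \mathbb{E}[e^{tX_i}]$; writing $p_i = \Pr(X_i = 1)$, each factor satisfies $\mathbb{E}[e^{tX_i}] = 1 + p_i(e^t-1) \leq \exp(p_i(e^t-1))$ via the inequality $1+x \leq e^x$. Multiplying and using $\sum_i p_i = \mu$ gives $\mathbb{E}[e^{tX}] \leq \exp(\mu(e^t-1))$, so $\Pr(X \geq a) \leq \exp(\mu(e^t-1) - ta)$.

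Setting $a = (1+\delta)\mu$ and optimizing at $t = \ln(1+\delta) \geq 0$ produces $\Pr(X \geq (1+\delta)\mu) \leq \exp\!\big(\mu[\delta - (1+\delta)\ln(1+\delta)]\big)$. To recover the exponent stated in the lemma, I would establish the elementary inequality $\ln(1+\delta) \geq \frac{2\delta}{2+\delta}$ for $\delta \geq 0$; this follows by differentiating the difference and observing the derivative equals $\delta^2/[(1+\delta)(2+\delta)^2] \geq 0$, with equality at $\delta = 0$. Multiplying by $(1+\delta)$ and rearranging gives $(1+\delta)\ln(1+\delta) - \delta \geq \delta^2/(2+\delta)$, which produces the claimed bound $e^{-\delta^2 \mu/(2+\delta)}$.

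For the lower tail, I would repeat the argument with a negative exponent: for $s > 0$, Markov's inequality applied to $e^{-sX}$ combined with the analogous MGF bound gives $\Pr(X \leq (1-\delta)\mu) \leq \exp\!\big(\mu(e^{-s}-1) + s(1-\delta)\mu\big)$. Optimizing at $s = -\ln(1-\delta) > 0$ produces $\Pr(X \leq (1-\delta)\mu) \leq \exp\!\big(\mu[-\delta - (1-\delta)\ln(1-\delta)]\big)$. To reach the form $e^{-\delta^2 \mu/2}$, I would prove $(1-\delta)\ln(1-\delta) \geq -\delta + \delta^2/2$ on $(0,1)$ by noting that $f(\delta) = (1-\delta)\ln(1-\delta) + \delta - \delta^2/2$ vanishes at $0$ and has nonnegative derivative $-\ln(1-\delta) - \delta \geq 0$, since $-\ln(1-\delta) = \delta + \delta^2/2 + \delta^3/3 + \cdots \geq \delta$.

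There is no substantive obstacle: the template (exponential Markov plus scalar optimization in $t$) is completely routine, and the only mildly delicate step is choosing scalar inequalities tight enough to recover exactly the claimed denominators $2+\delta$ and $2$ rather than looser constants such as $3$. Both auxiliary inequalities are standard and admit the short derivative-sign verifications sketched above, so the entire argument reduces to bookkeeping once the optimal $t$ and $s$ have been identified.
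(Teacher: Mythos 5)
The paper states this lemma as a standard auxiliary fact and gives no proof of its own, so there is nothing to diverge from. Your argument is the textbook exponential-moment derivation and it is correct: the MGF bound $\mathbb{E}[e^{tX}]\leq e^{\mu(e^t-1)}$, the optimal choices $t=\ln(1+\delta)$ and $s=-\ln(1-\delta)$, and the two scalar inequalities $\ln(1+\delta)\geq \tfrac{2\delta}{2+\delta}$ and $(1-\delta)\ln(1-\delta)\geq -\delta+\tfrac{\delta^2}{2}$ (both verified by the derivative-sign computations you sketch) yield exactly the claimed exponents $\tfrac{\delta^2\mu}{2+\delta}$ and $\tfrac{\delta^2\mu}{2}$.
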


\begin{lemma}[Pinsker's Inequality]
\label{lemma:pinsker}
    Let $p$ and $q$ be two probability distributions. Then, the total variation distance between these distributions satisfies:
    $$D_\mathrm{TV}(p,q)\leq \sqrt{\frac{1}{2}\DKL{p}{q}}.$$
    Specifically, since $D_\mathrm{TV}(p,q)=\frac{1}{2}\left\lVert p-q\right\rVert_1$, the following inequality holds:
    $$\left\lVert p-q\right\rVert_1\leq \sqrt{2\DKL{p}{q}}.$$
\end{lemma}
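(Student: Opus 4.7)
The plan is to prove the first inequality $D_{\mathrm{TV}}(p,q)\leq\sqrt{\tfrac{1}{2}\DKL{p}{q}}$; the second inequality then follows immediately from the stated identity $D_{\mathrm{TV}}(p,q)=\tfrac{1}{2}\lVert p-q\rVert_1$. I would follow the classical two-step reduction: first pass from arbitrary distributions to a two-point (Bernoulli) problem by partitioning the sample space, and then prove the two-point case by a direct convexity computation.

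For the reduction step, I would let $A=\{x:p(x)\geq q(x)\}$ and use the standard identity
\[
D_{\mathrm{TV}}(p,q)\;=\;p(A)-q(A).
\]
Writing $a=p(A)$ and $b=q(A)$, the partition $\{A,A^c\}$ induces two Bernoulli distributions $\bar p=(a,1-a)$ and $\bar q=(b,1-b)$. By the data-processing inequality for KL divergence (a direct consequence of the log-sum inequality applied separately on $A$ and on $A^c$),
\[
\DKL{\bar p}{\bar q}\;\leq\;\DKL{p}{q}.
\]
It therefore suffices to establish the Bernoulli case: $(a-b)^2\leq\tfrac{1}{2}\DKL{\bar p}{\bar q}$ for all $a,b\in[0,1]$.

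For the Bernoulli inequality, I would fix $b\in(0,1)$ and define
\[
f(a)\;=\;a\log\frac{a}{b}+(1-a)\log\frac{1-a}{1-b}-2(a-b)^2.
\]
A short computation gives $f(b)=0$, $f'(b)=0$, and
\[
f''(a)\;=\;\frac{1}{a(1-a)}-4\;\geq\;0,
\]
where the last inequality uses $a(1-a)\leq 1/4$. Since $f$ is convex on $(0,1)$ and has a critical point at $a=b$ with value $0$, we conclude $f(a)\geq 0$, which is exactly the desired bound. The boundary cases $b\in\{0,1\}$ and $a\in\{0,1\}$ are handled by continuity, or directly by noting that $\DKL{\bar p}{\bar q}=\infty$ whenever $b=0<a$ or $b=1>a$, so the inequality holds trivially.

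The main obstacle, such as it is, is the convexity step for the Bernoulli case; everything else is routine. An alternative route would be to avoid convexity entirely by writing $\DKL{\bar p}{\bar q}=\int_b^a(f''$-type terms$)\,dt$ and bounding the integrand by $4$, but the convexity argument above is cleaner and avoids splitting into cases based on the sign of $a-b$.
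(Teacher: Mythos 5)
Your proof is correct. Note that the paper does not actually prove this lemma---Pinsker's inequality is listed among the auxiliary lemmas and stated without proof as a standard fact---so there is no in-paper argument to compare against; your proposal simply supplies the omitted derivation. What you give is the classical textbook route: reduce to the two-point case via the partition $\{A,A^c\}$ with $A=\{x:p(x)\ge q(x)\}$, using the identity $D_{\mathrm{TV}}(p,q)=p(A)-q(A)$ together with the data-processing (log-sum) inequality $\DKL{\bar p}{\bar q}\le \DKL{p}{q}$, and then verify the Bernoulli bound $\DKL{\bar p}{\bar q}\ge 2(a-b)^2$ by checking that $f(a)=a\log\frac{a}{b}+(1-a)\log\frac{1-a}{1-b}-2(a-b)^2$ is convex with a double zero at $a=b$; the computation $f''(a)=\frac{1}{a(1-a)}-4\ge 0$ is right, and your treatment of the boundary cases is adequate. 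The one caveat worth recording is that the constant $\tfrac12$ in the statement presumes the KL divergence is measured in nats (your $f''$ computation uses the natural logarithm); since the lemma is stated in that normalization, the proof matches it, but the constant would change under a different base.
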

\section{Proof for \cref{thm:acceleration_ngram}}
\label{app:positive}

This section provides the complete proof of \cref{thm:acceleration_ngram}. We first outline the proof strategy, then present the detailed arguments with supporting lemmas and definitions.

Our proof rests on reformulating $\PPL$ bounds through KL divergence and carefully analyzing dependencies in the n-gram setting. The key steps are:
\begin{itemize}
    \item We establish a connection between the perplexity of the discrete diffusion model and the KL divergence between generated and data distributions. This involves deriving an upper bound on KL divergence using expected divergence over reverse processes (\cref{lemma:kl_upper_mask}) and decomposing this divergence into per-step conditional KL terms (\cref{lemma:kl_decomp_rev}).
    \item We analyze n-gram model dependencies through a rigorous characterization of reverse processes (\cref{def:ins_rev}) and separators—$(n-1)$ continuous sampled tokens that create independent intervals (\cref{def:sep_rev}). This leads to a precise formulation of per-step dependencies using these separators (\cref{def:dep_rev}).
    \item We derive an upper bound for the KL divergence between generated and data distributions based on the number of per-step dependencies (\cref{lemma:kl_bound_mask,lemma:kl_upper_ins_rev}).
    \item We employ probabilistic bounds to analyze and bound the number of per-step dependencies (\cref{lemma:bound_sep_new_rev,lemma:bound_dep_rev,lemma:kl_mul_token_sample}).
    \item Finally, we demonstrate the existence of a schedule achieving small KL divergence with $O(\frac{n-1}{\epsilon^n})$ steps by constructing an efficient sampling schedule using the preceding lemmas (\cref{lemma:effi_kl_bound}).
\end{itemize}

To begin the formal proof, we introduce key definitions for analyzing the discrete diffusion process. Consider a masking schedule $\alpha_t$ and a sequence of sampling time steps $t_i = \frac{N-i}{N}$. For a sequence of length $L$, we define an instance of the discretization of the reverse process $\tau$ as follows:

\begin{definition}[An Instance of Reverse Process]
\label{def:ins_rev}
    Let $\tau = (\gM_1, \gM_2, \dots, \gM_N)$ represent a reverse process, where $\gM_i = \{l_{ij}\}$ denotes the set of locations sampled at step $i$. For a sequence of length $L$, the sets $\gM_i$ satisfy the following conditions:
    \[
    \bigcup_{i \in [N]} \gM_i = [L] \quad \text{and} \quad \bigcap_{i \in [N]} \gM_i = \emptyset.
    \]
    Specifically, we denote $\gM_{<i}$ as the union of all locations sampled prior to step $t_i$:
    $$\gM_{<i}=\bigcup_{j<i} \gM_j.$$
\end{definition}

Under a given instance of the reverse process $\tau$, at each time step $t_i = \frac{N-i}{N}$, the set of locations $\gM_i = \{l_{ij}\}$ is sampled. Let $\Tilde{\vx_i}$ denote the tokens associated with the locations sampled at time step $t_i$. Given the masking schedule $\alpha_t$, there exist multiple possible instances of the reverse process. We denote the distribution over these instances by $\PROC(\alpha_t, N, L)$.

\begin{lemma}[KL Divergence Upper Bound for the Masking Schedule]
\label{lemma:kl_upper_mask}
    Let $q$ denote the data distribution over sequences of length $L$, and let $p$ denote the distribution over sequences of length $L$ generated by the reverse model $p_\theta$ with masking schedule $\alpha_t$ and $N$ sampling steps. The KL divergence between $q$ and $p$ satisfies the following upper bound:
    \[
    \DKL{q}{p} \leq \mathbb{E}_{\tau \sim \PROC(\alpha_t, N, L)} \DKL{q}{p(\cdot | \tau)},
    \]
    where the expectation is taken over the distribution of reverse processes $\tau$ induced by $\PROC(\alpha_t, N, L)$.
\end{lemma}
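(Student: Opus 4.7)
The plan is a single application of Jensen's inequality, after recognizing that $p$ is a mixture over reverse-process instances $\tau$. Inspecting the reverse kernel in \eqref{eq:rev_proc}, at step $i$ each currently masked position is independently decided to be unmasked with probability $(\alpha_{t_{i-1}}-\alpha_{t_i})/(1-\alpha_{t_i})$, a decision that depends only on the schedule $\alpha_t$, not on the token contents or on $p_\theta$. Hence the set-sequence $\tau=(\gM_1,\dots,\gM_N)$ can be sampled first according to a distribution $\PROC(\alpha_t,N,L)$ that is independent of the generated tokens, and conditional on $\tau$ the final sequence has distribution $p(\cdot\mid\tau)$. Consequently,
$$p(\vx) \;=\; \mathbb{E}_{\tau\sim\PROC(\alpha_t,N,L)}\!\bigl[p(\vx\mid\tau)\bigr].$$

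First I would establish this mixture decomposition rigorously by marginalizing the joint law $p(\vx,\tau)=\Pr(\tau)\,p(\vx\mid\tau)$ over $\tau$. Then I would invoke the convexity of $-\log$: for every $\vx$ in the support of $q$,
$$-\log p(\vx) \;=\; -\log \mathbb{E}_\tau\bigl[p(\vx\mid\tau)\bigr] \;\le\; \mathbb{E}_\tau\bigl[-\log p(\vx\mid\tau)\bigr].$$
Multiplying by $q(\vx)$, summing over $\vx$, and adding $\sum_\vx q(\vx)\log q(\vx)$ to both sides yields
$$\DKL{q}{p} \;\le\; \mathbb{E}_\tau\!\left[\sum_\vx q(\vx)\log\frac{q(\vx)}{p(\vx\mid\tau)}\right] \;=\; \mathbb{E}_{\tau\sim\PROC(\alpha_t,N,L)}\DKL{q}{p(\cdot\mid\tau)},$$
which is exactly the claim. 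Equivalently, this is the convexity of KL divergence in its second argument applied to the mixture decomposition, a standard consequence of the log-sum inequality.

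Since the inequality reduces to a one-line Jensen argument, there is essentially no technical obstacle. The only subtle point, which I would flag explicitly in the writeup, is verifying that the distribution $\PROC(\alpha_t,N,L)$ over unmasking-location schedules is well-defined independently of the network's token-level outputs $p_\theta(\vx_0\mid\vx_t)$; this is justified by the factored form of the true reverse kernel in \eqref{eq:rev_proc}, where the unmasking indicator at each position is a schedule-only Bernoulli, independent of the resampled token identity. A minor bookkeeping remark is that if the right-hand side is infinite the inequality is trivial, while on the event that it is finite, $p(\vx\mid\tau)>0$ almost surely whenever $q(\vx)>0$, so $p(\vx)>0$ and the log terms are well-defined.
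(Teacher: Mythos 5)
Your proposal is correct and follows essentially the same route as the paper's proof: both decompose $p$ as a mixture over reverse-process instances $\tau\sim\PROC(\alpha_t,N,L)$ and apply Jensen's inequality to the convex function $-\log$ (equivalently, convexity of KL in its second argument). Your additional remarks—justifying that $\PROC$ is well-defined independently of $p_\theta$'s token outputs, and handling the degenerate cases—are sensible refinements of the same argument.
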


\begin{proof}
Let $\gX$ denote the set of all possible generated sequences. Then, the KL divergence between $q$ and $p$ is given by:
$$\DKL{q}{p}=\sum_{\vx\in\gX}q(\vx)\log \frac{q(\vx)}{p(\vx)}.$$
Let $h$ denote the distribution over reverse processes $\tau \sim \PROC(\alpha_t, N, L)$. Due to the convexity of $\log\frac{1}{x}$, by applying Jensen's inequality, we can obtain:
$$\log\frac{1}{p(\vx)}=\log\frac{1}{\sum_{\tau}h(\tau)\cdot p(\vx|\tau)}\leq\sum_\tau h(\tau)\cdot\log\frac{1}{p(\vx|\tau)}.$$
Since data distribution $q$ is independent of reverse process $\tau$:
$$q(\vx)=q(\vx|\tau), \quad \forall\tau.$$
Therefore, we have:
$$\log\frac{q(\vx)}{p(\vx)}\leq\sum_\tau h(\tau)\log\frac{q(\vx|\tau)}{p(\vx|\tau)}.$$
Substituting this back, we can get the final result:
\begin{align*}
    \DKL{q}{p}&\leq\sum_{\vx\in\gX}\sum_\tau h(\tau)q(\vx)\log\frac{q(\vx|\tau)}{p(\vx|\tau)}\\
    &=\sum_\tau h(\tau)\sum_{\vx\in\gX} q(\vx|\tau)\log\frac{q(\vx|\tau)}{p(\vx|\tau)}\\
    &=\mathbb{E}_{\tau \sim \PROC(\alpha_t, N, L)} \DKL{q(\cdot | \tau)}{p(\cdot | \tau)}\\
    &=\mathbb{E}_{\tau \sim \PROC(\alpha_t, N, L)} \DKL{q}{p(\cdot | \tau)}.
\end{align*}
\end{proof}

We next establish an upper bound for the KL divergence between the distribution of sequences sampled under an instance of the reverse process $\tau$ and the ground-truth distribution in the $n$-gram setting. To achieve this, we leverage the chain rule for KL divergence, which allows decomposition of the KL divergence of the entire sequence into a summation of the KL divergences at each individual step of the process.

\begin{lemma}[KL Divergence Decomposition for the Reverse Process]
\label{lemma:kl_decomp_rev}
    Consider an instance of reverse process $\tau = (\gM_1, \gM_2, \dots, \gM_N)\sim \PROC(\alpha_t, N, L)$. Let $\Tilde{\vx}_i$ denote the set of tokens corresponding to the locations sampled at time step $t_i$, and $\Tilde{\vx}_{<i}$ denote the set of tokens sampled at all steps prior to step $t_i$. The KL divergence between the ground-truth distribution $q$ and the distribution $p_\tau$ generated by the reverse process $\tau$ and reverse model $p_\theta$ satisfies the following decomposition:
    \begin{equation*}
        \DKL{q}{p_\tau} = \sum_{i=1}^N \mathbb{E}_{\Tilde{\vx}_{<i}}\DKL{q(\Tilde{\vx}_i | \Tilde{\vx}_{<i})}{p_\tau(\Tilde{\vx}_i | \Tilde{\vx}_{<i})},
    \end{equation*}
\end{lemma}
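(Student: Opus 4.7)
The plan is to apply the chain rule for KL divergence along the ordering of coordinates induced by $\tau$, after reorganizing both $q$ and $p_\tau$ as products of conditional distributions over the sampled groups $(\Tilde{\vx}_1, \ldots, \Tilde{\vx}_N)$. Since the sets $\gM_i$ partition $[L]$, the sequence $\vx$ is, up to a fixed permutation of coordinates, the concatenation $(\Tilde{\vx}_1, \ldots, \Tilde{\vx}_N)$, and KL divergence is invariant under this reindexing, so I would work in this grouped ordering throughout.

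First I would factor $p_\tau$ along this ordering. By construction of the MDM reverse process under the instance $\tau$, at step $i$ the model conditions on all previously revealed tokens $\Tilde{\vx}_{<i}$ (the positions outside $\gM_{<i}$ still carry the mask symbol $\mask$ and are uninformative), and then samples $\Tilde{\vx}_i$ at the positions $\gM_i$ via the parallel factorization of the reverse model; later steps never overwrite these tokens. This yields
$$p_\tau(\vx) \;=\; \prod_{i=1}^N p_\tau\!\bigl(\Tilde{\vx}_i \,\bigm|\, \Tilde{\vx}_{<i}\bigr).$$
For the data distribution $q$, the analogous factorization is just the standard chain rule of probability applied along the same ordering, which requires no structural assumption on $q$.

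Plugging both factorizations into the definition of KL divergence, splitting the logarithm, and interchanging the finite sum with the expectation yields
\begin{align*}
\DKL{q}{p_\tau}
&= \mathbb{E}_{\vx\sim q}\!\left[\sum_{i=1}^N \log\frac{q(\Tilde{\vx}_i\mid \Tilde{\vx}_{<i})}{p_\tau(\Tilde{\vx}_i\mid \Tilde{\vx}_{<i})}\right] \\
&= \sum_{i=1}^N \mathbb{E}_{\Tilde{\vx}_{<i}\sim q}\,\mathbb{E}_{\Tilde{\vx}_i\sim q(\cdot\mid \Tilde{\vx}_{<i})}\!\left[\log\frac{q(\Tilde{\vx}_i\mid \Tilde{\vx}_{<i})}{p_\tau(\Tilde{\vx}_i\mid \Tilde{\vx}_{<i})}\right] \\
&= \sum_{i=1}^N \mathbb{E}_{\Tilde{\vx}_{<i}}\DKL{q(\Tilde{\vx}_i\mid \Tilde{\vx}_{<i})}{p_\tau(\Tilde{\vx}_i\mid \Tilde{\vx}_{<i})},
\end{align*}
which is exactly the claimed decomposition.

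The only potentially delicate step is justifying the factorization of $p_\tau$, since it requires that the sampling at step $i$ depends on the past only through the already-revealed tokens $\Tilde{\vx}_{<i}$ and is unaffected by which positions will be revealed later. This is immediate from the definition of the reverse kernel together with the parallel sampling formula for $p_\theta(\vx_0 \mid \vx_t)$, so I do not expect any genuine obstacle; the argument is essentially a bookkeeping exercise around the standard chain rule for KL divergence.
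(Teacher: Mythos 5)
Your proposal is correct and takes essentially the same route as the paper: factorize $p_\tau$ along the sampling order of $\tau$ using the fact that each step conditions only on previously revealed tokens, factorize $q$ by the ordinary chain rule in the same grouped ordering, and then apply the chain rule for KL divergence. Your write-up is slightly more explicit about the coordinate reindexing and the expansion of the expectation, but there is no substantive difference from the paper's argument.
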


\begin{proof}
Given the reverse process $\tau$,  the reverse model samples $\Tilde{\vx}_i$ sequentially from $i=1$ to $N$, and the probability of sampling $\Tilde{\vx}_i$ at step $t_i$ depends only on the previously sampled tokens $\Tilde{\vx}_{<i}$. Therefore, the distribution $p(\vx)$ can be factorized as:
$$p_\tau(\vx)=\prod_{i=1}^N p_\tau(\Tilde{\vx}_i|\Tilde{\vx}_{<i}).$$
On the other hand, since the data distribution $q$ is independent of the reverse process $\tau$, it can similarly be decomposed as:
$$q(\vx)=\prod_{i=1}^N q(\Tilde{\vx}_i|\Tilde{\vx}_{<i}).$$
Applying the chain rule for KL divergence, we obtain:
$$\DKL{q}{p_\tau} = \sum_{i=1}^N \mathbb{E}_{\Tilde{\vx}_{<i}}\DKL{q(\Tilde{\vx}_i | \Tilde{\vx}_{<i})}{p_\tau(\Tilde{\vx}_i | \Tilde{\vx}_{<i})}.$$
\end{proof}

Next, we derive an upper bound for the KL divergence at each step of the reverse process. In the $n$-gram setting, it is important to note that, at step $t_i$, the tokens $x_{l_{ij}}$ and $x_{l_{ij^\prime}}$ are conditionally independent if there are at least $n-1$ consecutive tokens between the positions $l_{ij}$ and $l_{ij^\prime}$ that have already been sampled prior to step $i$. Under this condition, sampling these two tokens simultaneously incurs no sampling error, as the distributions of $x_{l_{ij}}$ and $x_{l_{ij^\prime}}$ are independent.

To formalize this concept, we introduce a measure of dependencies among the tokens sampled in $\gM_i$ during the reverse process. For the $i$-th reverse step in the $n$-gram setting, the number of dependencies, denoted as $\DEP_n(\gM_i, \gM_{<i})$, is determined by the structure of $\gM_{<i}$. Specifically, it depends on the number of separators in $\gM_{<i}$, denoted as $\SEP_n(\gM_{<i})$, as described in the following definition.


\begin{definition}[Number of Separators in a Reverse Step]
\label{def:sep_rev}
    Consider a reverse process $\tau = (\gM_1, \gM_2, \dots, \gM_N)$, where $\gM_{<i} = \bigcup_{j<i} \gM_j$ represents the union of all previously sampled location sets. The set $\gM_{<i}$ can be partitioned into several contiguous segments. Let $\gS_1,\gS_2,\cdots,\gS_k$ denote the segments containing at least $n-1$ consecutive tokens (i.e., $ |\gS_j|\geq n-1$) with the maximum $k$.
    We refer to these segments as separators, and denote the number of separators in the set $\gM_{<i}$ as:
    \begin{align*}
        \SEP_n(\gM_{<i})=\max\quad & k\\
        s.t.\quad &|\gS_j|\geq n-1,\ \gS_j\subset\gM_{<i},\quad\forall j\in[k],\\
        &\gS_j\cap\gS_j'=\emptyset,\quad\forall j\neq j'.
    \end{align*}
    Note that if a contiguous segment $\gS$ in $\gM_{<i}$ contains at least $d(n-1)$ consecutive tokens, where $d$ is an integer, then $\gS$ consists of at least $d$ separators.
\end{definition}

\begin{definition}[Number of Dependencies in a Reverse Step]
\label{def:dep_rev}
    Consider a reverse process $\tau = (\gM_1, \gM_2, \dots, \gM_N)$. The separators of $\gM_{<i}$ divide the sequence into at most $\SEP_n(\gM_{<i})+1$ disjoint intervals $\gI_1, \gI_2, \dots, \gI_k$. Under the $n$-gram setting, the sampling within each interval is independent of the sampling in other intervals. The number of dependencies of step $t_i$ is defined as the number of intervals $\gI_p$ (for $p = 1, \dots, k$) that contain at least one location in $\gM_i$:
    \[
    \DEP_n(\gM_i, \gM_{<i}) = |\gM_i| - \sum_{p=1}^{k} \mathbb{I}\left[\gI_p \cap \gM_i \neq \emptyset\right],
    \]
    where $\mathbb{I}$ is the indicator function.
\end{definition}

\noindent To illustrate this definition, we provide the following example:

\begin{example}[Computing Dependencies in the $n$-gram Setting]
    Consider a token sequence of length $10$, denoted as $\vx = (x_1, x_2, \dots, x_{10})$, with $n=4$. Let the previously sampled location set be $\gM_{<i} = \{2, 3, 4, 6, 7\}$ and the current location set be $\gM_i = \{1, 5, 9\}$.

    \begin{enumerate}
        \item \textbf{Identify contiguous segments in $\gM_{<i}$ containing at least $n-1 = 3$ consecutive tokens:} The set $\gM_{<i} = \{2, 3, 4, 6, 7\}$ forms the following contiguous segments:
        \[
        \{2, 3, 4\} \quad \text{and} \quad \{6, 7\}.
        \]
        Only the segment $\{2, 3, 4\}$ contains at least $n-1 = 3$ consecutive tokens. Thus, we have $\gS_1=\{2,3,4\}$. The sequence is then divided into the following disjoint intervals:
        \[
        \gI_1 = \{1\}, \quad \gI_2 = \{5, 6, 7, 8, 9, 10\}.
        \]

        \item \textbf{Determine which intervals overlap with $\gM_i = \{1, 5, 9\}$:} Token $1$ belongs to interval $\gI_1$, and tokens $5$ and $9$ belong to interval $\gI_2$.

        \item \textbf{Compute the number of dependencies:} The number of dependencies is:
        \[
        \DEP_n(\gM_i, \gM_{<i}) = |\gM_i| - \sum_{p=1}^k \mathbb{I}[\gI_p \cap \gM_i \neq \emptyset] = 3 - 2 = 1.
        \]
    \end{enumerate}
\end{example}

\begin{figure}[h!]
    \centering
    \begin{tikzpicture}[scale=0.8, every node/.style={align=center, font=\small}]
        \foreach \i in {1, 2, 3, 4, 5, 6, 7, 8, 9, 10} {
            \draw[rounded corners, thick] (\i, 0) rectangle (\i+0.8, 0.8) node[pos=.5] {$x_{\i}$};
        }

        \foreach \i in {2, 3, 4, 6, 7} {
            \fill[blue!20] (\i, 0) rectangle (\i+0.8, 0.8);
        }

        \draw[thick, dashed, blue] (2, 0.9) -- (4.8, 0.9); 
        \node[blue] at (3.4, 1.2) {\footnotesize $\{2, 3, 4\}$};

        \draw[thick, dashed, blue] (6, 0.9) -- (7.8, 0.9); 
        \node[blue] at (6.9, 1.2) {\footnotesize $\{6, 7\}$};

        \foreach \i in {1, 5, 9} {
            \fill[red!20] (\i, 0) rectangle (\i+0.8, 0.8);
        }

        \node[red] at (1.4, -0.5) {\footnotesize $1$};
        \node[red] at (5.4, -0.5) {\footnotesize $5$};
        \node[red] at (9.4, -0.5) {\footnotesize $9$};

        \draw[thick, green!60!black, rounded corners] (1, -1.5) rectangle (1.8, -0.8) node[pos=.5] {\small $\gI_1$};
        \node[green!60!black] at (1.4, -2) {\footnotesize $\{1\}$};

        \draw[thick, green!60!black, rounded corners] (5, -1.5) rectangle (10.8, -0.8) node[pos=.5] {\small $\gI_2$};
        \node[green!60!black] at (7.8, -2) {\footnotesize $\{5, 6, 7, 8, 9, 10\}$};

        \draw[thick, yellow!70!black, rounded corners] (2, -1.5) rectangle (4.8, -0.8) node[pos=.5] {\small $\gS_1$};
        \node[yellow!70!black] at (3.4, -2) {\footnotesize $\{2,3,4\}$};
    \end{tikzpicture}
    \caption{Illustration of the example for computing dependencies in the $n$-gram setting. Tokens $x_2, x_3, x_4, x_6, x_7$ (blue) represent the previously sampled location set $\gM_{<i}$, forming two contiguous segments: $\{2, 3, 4\}$ and $\{6, 7\}$. The current sampled locations $x_1, x_5, x_9$ (red) overlap with disjoint intervals $\gI_1 = \{1\}$ and $\gI_2 = \{5, 6, 7, 8, 9, 10\}$. The number of dependencies is computed as $\DEP_n(\gM_i, \gM_{<i}) = |\gM_i| - \text{(number of overlapping intervals)} = 3 - 2 = 1$.}
    \label{fig:dependencies}
\end{figure}

\noindent This example demonstrates how dependencies are computed, highlighting the interaction between previously sampled locations and the current reverse step. Such formalization is critical for understanding the efficiency and accuracy of discrete diffusion processes.

Finally, we extend this concept to define the total number of dependencies across an entire reverse process:

\begin{definition}[Number of Dependencies in a Reverse Process]
    Consider a reverse process $\tau = (\gM_1, \gM_2, \dots, \gM_N)$. Under the $n$-gram setting, the total number of dependencies in the process is defined as the sum of the dependencies across all steps:
    \[
    \DEP_n(\tau) = \sum_{i=1}^N \DEP_n(\gM_i, \gM_{<i}).
    \]
\end{definition}

Using the definition of $\DEP_n(\tau)$, we can bound the KL divergence between the distribution of sequences sampled under an instance of the reverse process and the ground-truth distribution in the $n$-gram setting.

\begin{lemma}[KL Divergence Upper Bound for the Instance of Reverse Process]
\label{lemma:kl_upper_ins_rev}
    Let $q$ denote the data distribution for sequences of length $L$, and let $p$ denote the distribution of sequences of length $L$ generated by reverse model $p_\theta$ via the reverse process $\tau$. Under \cref{ass:perfect_learning}, the following upper bound holds:
    \[
    \DKL{q}{p(\cdot|\tau)} \leq \DEP_n(\tau) \log |\gV|+L\eps_\mathrm{learning},
    \]
    where $\gV$ denote the vocabulary.
\end{lemma}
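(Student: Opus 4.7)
The plan is to reduce the bound to per-step conditional KL terms via \cref{lemma:kl_decomp_rev}, and then exploit the $n$-gram Markov structure together with the separator decomposition to apply \cref{lemma:kl_mul_token_sample} within each independent interval.

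First, I would invoke \cref{lemma:kl_decomp_rev} to write
\[
\DKL{q}{p(\cdot\mid\tau)} = \sum_{i=1}^{N}\mathbb{E}_{\Tilde{\vx}_{<i}}\DKL{q(\Tilde{\vx}_i\mid\Tilde{\vx}_{<i})}{p_\tau(\Tilde{\vx}_i\mid\Tilde{\vx}_{<i})},
\]
so it suffices to bound each summand. Fix a step $i$ and a realization of $\Tilde{\vx}_{<i}$, and consider the separators $\gS_1,\dots,\gS_{\SEP_n(\gM_{<i})}$ of \cref{def:sep_rev} together with the intervals $\gI_1,\dots,\gI_k$ they induce. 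The core structural observation is that any $n-1$ consecutive observed tokens form a Markov boundary under the $n$-gram assumption, so conditional on $\Tilde{\vx}_{<i}$ the true joint distribution $q(\Tilde{\vx}_i\mid\Tilde{\vx}_{<i})$ factorizes across the intervals $\gI_p$. The reverse model $p_\tau(\Tilde{\vx}_i\mid\Tilde{\vx}_{<i})$ factorizes across all positions by construction (see \cref{eq:parallel_sample}), hence certainly across intervals, so the per-step KL splits as a sum over intervals.

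Next, for each interval $\gI_p$ that overlaps $\gM_i$, write $k_p := |\gM_i\cap\gI_p|$ and apply \cref{lemma:kl_mul_token_sample} with alphabet size $|\gV|$, $k=k_p$ variables, and $\delta=\eps_\mathrm{learning}$ (the per-coordinate marginal KL bound is exactly what \cref{ass:perfect_learning} provides). This yields an interval-wise bound of $(k_p-1)\log|\gV|+k_p\eps_\mathrm{learning}$. Summing over intervals and using
\[
\sum_{p:\,k_p\geq 1}(k_p-1) = |\gM_i| - \sum_{p}\mathbb{I}[\gI_p\cap\gM_i\neq\emptyset] = \DEP_n(\gM_i,\gM_{<i}), \qquad \sum_{p:\,k_p\geq 1}k_p = |\gM_i|,
\]
gives the per-step bound $\DEP_n(\gM_i,\gM_{<i})\log|\gV|+|\gM_i|\eps_\mathrm{learning}$. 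Summing over $i$ and using $\sum_i|\gM_i|=L$ then yields the claimed inequality.

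The main obstacle is justifying the cross-interval factorization of $q(\Tilde{\vx}_i\mid\Tilde{\vx}_{<i})$: one must carefully argue, using the $n$-gram Markov property, that a contiguous block of at least $n-1$ already-sampled tokens in each separator $\gS_j$ d-separates neighboring intervals, so that conditional independence holds jointly across all $\gI_p$, not merely pairwise. A secondary technical point is verifying that the positions of $\gM_{<i}$ that fall inside an interval (but outside every separator $\gS_j$) can be absorbed into the conditioning without breaking the factorization, and that the single-coordinate conditional $q(x_0^{\ell}\mid\Tilde{\vx}_{<i})$ used for each $\ell\in\gM_i$ coincides with the $q_{0\mid t}$ distribution whose learning error is controlled by \cref{ass:perfect_learning}.
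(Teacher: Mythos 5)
Your proposal is correct and follows essentially the same route as the paper's proof: decompose via \cref{lemma:kl_decomp_rev}, split the per-step conditional KL across the independent groups induced by the separators, apply \cref{lemma:kl_mul_token_sample} to each group with $\delta=\eps_\mathrm{learning}$, and use $\sum_p(k_p-1)=\DEP_n(\gM_i,\gM_{<i})$ and $\sum_p k_p=|\gM_i|$. The cross-interval factorization you flag as the main obstacle is exactly the step the paper asserts (via \cref{def:sep_rev,def:dep_rev}) without further elaboration, so your treatment is, if anything, slightly more careful.
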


\begin{proof}
    Using \cref{lemma:kl_decomp_rev}, we have:
    $$\DKL{q}{p_\tau} = \sum_{i=1}^N \mathbb{E}_{\Tilde{\vx}_{<i}}\DKL{q(\Tilde{\vx}_i | \Tilde{\vx}_{<i})}{p_\tau(\Tilde{\vx}_i | \Tilde{\vx}_{<i})}.$$
    For each time step $t_i$:
    $$\mathbb{E}_{\Tilde{\vx}_{<i}}\DKL{q(\Tilde{\vx}_i | \Tilde{\vx}_{<i})}{p_\tau(\Tilde{\vx}_i | \Tilde{\vx}_{<i})}=\mathbb{E}_{\Tilde{\vx}_{<i}}\sum_{\Tilde{\vx}_i\in\gV^{|\gM_i|}}q(\Tilde{\vx}_i | \Tilde{\vx}_{<i})\log\frac{q(\Tilde{\vx}_i | \Tilde{\vx}_{<i})}{p_\tau(\Tilde{\vx}_i | \Tilde{\vx}_{<i})}.$$
    Given $\gM_i$ and $\gM_{<i}$, the tokens $\Tilde{\vx}_i$ at step $t_i$ can be partitioned into independently sampled token sets $\Tilde{\vx}_i^{(1)},\cdots,\Tilde{\vx}_i^{(m)}$ with $k_j$ denoting the size of each token set: 
    $$k_j=|\Tilde{\vx}_i^{(j)}|,\ j\in[m],\quad m=|\gM_i|-\DEP_n(\gM_i,\gM_{<i}).$$
    Using the independence, for each $\Tilde{\vx}_{<i}$, we can decompose the sum into:
    \begin{align*}
    \sum_{\Tilde{\vx}_i\in\gV^{|\gM_i|}}q(\Tilde{\vx}_i | \Tilde{\vx}_{<i})\log\frac{q(\Tilde{\vx}_i | \Tilde{\vx}_{<i})}{p_\tau(\Tilde{\vx}_i | \Tilde{\vx}_{<i})}&=\sum_{j=1}^m\sum_{\Tilde{\vx}_i^{(j)}\in\gV^{k_j}}q(\Tilde{\vx}_i^{(j)} | \Tilde{\vx}_{<i})\log\frac{q(\Tilde{\vx}_i^{(j)} | \Tilde{\vx}_{<i})}{p_\tau(\Tilde{\vx}_i^{(j)} | \Tilde{\vx}_{<i})}\\
    &=\sum_{j=1}^m\DKL{q(\Tilde{\vx}_i^{(j)} | \Tilde{\vx}_{<i})}{p_\tau(\Tilde{\vx}_i^{(j)} | \Tilde{\vx}_{<i}}.
    \end{align*}
    Under \cref{ass:perfect_learning}, the KL divergence between $q$ and $p_\theta$ is bounded by:
    $$\DKL{q_{0|t}(x_0^i \mid \vx_t)}{p_\mathbf{\theta}(x_0^i \mid \vx_t)} < \epsilon_\text{learning}, \quad \forall\ t \text{ and } \vx_t.$$
    By \cref{lemma:kl_mul_token_sample}, we know that:
    $$\DKL{q(\Tilde{\vx}_i^{(j)} | \Tilde{\vx}_{<i})}{p_\tau(\Tilde{\vx}_i^{(j)} | \Tilde{\vx}_{<i}}\leq (k_j-1)\log|\gV|+k_j\eps_\mathrm{learning}.$$
    Substituting back:
    $$\sum_{\Tilde{\vx}_i\in\gV^{|\gM_i|}}q(\Tilde{\vx}_i | \Tilde{\vx}_{<i})\log\frac{q(\Tilde{\vx}_i | \Tilde{\vx}_{<i})}{p_\tau(\Tilde{\vx}_i | \Tilde{\vx}_{<i})}\leq\sum_{j=1}^m(k_j-1)\log|\gV|+k_j\eps_\mathrm{learning}.$$
    Using the fact that
    $$\sum_{j=1}^m(k_j-1)=|\gM_i|-m=\DEP_n(\gM_i,\gM_{<i}),\quad \sum_{j=1}^mk_j=|\gM_i|$$
    we can obtain:
    $$\mathbb{E}_{\Tilde{\vx}_{<i}}\DKL{q(\Tilde{\vx}_i | \Tilde{\vx}_{<i})}{p_\tau(\Tilde{\vx}_i | \Tilde{\vx}_{<i})}\leq \DEP_n(\gM_i,\gM_{<i})\log|\gV|+|\gM_i|\eps_\mathrm{learning}.$$
    Thus, combined with the definition of $\DEP_n(\tau)$ and $p_\tau=p(\cdot|\tau)$, we can draw the final conclusion:
    \begin{align*}
        \DKL{q}{p(\cdot|\tau)} &\leq \sum_{i=1}^N\left(\DEP_n(\gM_i,\gM_{<i})\log|\gV|+|\gM_i|\eps_\mathrm{learning}\right)\\
        &=\DEP_n(\tau) \log |\gV|+L\eps_\mathrm{learning}.
    \end{align*}
\end{proof}

The above Lemma directly leads to the bound for the KL divergence between the distribution of sequences generated by the reverse model with a given masking schedule and the ground-truth distribution in the $n$-gram setting.

\begin{lemma}[KL Divergence Upper Bound for a Masking Schedule]
\label{lemma:kl_bound_mask}
    Let $q$ denote the data distribution over sequences of length $L$, and let $p$ denote the distribution over sequences of length $L$ generated by the reverse model $p_\theta$ with masking schedule $\alpha_t$ and $N$ reverse steps. Under \cref{ass:perfect_learning}, the KL divergence between $q$ and $p$ satisfies the following upper bound:
    \[
    \DKL{q}{p} \leq \log |\gV| \sum_{i=1}^N \E_{\tau\sim\PROC(L, \alpha_t, N)} \DEP_n(\gM_i, \gM_{<i})+L\eps_\mathrm{learning}.
    \]
\end{lemma}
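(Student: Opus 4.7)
The plan is to obtain this result as a direct synthesis of the two upper bounds already established in the excerpt, together with the additive definition of $\DEP_n(\tau)$ and linearity of expectation. Concretely, I would first invoke \cref{lemma:kl_upper_mask} to dominate the KL divergence by an expectation over random reverse processes,
\begin{equation*}
    \DKL{q}{p} \leq \mathbb{E}_{\tau \sim \PROC(\alpha_t, N, L)} \DKL{q}{p(\cdot \mid \tau)}.
\end{equation*}
This reduces the problem from bounding a single sequence-level divergence under the full stochastic sampler to bounding, in expectation, the divergence conditioned on a fixed schedule of which coordinates are revealed at which step.

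Next, I would apply \cref{lemma:kl_upper_ins_rev} inside the expectation. For any fixed instance $\tau = (\gM_1,\dots,\gM_N)$, that lemma gives, under \cref{ass:perfect_learning},
\begin{equation*}
    \DKL{q}{p(\cdot \mid \tau)} \leq \DEP_n(\tau) \log |\gV| + L \, \eps_\mathrm{learning}.
\end{equation*}
Taking expectation with respect to $\tau \sim \PROC(L, \alpha_t, N)$ on both sides and pulling the constant $L\eps_\mathrm{learning}$ out, I then use the additive definition $\DEP_n(\tau) = \sum_{i=1}^{N} \DEP_n(\gM_i, \gM_{<i})$ together with linearity of expectation to rewrite $\mathbb{E}_\tau \DEP_n(\tau) = \sum_{i=1}^N \mathbb{E}_\tau \DEP_n(\gM_i, \gM_{<i})$. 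Combining the two displays delivers exactly the stated bound.

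There is essentially no hard step: the two previous lemmas do all the work, and the only thing to verify is that the bound in \cref{lemma:kl_upper_ins_rev} is $\tau$-measurable so that the expectation makes sense, and that the additive $L\eps_\mathrm{learning}$ term passes through the expectation as a deterministic quantity (it depends only on the fixed sequence length $L$, not on $\tau$). If any subtlety arises, it will be in checking that the distribution $\PROC(L,\alpha_t,N)$ used in \cref{lemma:kl_upper_mask} is the same as the one used when invoking \cref{lemma:kl_upper_ins_rev}; since both are defined as the induced law over $(\gM_1,\dots,\gM_N)$ from the forward masking schedule $\alpha_t$ with $N$ steps and sequence length $L$, this is immediate and requires no further argument.
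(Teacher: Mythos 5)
Your proposal is correct and follows exactly the paper's own argument: the paper likewise chains \cref{lemma:kl_upper_mask} with \cref{lemma:kl_upper_ins_rev} and then uses the additive definition of $\DEP_n(\tau)$ together with linearity of expectation to obtain the stated bound. No gaps; this is the intended two-line synthesis of the preceding lemmas.
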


\begin{proof}
    By \cref{lemma:kl_upper_mask}, we can obtain:
    $$\DKL{q}{p} \leq \mathbb{E}_{\tau \sim \PROC(\alpha_t, N, L)} \DKL{q}{p(\cdot | \tau)}.$$
    Applying \cref{lemma:kl_upper_ins_rev} to the instances of reverse process, we can conclude that:
    \begin{align*}
    \DKL{q}{p} &\leq \mathbb{E}_{\tau \sim \PROC(\alpha_t, N,L)}\sum_{i=1}^N\DEP_n(\gM_i,\gM_{<i})\log|\gV|+L\eps_\mathrm{learning}\\
    &=\log |\gV| \sum_{i=1}^N \E_{\tau\sim\PROC(L, \alpha_t, N)} \DEP_n(\gM_i, \gM_{<i})+L\eps_\mathrm{learning}.
    \end{align*}
\end{proof}

For the final estimation, we need to derive an upper bound for the expected number of dependencies at each reverse step. First, we use Chernoff Bound to control the number of separators and new locations at each reverse step for a given masking schedule.

\begin{lemma}[Bounds on Separator and New Location Count at Each Reverse Step]
\label{lemma:bound_sep_new_rev}
    Given a sequence of length $L$, a masking schedule $\alpha_t$, and $N$ reverse steps. Assume that $L$ is divisible by $n-1$. Given the time step $t_i = \frac{N-i}{N}$, let $\NEW$ denote the number of locations sampled at step $t_i$, and $\SEP_n$ denote the number of separators in the previously sampled locations. Under the $n$-gram setting, the following bounds hold for $\NEW$ and $\SEP_n$:
    \begin{align*}
        \Pr\left(\SEP_n\leq \frac{Lp_i^{n-1}}{2(n-1)}\right)&\leq e^{-\frac{Lp_i^{n-1}}{8(n-1)}},\\
        \Pr\left(\NEW\geq 2L\delta_i\right)&\leq e^{-\frac{L\delta_i}{3}},
    \end{align*}
    where $p_i = \alpha_{t_{i-1}}$ and $\delta_i = \alpha_{t_i} - \alpha_{t_{i-1}}$.
\end{lemma}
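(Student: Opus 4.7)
The plan is to apply the Chernoff bounds from \cref{lemma:chernoff} to two sums of independent Bernoulli variables, one for each inequality. The first step is to translate the reverse-process sampling schedule into marginal Bernoulli statistics for individual locations. Starting from a fully masked sequence at $t_0 = 1$, iterating the factorized reverse transition \eqref{eq:rev_proc} shows that each location $\ell \in [L]$ is independently in $\gM_{<i}$ with probability $\alpha_{t_{i-1}} = p_i$, and independently in $\gM_i$ with probability $\alpha_{t_i} - \alpha_{t_{i-1}} = \delta_i$. Independence across locations follows from the product structure of both the forward kernel \eqref{def:forward} and the reverse transition \eqref{eq:rev_proc}.

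For the lower tail on $\SEP_n$, I would partition $[L]$ into $L/(n-1)$ consecutive disjoint blocks of length $n-1$ (using the divisibility hypothesis). Let $Y_k$ be the indicator that all $n-1$ locations in block $k$ lie in $\gM_{<i}$; then the $Y_k$ are independent Bernoulli with mean $p_i^{n-1}$. Each $Y_k = 1$ produces a distinct segment of $n-1$ consecutive sampled positions in $\gM_{<i}$, so by \cref{def:sep_rev}, $\SEP_n(\gM_{<i}) \geq \sum_k Y_k$. The sum has mean $\mu = \tfrac{L p_i^{n-1}}{n-1}$, and Chernoff's lower-tail bound with $\delta = 1/2$ gives
\begin{equation*}
\Pr\Bigl(\textstyle\sum_k Y_k \leq \tfrac{Lp_i^{n-1}}{2(n-1)}\Bigr) \leq \exp\!\Bigl(-\tfrac{Lp_i^{n-1}}{8(n-1)}\Bigr),
\end{equation*}
which implies the stated bound on $\SEP_n$.

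For the upper tail on $\NEW$, I would write $\NEW = \sum_{\ell=1}^L Z_\ell$ where $Z_\ell$ are independent Bernoulli$(\delta_i)$, so $\mu = L\delta_i$. Chernoff's upper-tail bound with $\delta = 1$ then yields
\begin{equation*}
\Pr(\NEW \geq 2L\delta_i) \leq \exp\!\Bigl(-\tfrac{L\delta_i}{3}\Bigr),
\end{equation*}
which is exactly the second claim.

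The steps are essentially mechanical once the marginal statistics and independence are pinned down, so there is no genuine obstacle. The one point that deserves care is justifying $\SEP_n \geq \sum_k Y_k$: because $\SEP_n$ is defined via a maximum packing of disjoint length-$(n-1)$ segments, the blocks $B_k$ with $Y_k = 1$ form a valid (though not necessarily maximal) such packing, which is what makes the block-partition lower bound tight enough to match the exponent $\tfrac{Lp_i^{n-1}}{8(n-1)}$ stated in the lemma.
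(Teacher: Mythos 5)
Your proposal is correct and follows essentially the same route as the paper's proof: both derive the per-location marginals $p_i$ and $\delta_i$ from the product structure of the reverse process, partition $[L]$ into $L/(n-1)$ blocks whose all-sampled indicators lower-bound $\SEP_n$, and apply the two Chernoff bounds of \cref{lemma:chernoff} with the same parameters ($\delta=1/2$ for the lower tail, $\delta=1$ for the upper tail). No meaningful differences to report.
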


\begin{proof}
Given a masking schedule $\alpha_t$, using the expression of true reverse process in \cref{eq:rev_proc} and $\alpha_1=0$, we can compute the probability $p^{(i)}$ of a token being sampled at time step $t_i$ to be:
$$p^{(i)}=\frac{\alpha_{t_i}-\alpha_{t_{i-1}}}{1-\alpha_{t_{i-1}}}\cdot\prod_{j=1}^{i-1}\frac{1-\alpha_{t_j}}{1-\alpha_{t_{j-1}}}=\alpha_{t_i} - \alpha_{t_{i-1}}=\delta_i.$$
Therefore, $\delta_i$ is the probability of a location being sampled at time step $t_i$. Summing up $\delta_i$, we can know that $p_i=\sum_{j=1}^{i-1}\delta_j$ is the probability of a location being sampled prior to time step $t_i$.

To derive a bound for $\SEP_n$, we partition the sequence into $\frac{L}{n-1}$ intervals, each of length $n-1$. For a given interval, the probability that all locations within the interval have been sampled prior to step $t_i$ is $p_i^{n-1}$. Define $X_j=1$ if the locations in the $j$-th interval have been sampled prior to $t_i$, and $X_j=0$ otherwise. The random variables $X_1,X_2,\cdots,X_{\frac{L}{n-1}}$ are independent and satisfy the following expectation:
$$\mathbb{E}_{\tau \sim \PROC(L, \alpha_t, N)}\sum_{j=1}^{\frac{L}{n-1}}X_j=\frac{Lp_i^{n-1}}{n-1}.$$
By the definition of $\SEP_n$, we know that:
$$\SEP_n\geq\sum_{j=1}^{\frac{L}{n-1}}X_j.$$
Applying \cref{lemma:chernoff} to the sum of $X_j$, we derive:
$$\Pr\left(\SEP_n\leq \frac{Lp_i^{n-1}}{2(n-1)}\right)\leq \Pr\left(\sum_{j=1}^{\frac{L}{n-1}}X_j\leq \frac{Lp_i^{n-1}}{2(n-1)}\right)\leq e^{-\frac{Lp_i^{n-1}}{8(n-1)}}.$$

Next, we consider the bound for $\NEW$. Given that the sequence contains $L$ locations and the probability of sampling any specific location at step $t_i$ is $\delta_i$, the expected number of new locations sampled at $t_i$ is given by:
$$\mathbb{E}_{\tau \sim \PROC(L, \alpha_t, N)}\NEW=L\delta_i.$$
Since the sampling of each location occurs independently, applying \cref{lemma:chernoff}, we have:
$$\Pr\left(\NEW\geq 2L\delta_i\right)\leq e^{-\frac{L\delta_i}{3}}.$$
\end{proof}

Using the above lemma, we can divide the estimation for the number of dependencies into three cases, and derive the bound case by case. This is achieved by using a variety of means and careful estimations.

\begin{lemma}[Upper Bound for the Expectation of Dependencies at Each Reverse Step]
\label{lemma:bound_dep_rev}
    Given a sequence of length $L$, a masking schedule $\alpha_t$, and $N$ reverse steps. Assume $L\delta_i>1$, then the expected number of dependencies at time step $t_i = \frac{N-i}{N}$ satisfies:
    \[
    \mathbb{E}_{\tau \sim \PROC(L, \alpha_t, N)} \DEP_n(\gM_i, \gM_{<i}) \leq \frac{9}{3+L\delta_i}+\frac{C(n-1)L\delta_i^2}{p_i^{n-1}},
    \]
    where $p_i = \alpha_{t_{i-1}}$, $\delta_i = \alpha_{t_i} - \alpha_{t_{i-1}}$, and $C$ is a constant.
\end{lemma}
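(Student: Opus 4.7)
The plan is to reduce the expected dependency count to a pairwise probabilistic estimate and then invoke the concentration bounds of \cref{lemma:bound_sep_new_rev} to absorb the residual. First I would observe that $\DEP_n(\gM_i, \gM_{<i}) = \sum_p \max(X_p - 1, 0) \leq \sum_p \binom{X_p}{2}$, where $X_p = |\gM_i \cap \gI_p|$ and the sections $\gI_p$ are induced by \emph{any} valid separator choice (using fewer separators only enlarges sections, which only inflates this bound). I would therefore replace the adaptive max-packing separators in the definition of $\DEP_n$ by a fixed block partition into $M = L/(n-1)$ consecutive blocks of length $n-1$, declaring block $j$ a ``proxy separator'' iff all of its positions lie in $\gM_{<i}$. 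This turns the problem into a pair-counting estimate that can be evaluated in closed form.

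Next, using independence of positions across the forward process, I would factor the pairwise probability. For positions $\ell, \ell'$ whose host blocks are at block-distance $r$, both being simultaneously in $\gM_i$ and in a common section is exactly the event that both are sampled at step $t_i$ and that every strictly intermediate block fails to be fully sampled; this has probability $\delta_i^2 (1 - p_i^{n-1})^{(r-1)_+}$. Grouping pairs by distance yields $M \binom{n-1}{2}$ pairs at $r = 0$ and $(M - r)(n-1)^2$ pairs at each $r \geq 1$, after which bounding the geometric series $\sum_{r \geq 1}(M - r)(1 - p_i^{n-1})^{r-1} \leq M / p_i^{n-1}$ and substituting $M = L/(n-1)$ produces the dominant summand of order $(n-1) L \delta_i^2 / p_i^{n-1}$, matching the second term of the claim up to the constant $C$.

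The residual $9/(3 + L\delta_i)$ must then accommodate two smaller contributions: the diagonal $r = 0$ term of order $L(n-2)\delta_i^2$, and the contribution $\mathbb{E}[\DEP_n \mathbb{I}_{\gG^c}]$ on the bad events $\{\SEP_n < L p_i^{n-1}/(2(n-1))\}$ and $\{\NEW > 2L\delta_i\}$ whose probabilities are exponentially small by \cref{lemma:bound_sep_new_rev}. To rewrite the Chernoff tails in the advertised rational form, I would use the elementary inequality $e^{-x/3} \leq 3/(3 + x)$ for $x \geq 0$ (equivalently $3 + x \leq 3 e^{x/3}$, which follows from convexity and matching value and derivative at $x = 0$), and then convert contributions of the form $L\delta_i \, e^{-L\delta_i/3}$ into multiples of $1/(3 + L\delta_i)$.

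The hard part will be the final bookkeeping step: tracking enough constants so that the diagonal within-block slack and the Chernoff-tail slack both fit under the single summand $9/(3+L\delta_i)$ rather than leaking out as an extra $O(L\delta_i^2)$ term. The combinatorial core---the fixed-block proxy separators together with the geometric pair sum---is conceptually the key insight but technically the easier half; the delicate work lies in the constant chasing needed to obtain exactly the stated form of the non-main correction.
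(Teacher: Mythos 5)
Your proof is correct, and it takes a genuinely different route from the paper's. The paper conditions on the two Chernoff events of \cref{lemma:bound_sep_new_rev} and splits the expectation into three cases, handling the main case ($\SEP_n$ large, $\NEW$ small) by an exchangeability computation giving $\mathbb{E}_{a,b}\DEP_n=\frac{b(b-1)}{a+b}$ for $a$ separators and $b$ new locations in a uniformly random interleaving; the summand $\frac{9}{3+L\delta_i}$ is precisely the integrated tail of the $\NEW$ bound. You instead use $\DEP_n=\sum_p\max(X_p-1,0)\le\sum_p\binom{X_p}{2}$, pass to fixed-block proxy separators (legitimate: since a separator cannot contain a location of $\gM_i$, maximality of the packing in \cref{def:sep_rev} forces a maximal-packing separator strictly between any two locations of $\gM_i$ that a fully sampled block separates, so the proxy partition only coarsens the one defining $\DEP_n$), and then compute the expected number of same-section pairs exactly by independence of the per-position masking times. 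This yields the unconditional bound
\[
\mathbb{E}\,\DEP_n(\gM_i,\gM_{<i})\;\le\;\tfrac{1}{2}L(n-2)\delta_i^2+\frac{(n-1)L\delta_i^2}{p_i^{n-1}}\;\le\;\frac{2(n-1)L\delta_i^2}{p_i^{n-1}},
\]
which is cleaner and strictly stronger than the stated lemma: it needs neither the concentration bounds, nor the case split, nor the hypothesis $L\delta_i>1$, and the first summand of the claimed bound can simply be dropped (it is nonnegative, so the lemma follows with $C=2$).

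The one flaw is your closing paragraph: the ``delicate constant chasing'' you anticipate does not exist. Your pair-counting bound already accounts for every realization of $\tau$, so there are no residual bad events $\{\SEP_n\ \text{small}\}$ or $\{\NEW\ \text{large}\}$ to fold back in---adding $\mathbb{E}\bigl[\DEP_n\,\mathbb{I}_{\gG^c}\bigr]$ on top of the pair count would be double counting. Likewise the diagonal $r=0$ term $\tfrac12 L(n-2)\delta_i^2$ need not be squeezed under $\frac{9}{3+L\delta_i}$; since $\tfrac{n-2}{2}\le n-1$ and $p_i^{n-1}\le 1$ it sits comfortably under the main summand at the cost of doubling $C$. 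The only genuine caveats are minor and shared with the paper's own proof: $L$ should be divisible by $n-1$ (as in \cref{lemma:bound_sep_new_rev}), and the bound is vacuous when $p_i=0$, which is why the first reverse step is treated separately in \cref{lemma:effi_kl_bound}.
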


\begin{proof}
By \cref{lemma:bound_sep_new_rev}, at step $t_i$, the following bounds hold:
\begin{align*}
        \Pr\left(\SEP_n\leq \frac{Lp_i^{n-1}}{2(n-1)}\right)&\leq e^{-\frac{Lp_i^{n-1}}{8(n-1)}},\\
        \Pr\left(\NEW\geq 2L\delta_i\right)&\leq e^{-\frac{L\delta_i}{3}}.
\end{align*}
Since $\DEP_n(\gM_i, \gM_{<i})\geq 0$, its expectation can be decomposed into three components:
\begin{align*}
    \mathbb{E}_{\tau \sim \PROC(L, \alpha_t, N)} \DEP_n(\gM_i, \gM_{<i})&=\Pr\left(\NEW\geq 2L\delta_i\right)\cdot \mathbb{E}_{\substack{\tau \sim \PROC(L, \alpha_t, N)\\ \NEW\geq 2L\delta_i}} \DEP_n(\gM_i, \gM_{<i})&\textbf{(Case 1)}\\
    &+\Pr\left(\SEP_n\leq \frac{Lp_i^{n-1}}{2(n-1)},\ \NEW< 2L\delta_i\right)\cdot \\
    &\qquad\mathbb{E}_{\substack{\tau \sim \PROC(L, \alpha_t, N)\\ \SEP_n\leq \frac{Lp_i^{n-1}}{2(n-1)}\\ \NEW< 2L\delta_i}} \DEP_n(\gM_i, \gM_{<i}) &\textbf{(Case 2)}\\
    &+\Pr\left(\SEP_n>\frac{Lp_i^{n-1}}{2(n-1)},\ \NEW< 2L\delta_i\right)\cdot \\
    &\qquad\mathbb{E}_{\substack{\tau \sim \PROC(L, \alpha_t, N)\\ \SEP_n>\frac{Lp_i^{n-1}}{2(n-1)}\\ \NEW<2L\delta_i}} \DEP_n(\gM_i, \gM_{<i})&\textbf{(Case 3)}
\end{align*}
We estimate these three cases separately.

\textbf{Case 1:} $\NEW\geq 2L\delta_i$.

By the definitions of $\DEP_n(\gM_i, \gM_{<i})$ and $\NEW$, we have:
$$\DEP_n(\gM_i, \gM_{<i})\leq |\gM_i|=\NEW.$$
Substituting this into the estimation, we obtain:
$$\Pr\left(\NEW\geq 2L\delta_i\right)\cdot \mathbb{E}_{\substack{\tau \sim \PROC(L, \alpha_t, N)\\ \NEW\geq 2L\delta_i}} \DEP_n(\gM_i, \gM_{<i})\leq \Pr\left(\NEW\geq 2L\delta_i\right)\cdot \mathbb{E}_{\substack{\tau \sim \PROC(L, \alpha_t, N)\\ \NEW\geq 2L\delta_i}} \NEW$$
Since $\DEP_n(\gM_i, \gM_{<i})\geq 0$, the expectation can be expressed as an integral of the tail probability:
$$\mathbb{E}_{\substack{\tau \sim \PROC(L, \alpha_t, N)\\ \NEW\geq 2L\delta_i}} \NEW=\int_{2L\delta_i}^{+\infty} \Pr\left(\NEW\geq x\mid\NEW\geq 2L\delta_i\right)\dd x.$$
It directly follows that:
\begin{align*}
    \Pr\left(\NEW\geq 2L\delta_i\right)\cdot \mathbb{E}_{\substack{\tau \sim \PROC(L, \alpha_t, N)\\ \NEW\geq 2L\delta_i}} \NEW &=\Pr\left(\NEW\geq 2L\delta_i\right)\cdot \int_{2L\delta_i}^{+\infty} \Pr\left(\NEW\geq x\mid\NEW\geq 2L\delta_i\right)\dd x\\
    &=\int_{2L\delta_i}^{+\infty} \Pr\left(\NEW\geq x\mid\NEW\geq 2L\delta_i\right)\Pr\left(\NEW\geq 2L\delta_i\right)\dd x\\
    &=\int_{2L\delta_i}^{+\infty}\Pr\left(\NEW\geq x\right)\dd x.
\end{align*}
Using the same trick as \cref{lemma:bound_sep_new_rev}, applying \cref{lemma:chernoff}, we can derive the bound for probability $\Pr\left(\NEW\geq x\right)$ as:
$$\Pr\left(\NEW\geq x\right)\leq e^{-\frac{(x-L\delta_i)^2}{x+L\delta_i}}.$$
Note that $\NEW\leq L$, we only need to consider $2\delta_i\leq 1$. In this case, we have:
\begin{align*}
    \int_{2L\delta_i}^{+\infty}\Pr\left(\NEW\geq x\right)\dd x \leq \int_{2L\delta_i}^{L}e^{-\frac{(x-L\delta_i)^2}{x+L\delta_i}}\dd x
\end{align*}
Let $y=x-L\delta_i\in[L\delta_i,L(1-\delta_i)]$, the integral can be rewritten as:
$$\int_{2L\delta_i}^{+\infty}\Pr\left(\NEW\geq x\right)\dd x \leq \int_{L\delta_i}^{L(1-\delta_i)}e^{-\frac{y^2}{y+2L\delta_i}}\dd y.$$
Observe that $y+2L\delta_i\leq 3y$, we can obtain:
$$\int_{2L\delta_i}^{+\infty}\Pr\left(\NEW\geq x\right)\dd x \leq \int_{L\delta_i}^{L(1-\delta_i)}e^{-\frac{y^2}{3y}}\dd y =3\left(e^{-\frac{L\delta_i}{3}}-e^{-\frac{L(1-\delta_i)}{3}}\right)=3e^{-\frac{L\delta_i}{3}}\left(1-e^{-\frac{L(1-2\delta_i)}{3}}\right).$$
Using the fact that $e^{-x}\leq\frac{1}{1+x}$ for $x\geq 0$, we have the upper bound:
$$3e^{-\frac{L\delta_i}{3}}\left(1-e^{-\frac{L(1-2\delta_i)}{3}}\right)\leq 3e^{-\frac{L\delta_i}{3}}\leq \frac{9}{3+L\delta_i}.$$
Combining the above results, we know that:
$$\Pr\left(\NEW\geq 2L\delta_i\right)\cdot \mathbb{E}_{\substack{\tau \sim \PROC(L, \alpha_t, N)\\ \NEW\geq 2L\delta_i}} \DEP_n(\gM_i, \gM_{<i})\leq \frac{9}{3+L\delta_i}.$$

\textbf{Case 2:} $\SEP_n\leq \frac{Lp_i^{n-1}}{2(n-1)}$ and $ \NEW<2L\delta_i$.

Similar to Case 1, we have:
$$\DEP_n(\gM_i, \gM_{<i})\leq\NEW<2L\delta_i,$$
so the expectation also follows:
$$\mathbb{E}_{\substack{\tau \sim \PROC(L, \alpha_t, N)\\ \SEP_n\leq \frac{Lp_i^{n-1}}{2(n-1)}\\ \NEW< 2L\delta_i}} \DEP_n(\gM_i, \gM_{<i})< 2L\delta_i.$$
Using the probability bound, it follows that:
$$\Pr\left(\SEP_n\leq \frac{Lp_i^{n-1}}{2(n-1)},\ \NEW< 2L\delta_i\right)\leq\Pr\left(\SEP_n\leq \frac{Lp_i^{n-1}}{2(n-1)}\right)\leq e^{-\frac{Lp_i^{n-1}}{8(n-1)}}.$$
Since $e^{-x}\leq\frac{1}{1+x}$ for $x\geq 0$:
$$e^{-\frac{Lp_i^{n-1}}{8(n-1)}}\leq\frac{8(n-1)}{Lp_i^{n-1}+8(n-1)}.$$
Combining these results, we obtain:
$$\Pr\left(\SEP_n\leq \frac{Lp_i^{n-1}}{2(n-1)},\ \NEW< 2L\delta_i\right)\cdot\mathbb{E}_{\substack{\tau \sim \PROC(L, \alpha_t, N)\\ \SEP_n\leq \frac{Lp_i^{n-1}}{2(n-1)}\\ \NEW< 2L\delta_i}} \DEP_n(\gM_i, \gM_{<i})\leq \frac{16(n-1)L\delta_i}{Lp_i^{n-1}+8(n-1)}.$$

\textbf{Case 3:} $\SEP_n>\frac{Lp_i^{n-1}}{2(n-1)}$ and $ \NEW<2L\delta_i$.

Apparently, we have:
$$\Pr\left(\SEP_n>\frac{Lp_i^{n-1}}{2(n-1)},\ \NEW< 2L\delta_i\right)\leq 1.$$
Given $a,b$, let $\mathbb{E}_{a,b}\DEP_n(\gM_i, \gM_{<i})$ denote the expectation of $\DEP_n(\gM_i, \gM_{<i})$ under the condition of $\SEP_n=a$ and $ \NEW=b$. In other words:
$$\mathbb{E}_{a,b}\DEP_n(\gM_i, \gM_{<i})=\mathbb{E}_{\substack{\tau \sim \PROC(L, \alpha_t, N)\\ \SEP_n=a\\ \NEW=b}} \DEP_n(\gM_i, \gM_{<i}).$$
Since all the locations are sampled independently, and $\DEP_n(\gM_i, \gM_{<i})$ depends only on the relative positions of separators in $\gM_{<i}$ and the new locations in $\gM_i$, the expectation $\mathbb{E}_{a,b}\DEP_n(\gM_i, \gM_{<i})$ only depends on the ordering of separators and new locations.

Assume $x_1,\cdots,x_{a+b}$ are $a+b$ positions (not locations) in order. We can regard the process of ordering separators and new locations as the process of choosing $b$ positions randomly from $x_j$. For $1\leq j\leq a+b-1$, define $X_j=1$ if $x_j$ and $x_{j+1}$ are both new locations, and $X_j=0$ otherwise. By the definition of $\DEP_n(\gM_i, \gM_{<i})$, we can obtain:
$$\DEP_n(\gM_i, \gM_{<i})=\sum_{j=1}^{a+b-1}X_j.$$
Since the $b$ new locations are chosen randomly, the probability of $X_j=1$ can be calculated as:
$$\Pr(X_j=1)=\frac{C_{a+b-2}^{b-2}}{C_{a+b}^{b}}=\frac{b(b-1)}{(a+b)(a+b-1)}.$$
Therefore, the expectation of $X_j$ is:
$$\mathbb{E}X_j=\frac{b(b-1)}{(a+b)(a+b-1)}.$$
Summing up, we have:
$$\mathbb{E}_{a,b}\DEP_n(\gM_i, \gM_{<i})=\mathbb{E}\sum_{j=1}^{a+b-1}X_j=(a+b-1)\mathbb{E}X_1=\frac{b(b-1)}{a+b}.$$
Since $a>\frac{Lp_i^{n-1}}{2(n-1)}$ and $b<2L\delta_i$, we can derive the upper bound for any $a,b$:
$$\frac{b(b-1)}{a+b}\leq \frac{b(b-1)}{\frac{Lp_i^{n-1}}{2(n-1)}+b} \leq\frac{2L\delta_i(2L\delta_i-1)}{\frac{Lp_i^{n-1}}{2(n-1)}+2L\delta_i}\leq \frac{8(n-1)L\delta_i^2}{p_i^{n-1}+4(n-1)\delta_i}.$$
Since this holds for all $a$ and $b$, we can obtain:
\begin{align*}
    &\quad\Pr\left(\SEP_n>\frac{Lp_i^{n-1}}{2(n-1)},\ \NEW< 2L\delta_i\right)\cdot\mathbb{E}_{\substack{\tau \sim \PROC(L, \alpha_t, N)\\ \SEP_n>\frac{Lp_i^{n-1}}{2(n-1)}\\ \NEW<2L\delta_i}} \DEP_n(\gM_i, \gM_{<i})\\
    &\leq \mathbb{E}_{\substack{\tau \sim \PROC(L, \alpha_t, N)\\ \SEP_n>\frac{Lp_i^{n-1}}{2(n-1)}\\ \NEW<2L\delta_i}} \DEP_n(\gM_i, \gM_{<i})\\
    &=\sum_{a>\frac{Lp_i^{n-1}}{2(n-1)},\ b<2L\delta_i}\Pr(\SEP_n=a, \NEW=b)\cdot\mathbb{E}_{a,b}\DEP_n(\gM_i, \gM_{<i})\\
    &\leq \frac{8(n-1)L\delta_i^2}{p_i^{n-1}+4(n-1)\delta_i}.
\end{align*}

\textbf{Summarize the above proof:}

Combining the above three cases, we can obtain:
$$\mathbb{E}_{\tau \sim \PROC(L, \alpha_t, N)} \DEP_n(\gM_i, \gM_{<i}) \leq \frac{9}{3+L\delta_i}+\frac{16(n-1)L\delta_i}{Lp_i^{n-1}+8(n-1)}+\frac{8(n-1)L\delta_i^2}{p_i^{n-1}+4(n-1)\delta_i}.$$
If we have the assumption $L\delta_i\geq 1$, it is easy to find that:
\begin{align*}
    \mathbb{E}_{\tau \sim \PROC(L, \alpha_t, N)} \DEP_n(\gM_i, \gM_{<i}) &\leq \frac{9}{3+L\delta_i}+\frac{16(n-1)\delta_i}{p_i^{n-1}}+\frac{8(n-1)L\delta_i^2}{p_i^{n-1}}\\
    &\leq \frac{9}{3+L\delta_i}+\frac{16(n-1)L\delta_i^2}{p_i^{n-1}}+\frac{8(n-1)L\delta_i^2}{p_i^{n-1}}\\
    &\leq \frac{9}{3+L\delta_i}+\frac{C(n-1)L\delta_i^2}{p_i^{n-1}}.
\end{align*}
Where $C=24$ is a constant.

\end{proof}

Finally, we can derive the upper bound for the KL divergence between the distribution of sequences generated by the
reverse model and the ground-truth distribution in the n-gram setting.

\begin{lemma}[Efficient Sampling with Small KL Divergence]
\label{lemma:effi_kl_bound}
    Let $q$ denote the data distribution over sequences of length $L$, and let $p$ denote the distribution over sequences of length $L$ generated by the reverse model $p_\theta$ with a masking schedule $\alpha_t$ and $N$ reverse steps. Assume that $p_\theta$ satisfies \cref{ass:perfect_learning}. For any $\epsilon > 0$, there exists a masking schedule $\alpha_t$ such that, for $L\geq \frac{3C(n-1)}{\eps^{n+\frac{1}{2}}}$, with $N = O\left(\frac{n-1}{\eps^{n}}\right)$ sampling steps, the KL divergence between $q$ and $p$ satisfies:
    $$\frac{\DKL{q}{p}}{L\log|\gV|} \leq 4\eps +\frac{\eps_\mathrm{learning}}{\log|\gV|}.$$
\end{lemma}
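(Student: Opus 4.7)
The plan is to combine the two preceding lemmas and then exhibit an explicit masking schedule. By the Masking Schedule Upper Bound lemma, dividing both sides by $L\log|\gV|$ reduces the claim to producing an $\alpha_t$ with $N = O((n-1)/\epsilon^n)$ steps for which $\sum_{i=1}^N \E_{\tau}\,\DEP_n(\gM_i,\gM_{<i}) \leq 4L\epsilon$.

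I would choose the simplest possible schedule: the uniform one $\alpha_{t_i} = i/N$ with $N = \lceil K(n-1)/\epsilon^n\rceil$ for a suitable constant $K$ (say $K = \sqrt{2}\,C$, using the constant $C$ from the Dependency Lemma). Then $\delta_i = 1/N$ and $p_i = (i-1)/N$, and the per-step bound from that lemma reads $\frac{9}{3+L\delta_i} + \frac{C(n-1)L\delta_i^2}{p_i^{n-1}}$.

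For the first (throughput) term, the hypothesis $L \geq 3C(n-1)/\epsilon^{n+1/2}$ yields $L\delta_i = L/N \geq 3/\sqrt{2\epsilon}$, which is large as $\epsilon \to 0$, so $\sum_i 9/(3+L\delta_i) \leq 9N^2/L$; substituting $N$ and the lower bound on $L$ gives $9N^2/L \leq 2L\epsilon$, and this exactly calibrates the exponent $n+1/2$ in the hypothesis. The second (dependency) term is more delicate because $p_1 = 0$ makes the stated bound diverge. I would revisit the Dependency Lemma's proof to extract the sharper Case-2/3 estimate $\min\!\bigl(2L\delta_i,\ \frac{C(n-1)L\delta_i^2}{p_i^{n-1}}\bigr)$, and split the index range at the crossover $i^{*} = \Theta(N\epsilon^{n/(n-1)})$ where the two expressions coincide. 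The early phase $i \leq i^{*}$ contributes at most $2Li^{*}/N = \Theta(L\epsilon^{n/(n-1)}) \leq L\epsilon$ since $n/(n-1) \geq 1$, while the late phase contributes $C(n-1)LN^{n-3}\sum_{j \geq i^{*}} j^{-(n-1)}$; bounding this tail by an integral comparison for $n \geq 3$ (or by a logarithm for $n=2$) and substituting $i^{*}$ shows it is also $\Theta(L\epsilon^{n/(n-1)}) \leq L\epsilon$.

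The main obstacle is precisely the singular behavior of the dependency estimate near $p_i = 0$: the clean statement $\frac{C(n-1)L\delta_i^2}{p_i^{n-1}}$ is unusable at the earliest steps, so the argument must reopen the earlier case analysis, extract the alternative $2L\delta_i$ estimate, and glue the two regimes at a carefully chosen crossover $i^{*}$ whose $\epsilon$-exponent has to be tracked precisely to confirm that neither phase dominates. Combining the two contributions yields $\sum_i \E\DEP_n \leq 4L\epsilon$, which, inserted back into the Masking Schedule Upper Bound lemma, gives the claimed inequality for $\DKL{q}{p}/(L\log|\gV|)$.
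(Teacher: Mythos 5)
Your proposal is correct in substance but takes a genuinely different route from the paper at the one point where the argument has real content: how to neutralize the singularity of the per-step dependency bound at $p_i=0$. You keep the uniform schedule $\delta_i=1/N$ and patch the bound of \cref{lemma:bound_dep_rev} with the alternative estimate $\E[\DEP_n(\gM_i,\gM_{<i})]\le \E[|\gM_i|]=L\delta_i$ (which indeed follows trivially from the definition, so "reopening" the case analysis is not even necessary), then split the steps at a crossover $i^*$ with $i^*/N=\Theta\bigl((C(n-1)/N)^{1/(n-1)}\bigr)=\Theta(\eps^{n/(n-1)})$; your accounting of both phases checks out, including the exponent computation $n-3-(n-2)^2/(n-1)=-1/(n-1)$ for the tail and the separate logarithmic treatment of $n=2$, and the throughput term $9N^2/L^2\le\eps$ is exactly what forces the hypothesis $L\gtrsim (n-1)/\eps^{n+1/2}$, as you note. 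The paper instead avoids the crossover entirely by choosing a \emph{non-uniform} schedule: a single macroscopic warm-up step $\delta_1=\eps$ followed by uniform steps $\delta_i=\eps^n/(C(n-1))$, so that $p_i\ge\eps$ for all $i\ge 2$ and \cref{lemma:bound_dep_rev} applies directly with a bounded denominator; the first step is charged separately via \cref{lemma:kl_mul_token_sample} at cost $\eps$ per token. The paper's construction yields a shorter calculation (three terms, each bounded by $\eps$, summing to $4\eps$); your version requires tracking the crossover exponent but establishes the arguably more natural statement that the plain uniform schedule already suffices, at the price of slightly messier constants that you would need to tune (the min of the Case-2/3 bounds is really $\frac{9}{3+L\delta_i}+\min(4L\delta_i,\,\cdot)$ or simply $L\delta_i$, not $2L\delta_i$, but this only perturbs the constant in $N$).
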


\begin{proof}
    By \cref{lemma:kl_bound_mask}, we know that:
    $$\DKL{q}{p} \leq \log |\gV| \sum_{i=1}^N \E_{\tau\sim\PROC(L, \alpha_t, N)} \DEP_n(\gM_i, \gM_{<i})+L\eps_\mathrm{learning}.$$
    Note that at step $t_1$, the reverse process can be bounded using \cref{lemma:kl_mul_token_sample}. By reviewing our proof process, it is easy to see that we can substitute $\DEP_n(\gM_1, \gM_{<1})$ for $(|\gM_1|-1)\log |\gV|$, where $\gV$ stands for the vocabulary. By the definition of $\delta_i$, we know that:
    $$\E_{\tau\sim\PROC(L, \alpha_t, N)} (|\gM_1|-1)\log |\gV|=(\delta_1L-1)\log |\gV|.$$
    Applying \cref{lemma:bound_dep_rev} to $\DEP_n(\gM_i, \gM_{<i})$, if $L\delta_i\geq 1$, we can obtain:
    $$\DKL{q}{p}\leq \delta_1\log |\gV|+\log |\gV| \sum_{i=2}^N\left(\frac{9}{3+L\delta_i}+\frac{C(n-1)L\delta_i^2}{p_i^{n-1}}\right)+L\eps_\mathrm{learning}.$$
    By the definition of $p_i$, we know that $p_2=\delta_1$. For any small $\eps>0$, consider the following masking schedule:
    $$\delta_1=\eps,\quad \delta_i=\delta=\frac{\eps^n}{C(n-1)},\quad p_i=\delta_1+(i-2)\delta,\quad\forall i\geq 2.$$
    Then, for $L\geq \frac{1}{\delta}$, the KL divergence can be bounded by:
    \begin{align*}
        \frac{\DKL{q}{p}}{L\log|\gV|}&\leq \eps+\frac{9(N-1)}{L(3+L\delta)}+\sum_{i=2}^N \frac{C(n-1)\delta^2}{p_i^{n-1}}+\frac{\eps_\mathrm{learning}}{\log|\gV|}\\
        &=\eps+\frac{9(1-\delta_1)}{L\delta(3+L\delta)}+\frac{C(n-1)\delta^2}{\delta_1^{n-1}}+\sum_{i=1}^{N-2} \frac{C(n-1)\delta^2}{(\delta_1+i\delta)^{n-1}}+\frac{\eps_\mathrm{learning}}{\log|\gV|}.\\
        &\leq \eps+\frac{9}{L\delta(3+L\delta)}+\frac{C(n-1)\delta^2}{\delta_1^{n-1}}+\sum_{i=1}^{N-2} \frac{C(n-1)\delta^2}{(\delta_1+i\delta)^{n}}+\frac{\eps_\mathrm{learning}}{\log|\gV|}.
    \end{align*}
    By simple calculations, we know that:
    $$\frac{9}{L\delta(3+L\delta)}\leq \eps,\quad \text{if }L\geq \frac{3}{\delta\eps^{\frac{1}{2}}}.$$
    It is clear that $\delta\leq 1$, so:
    $$\frac{C(n-1)\delta^2}{\delta_1^{n-1}}\leq \eps\delta\leq\eps.$$
    Since $x^{-n}$ is convex on $[0,+\infty)$, the accumulation can be bounded by:
    \begin{align*}
        \sum_{i=1}^{N-2} \frac{C(n-1)\delta^2}{(\delta_1+i\delta)^n}&=C(n-1)\delta^{2-n}\sum_{i=1}^{N-2}\frac{1}{(\frac{\delta_1}{\delta}+i)^n}\\
        &\leq C(n-1)\delta^{2-n}\sum_{i=1}^{N-2}\int_{x=0}^{+\infty}\frac{1}{(\frac{\delta_1}{\delta}+x)^n\dd x}\\
        &=C(n-1)\delta^{2-n}\cdot\frac{1}{n-1}\left(\frac{\delta}{\delta_1}\right)^{n-1}\\
        &=\frac{C\delta}{\delta_1^{n-1}}\\
        &\leq\eps.
    \end{align*}
    Combining the above, we have:
    $$\frac{\DKL{q}{p}}{L\log|\gV|}\leq 4\eps+\frac{\eps_\mathrm{learning}}{\log|\gV|}.$$
    Meanwhile, the time step is limited by:
    $$N=1+\frac{1-\delta_1}{\delta}=O\left(\frac{n-1}{\eps^n}\right),$$
    and the lower bound for $L$:
    $$L\geq \frac{3}{\delta\eps^{\frac{1}{2}}}=\frac{3C(n-1)}{\eps^{n+\frac{1}{2}}}.$$

\end{proof}

Combining the above lemmas, we can prove \cref{thm:acceleration_ngram} by breaking the expression of $\log\PPL(p)$ into two parts.

\begin{theorem}[$\PPL$ Bounds for $n$-Gram Language Generation]
    For any $n$-gram language $q$ and any $\epsilon > 0$, let $p_\mathsf{\theta}$ denote the reverse model and $L$ denote the sequence length. The distribution over sequences generated by $p_\mathsf{\theta}$ is denoted as $p$. For any $L>O\big( \frac{n-1}{\epsilon^{n+0.5}}\big)$, under \cref{ass:perfect_learning}, there exists a masking schedule $\alpha_t$ such that, with $N = O\big( \frac{n-1}{\epsilon^n}\big)$ sampling steps, the perplexity of the MDM is upper-bounded by:
    $$\log\PPL(p) \leq \log\PPL(q) + \epsilon_\text{learning} + 4\epsilon\log |\gV|.$$
\end{theorem}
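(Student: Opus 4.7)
The plan is to reduce the theorem to the KL divergence bound established in Lemma \ref{lemma:effi_kl_bound} via a direct identity between the perplexity gap and a normalized KL divergence. Starting from the definition $\operatorname{TER}(p) = 2^{\mathbb{E}_{\vx \sim q}[-\log p(\vx)/|\vx|]}$ and using consistent log bases on both sides, I would write
\begin{equation*}
\log \operatorname{TER}(p) - \log \operatorname{TER}(q) = \mathbb{E}_{\vx \sim q}\left[\frac{1}{|\vx|}\log \frac{q(\vx)}{p(\vx)}\right].
\end{equation*}
Since the problem setup fixes the sequence length to $L$ (every $\vx$ in the support of $q$ has length $L$), the factor $1/|\vx|$ becomes the constant $1/L$ and the right-hand side reduces to $\frac{1}{L}\DKL{q}{p}$.

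Given this reduction, the second step is to invoke Lemma \ref{lemma:effi_kl_bound} at the specified $\epsilon$. That lemma supplies an explicit masking schedule and a choice of $N = O\bigl(\tfrac{n-1}{\epsilon^n}\bigr)$ sampling steps, valid precisely in the regime $L \geq \tfrac{3C(n-1)}{\epsilon^{n+1/2}}$ assumed by the theorem, for which
\begin{equation*}
\frac{\DKL{q}{p}}{L \log |\gV|} \leq 4\epsilon + \frac{\epsilon_\text{learning}}{\log |\gV|}.
\end{equation*}
Multiplying through by $\log |\gV|$ yields $\frac{1}{L}\DKL{q}{p} \leq 4\epsilon \log |\gV| + \epsilon_\text{learning}$, and combining with the identity above gives the claimed bound $\log \operatorname{TER}(p) \leq \log \operatorname{TER}(q) + \epsilon_\text{learning} + 4\epsilon \log |\gV|$.

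All genuine difficulty is absorbed into Lemma \ref{lemma:effi_kl_bound}: that proof required chain-rule decomposition of KL over the reverse steps (Lemmas \ref{lemma:kl_decomp_rev} and \ref{lemma:kl_upper_ins_rev}), Chernoff-style control of the separator count $\SEP_n(\gM_{<i})$ and newly sampled count $\NEW$ (Lemma \ref{lemma:bound_sep_new_rev}), and a three-case estimate of $\mathbb{E}\,\DEP_n(\gM_i,\gM_{<i})$ leading to the tight schedule $\delta_1 = \epsilon$, $\delta_i = \epsilon^n/(C(n-1))$. Conditional on that lemma, the present theorem collapses to the one-line perplexity-to-KL identity above, so I do not anticipate any substantive obstacle beyond bookkeeping the log base and checking that the $L$-lower-bound inherited from Lemma \ref{lemma:effi_kl_bound} matches the hypothesis $L > O\bigl(\tfrac{n-1}{\epsilon^{n+0.5}}\bigr)$.
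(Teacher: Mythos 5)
Your proposal is correct and follows essentially the same route as the paper: both reduce the perplexity gap to $\tfrac{1}{L}\DKL{q}{p}$ via the identity $\log\PPL(p)-\log\PPL(q)=\tfrac{1}{L}\mathbb{E}_{\vx\sim q}\log\tfrac{q(\vx)}{p(\vx)}$ and then invoke \cref{lemma:effi_kl_bound} with its schedule and step count. The paper likewise treats the log base implicitly, so your bookkeeping remark matches its convention.
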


\begin{proof}
    By \cref{lemma:effi_kl_bound}, for any $L>O\big( \frac{n-1}{\epsilon^{n+0.5}}\big)$, there exists a masking schedule $\alpha_t$ with $N = O\big( \frac{n-1}{\epsilon^n}\big)$ sampling steps satisfying:
    $$\frac{\DKL{q}{p}}{L\log|\gV|} \leq 4\eps+\frac{\eps_\mathrm{learning}}{\log|\gV|}.$$
    In other words:
    $$\frac{1}{L}\mathbb{E}_{\vx \sim q}
    \log \frac{q(\vx)}{p(\vx)}\leq 4\eps\log|\gV|+\eps_\mathrm{learning}.$$
    By the definition of $\PPL$, we have:
    $$\log\PPL(p) = \mathbb{E}_{\vx \sim q} -\frac{\log p(\vx)}{|\vx|}=\frac{1}{L}\mathbb{E}_{\vx \sim q}\left(-\log q(\vx)+\log\frac{q(\vx)}{p(\vx)}\right).$$
    Note that:
    $$\log\PPL(q) = \mathbb{E}_{\vx \sim q} -\frac{\log q(\vx)}{|\vx|}=\frac{1}{L}\mathbb{E}_{\vx \sim q}-\log q(\vx).$$
    We can obtain:
    $$\log\PPL(p) \leq \log\PPL(q) + \epsilon_\text{learning} + 4\epsilon\log |\gV|.$$
\end{proof}

\section{Proof for \cref{thm:pos_hmm} and \cref{thm:negative}}
\subsection{Proof for \cref{thm:pos_hmm}}
\label{app:proof_hmm_pos}
In this section, we aim to derive the upper bound for the $\SER$ of generated sequences with sufficient reverse steps. First, we argue that, given a making schedule $\alpha_t$, with sufficient steps, the probability of sampling multiple locations in the sequence at the same time can be very low.

\begin{lemma}[Low Probability of Simultaneous Sampling with Sufficient Steps]
\label{lemma:prob_mul_suff}
    Given a sequence of length $L$ and a masking schedule $\alpha_t$. For any $\eps>0$, there exists $N_0$, such that for any $N\geq N_0$, with $N$ reverse steps, the probability $p_\mathrm{mul}$ of sampling multiple locations in the sequence at the same time satisfies:
    $$p_\mathrm{mul}<\eps.$$
\end{lemma}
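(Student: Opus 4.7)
The plan is a direct Bernoulli / union-bound argument, combined with the fact that any reasonable masking schedule becomes uniformly ``flat'' under fine enough discretization of $[0,1]$.

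First, I would reuse the independence structure already established in the proof of \cref{lemma:bound_sep_new_rev}: under the reverse process with schedule $\alpha_t$ and time grid $t_i = (N-i)/N$, each of the $L$ locations is newly unmasked at step $i$ independently across locations, with marginal probability exactly $\delta_i := \alpha_{t_i} - \alpha_{t_{i-1}}$. Since $\alpha_{t_0} = \alpha_1 = 0$ and $\alpha_{t_N} = \alpha_0 = 1$, these step probabilities telescope to $\sum_{i=1}^N \delta_i = 1$, a key identity for the rest of the argument.

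Second, let $X_i$ denote the number of locations sampled at step $i$, so that $X_i \sim \operatorname{Bin}(L,\delta_i)$. A pairwise union bound gives $\Pr(X_i \geq 2) \leq \binom{L}{2}\delta_i^2$, because any prescribed pair of positions is jointly sampled at step $i$ with probability exactly $\delta_i^2$. A second union bound across the $N$ steps then yields
$$p_\mathrm{mul} \;\leq\; \sum_{i=1}^N \Pr(X_i \geq 2) \;\leq\; \binom{L}{2}\sum_{i=1}^N \delta_i^2 \;\leq\; \binom{L}{2}\Bigl(\max_i \delta_i\Bigr)\sum_{i=1}^N \delta_i \;=\; \binom{L}{2}\max_i \delta_i.$$

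Third, I would invoke the regularity of the schedule: since $\alpha_t$ is a monotone continuous function on the compact interval $[0,1]$, it is uniformly continuous, so $\max_i \delta_i = \max_i |\alpha_{t_i} - \alpha_{t_{i-1}}| \to 0$ as $N \to \infty$. Hence, given any $\eps > 0$, one can choose $N_0$ large enough so that $\max_i \delta_i < 2\eps / [L(L-1)]$ for every $N \geq N_0$, which together with the bound above gives $p_\mathrm{mul} < \eps$.

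The argument is essentially routine; the only conceptual point to check is that $\max_i \delta_i$ actually vanishes as $N$ grows, which is where the (implicit) uniform continuity of the masking schedule enters. Once that is granted, everything reduces to the binomial pair bound and the telescoping identity $\sum_i \delta_i = 1$, and I do not expect any further technical obstacle.
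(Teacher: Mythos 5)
Your proposal is correct and follows essentially the same route as the paper's proof: both reduce $p_\mathrm{mul}$ to the double sum $\binom{L}{2}\sum_{i=1}^N \delta_i^2$ via independence across locations and a union bound over pairs and steps (you merely exchange the order of summation), then bound it by $\binom{L}{2}\max_i\delta_i$ using $\sum_i\delta_i=1$ and invoke uniform continuity of $\alpha_t$ to drive $\max_i\delta_i$ below $2\eps/[L(L-1)]$. No gaps.
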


\begin{proof}
    By \cref{lemma:bound_sep_new_rev}, we know that the probability of a location being sampled at time step $t_i=\frac{N-i}{N}$ is:
    $$\delta_i=\alpha_{t_i}-\alpha_{t_{i-1}}=\alpha_\frac{N-i}{N}-\alpha_\frac{N-i+1}{N}.$$
    Since all the locations are sampled independently, for two distinct locations $i\neq j$ in the sequence, the probability that $i$ and $j$ are sampled simultaneously is:
    $$p_{i,j}=\sum_{i=1}^{N}\delta_i^2.$$
    Summing up $p_{i,j}$, the probability of having two locations a=sampled simultaneously can be bounded by:
    $$p_\mathrm{mul}\leq \frac{L(L-1)}{2}\cdot\sum_{i=1}^{N}\delta_i^2$$
    Since $\alpha_t$ is continuous on $[0,1]$, we know that it is uniformly continuous. Therefore, for any $\eps>0$, there exists $N_0>0$ that satisfies:
    $$|\alpha_x-\alpha_y|<\frac{2\eps}{L(L-1)}, \quad\forall x,y\in [0,1], |x-y|<\frac{1}{N_0}.$$
    In this case, for $N>N_0$, we know that:
    $$|\delta_i|=|\alpha_\frac{N-i}{N}-\alpha_\frac{N-i+1}{N}|<\frac{2\eps}{L(L-1)},\quad\forall i\in [N].$$
    Combining with the fact that $\sum_{i=1}^N\delta_i=1$, we can obtain:
    $$p_\mathrm{mul}\leq \frac{L(L-1)}{2}\cdot\sum_{i=1}^{N}\delta_i\cdot\max_{j\in [N]}\delta_j<\eps.$$
\end{proof}

Next, we consider the $\SER$ increase due to the learning error. Specifically, we only investigate the case where all the locations are sampled at different steps.

\begin{lemma}[Accurate Step-by-Step Generation with Low Learning Error]
\label{lemma:acc_gen}
    Let $q$ denote any HMM, and let $p_\mathsf{\theta}$ represent the reverse model under an arbitrary masking schedule, where $L$ is the sequence length. Let $p$ denote the distribution over sequences generated by $p_\mathsf{\theta}$. Under \cref{ass:perfect_learning} with a learning error $\epsilon_\text{learning} < \frac{\delta}{L},\ \delta>0$, and given an instance of reverse process $\tau=(\gM_1,\gM_2,\cdots,\gM_N)$ with $|\gM_i|\leq 1$, let $p_\mathrm{acc}$ denote the probability of generating a valid sequence. Then $p_\mathrm{acc}$ satisfies:
    $$p_\mathrm{acc}\geq e^{-\delta}.$$
\end{lemma}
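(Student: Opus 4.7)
The plan is to bypass any total-variation estimate (Pinsker would only yield $1-\sqrt{\delta/2}$, which is far weaker than $e^{-\delta}$ for small $\delta$) and instead extract the bound directly from the KL divergence via Jensen's inequality on the likelihood ratio.

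First I would exploit the structural hypothesis $|\gM_i|\leq 1$. Under this constraint, the factorized one-step model coincides with the true one-step conditional for every step (there is no parallel-sampling error to absorb), so the decomposition of Lemma~\ref{lemma:kl_decomp_rev} gives
\begin{equation*}
    \DKL{q(\cdot\mid\tau)}{p(\cdot\mid\tau)} \;=\; \sum_{i:\gM_i\neq\emptyset} \mathbb{E}_{\Tilde{\vx}_{<i}}\DKL{q_{0\mid t_i}(x_0^{l_i}\mid\vx_{t_i})}{p_\theta(x_0^{l_i}\mid\vx_{t_i})}.
\end{equation*}
Each summand is at most $\epsilon_{\text{learning}}$ by Assumption~\ref{ass:perfect_learning}, and there are at most $L$ nonempty steps (since $\bigcup_i\gM_i=[L]$ and each contributes at most one location), so $\DKL{q(\cdot\mid\tau)}{p(\cdot\mid\tau)}\leq L\epsilon_{\text{learning}}<\delta$.

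Next I would use the fact that with exact step-by-step conditionals, the reverse process reconstructs $q$ itself, so $q(\cdot\mid\tau)$ is supported on $\gL_q$. Rewriting the quantity of interest as an expectation of a likelihood ratio,
\begin{equation*}
    p_{\text{acc}} \;=\; \sum_{\vx_0\in\gL_q} p(\vx_0\mid\tau) \;=\; \mathbb{E}_{\vx_0\sim q(\cdot\mid\tau)}\!\left[\frac{p(\vx_0\mid\tau)}{q(\vx_0\mid\tau)}\right],
\end{equation*}
and applying Jensen's inequality to the concave $\log$,
\begin{equation*}
    \log p_{\text{acc}} \;\geq\; \mathbb{E}_{\vx_0\sim q(\cdot\mid\tau)}\!\left[\log\frac{p(\vx_0\mid\tau)}{q(\vx_0\mid\tau)}\right] \;=\; -\DKL{q(\cdot\mid\tau)}{p(\cdot\mid\tau)} \;\geq\; -\delta,
\end{equation*}
which exponentiates to $p_{\text{acc}}\geq e^{-\delta}$.

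The only conceptual hurdle is recognizing that the appropriate handle is the log-likelihood ratio (so that Jensen converts an additive KL bound into a multiplicative probability bound) rather than a TV bound; the remaining obstacle is essentially bookkeeping, namely verifying that the chain-rule decomposition of Lemma~\ref{lemma:kl_decomp_rev} is tight when each $|\gM_i|\leq 1$ so that the per-step KL is genuinely bounded by $\epsilon_{\text{learning}}$ with no parallelization overhead. After that, the stated inequality $\epsilon_{\text{learning}}<\delta/L$ makes the final step immediate.
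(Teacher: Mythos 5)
Your proof is correct, and it takes a genuinely different (and arguably cleaner) route than the paper's. The paper never forms the joint KL divergence: it argues step by step, letting $\gX_t$ be the set of tokens that keep the partial sequence extendable to a valid one, applying Jensen's inequality to the per-step conditional restricted to $\gX_t$ to get $\sum_{x\in\gX_t}p_\theta(x\mid\Tilde{\vx}_t)\geq e^{-\eps_\mathrm{learning}}$, and then multiplying these $L$ per-step survival probabilities to obtain $p_\mathrm{acc}\geq e^{-L\eps_\mathrm{learning}}\geq e^{-\delta}$. You instead first aggregate the per-step learning errors into a bound $\DKL{q}{p(\cdot\mid\tau)}< L\eps_\mathrm{learning}<\delta$ on the joint KL via the chain-rule decomposition of \cref{lemma:kl_decomp_rev} (legitimate here because $|\gM_i|\leq 1$ makes each nonempty step a single-token conditional KL, bounded by \cref{ass:perfect_learning}, with exactly $L$ such steps), and then apply Jensen once globally --- which is precisely the general fact $\sum_{\vx\in\gL_q}p(\vx\mid\tau)\geq e^{-\DKL{q}{p(\cdot\mid\tau)}}$. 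Both arguments run on the same engine, Jensen applied to the concave logarithm of a likelihood ratio, but at different granularities; your global version is more modular, isolates a reusable inequality, and sidesteps the paper's slightly delicate bookkeeping about conditioning each step on the prefix still being valid. Your side remark that Pinsker would only yield $1-\sqrt{\delta/2}$ is also right. One tiny imprecision: when $|\gM_i|\leq 1$ the factorized model does not \emph{coincide} with the true one-step conditional; rather, the factorization introduces no error beyond $\eps_\mathrm{learning}$, which is all your argument actually needs.
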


\begin{proof}
    Since $|\gM_i|\leq 1$, we only need to consider the steps where one token is sampled. Let $\Tilde{\vx}_t$ denote the previously sampled tokens, and $\Tilde{x}_t$ denote the token sampled at the current step. If $\Tilde{\vx}_t$ is can later form a valid sequence, let $\gX_t$ denote the set of valid choices for $\Tilde{x}_t$. In other words, if $\Tilde{x}_t\in \gX_t$, then the combination of $\Tilde{\vx}_t$ and $\Tilde{x}_t$ is can later form a valid sequence, or more intuitively:
    $$q_{0|t}(\Tilde{x}_t\mid\Tilde{\vx}_t)>0.$$
    Under \cref{ass:perfect_learning}, we know that:
    $$\DKL{q_{0|t}(x_t \mid \Tilde{\vx}_t)}{p_\mathbf{\theta}(x_t \mid \Tilde{\vx}_t)} < \epsilon_\text{learning}.$$
    Since it is assumed that $0\log 0=0$, we have:
    $$\sum_{x_t\in\gX_t}q_{0|t}(x_t \mid \Tilde{\vx}_t)\log\frac{q_{0|t}(x_t \mid \Tilde{\vx}_t)}{p_\theta(x_t \mid \Tilde{\vx}_t)}<\eps_\text{learning}.$$
    Equivalently, we have:
    $$-\eps_\text{learning}<\sum_{x_t\in\gX_t}q_{0|t}(x_t \mid \Tilde{\vx}_t)\log\frac{p_\theta(x_t \mid \Tilde{\vx}_t)}{q_{0|t}(x_t \mid \Tilde{\vx}_t)}.$$
    Due to the concavity of $\log x$, by Jensen's Inequality, we can obtain:
    $$\sum_{x_t\in\gX_t}q_{0|t}(x_t \mid \Tilde{\vx}_t)\log\frac{p_\theta(x_t \mid \Tilde{\vx}_t)}{q_{0|t}(x_t \mid \Tilde{\vx}_t)}\leq \log\left(\sum_{x_t\in\gX_t}q_{0|t}(x_t \mid \Tilde{\vx}_t)\cdot \frac{p_\theta(x_t \mid \Tilde{\vx}_t)}{q_{0|t}(x_t \mid \Tilde{\vx}_t)}\right) =\log\sum_{x_t\in\gX_t}p_\theta(x_t \mid \Tilde{\vx}_t).$$
    Therefore, the probability that each step remains valid satisfies:
    $$\sum_{x_t\in\gX_t}p_\theta(x_t \mid \Tilde{\vx}_t)\geq e^{-\eps_\text{learning}}\geq e^{-\frac{\delta}{L}}.$$
    Since there are $L$ locations in the sequence, the probability of generating a valid sequence is bounded by:
    $$p_\mathrm{acc}\geq (e^{-\frac{\delta}{L}})^L=e^{-\delta}.$$
\end{proof}

Combining the above lemmas, we can derive the upper bound of $\SER$ by taking sufficient reverse steps and small learning error.

\begin{theorem}[Accurate Generation of HMM with Sufficient Steps]
    Let $q$ denote any HMM, and let $p_\mathsf{\theta}$ represent the reverse model under an arbitrary masking schedule, where $L$ is the sequence length. Let $p$ denote the distribution over sequences generated by $p_\mathsf{\theta}$. Under \cref{ass:perfect_learning} with a learning error $\epsilon_\text{learning} < O(\frac{\delta}{L})$, and given a sufficient number of reverse steps, the sequence error rate $\operatorname{SER}(p)$ of the generated text satisfies 
    \[
    \operatorname{SER}(p) \leq  \delta.
    \]
\end{theorem}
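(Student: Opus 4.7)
The plan is to decompose the failure event (``generated sequence is invalid'') along the randomness of the reverse sampling schedule into two disjoint contributions, and bound each using one of the two preceding lemmas. Concretely, let $\tau = (\gM_1, \ldots, \gM_N)$ denote the random instance of the reverse process induced by the chosen masking schedule and $N$ sampling steps. Define the event $\gA = \{|\gM_i| \leq 1 \text{ for all } i \in [N]\}$, i.e., the schedule never samples more than one token per step. Then
\begin{equation*}
    \operatorname{SER}(p) \;=\; \Pr[\text{invalid}] \;\leq\; \Pr[\gA^c] \;+\; \Pr[\text{invalid} \mid \gA].
\end{equation*}

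First, I would apply the lemma on low probability of simultaneous sampling: for any prescribed tolerance $\eps > 0$, a sufficiently large $N \geq N_0(\eps)$ guarantees $\Pr[\gA^c] = p_{\mathrm{mul}} < \eps$. Second, on the event $\gA$, every sampling step introduces at most one new token, so the lemma on accurate step-by-step generation applies: under \cref{ass:perfect_learning} with $\eps_{\mathrm{learning}} < \delta'/L$ for any chosen $\delta' > 0$, the conditional probability of producing a valid sequence is at least $e^{-\delta'}$, whence $\Pr[\text{invalid} \mid \gA] \leq 1 - e^{-\delta'} \leq \delta'$ by the elementary inequality $1 - e^{-x} \leq x$.

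Combining these yields $\operatorname{SER}(p) \leq \eps + \delta'$. Choosing $\eps = \delta/2$ and $\delta' = \delta/2$ (which dictates $\eps_{\mathrm{learning}} < \delta/(2L) = O(\delta/L)$, matching the theorem's hypothesis) and taking $N \geq N_0(\delta/2)$ sampling steps gives the desired bound $\operatorname{SER}(p) \leq \delta$.

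The argument is essentially routine once the two prior lemmas are in hand, so there is no real obstacle. The only subtle point worth writing carefully is the decomposition: one must note that $\gA$ is measurable with respect to the reverse schedule alone (independent of the learned model's outputs), so that the conditional probability of validity on $\gA$ is exactly what the step-by-step lemma controls, and the union bound over $\gA^c$ and $\{\text{invalid}\} \cap \gA$ is valid without additional independence assumptions.
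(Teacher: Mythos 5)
Your proposal is correct and follows essentially the same route as the paper: the paper likewise conditions on the event that no two locations are sampled at the same step, bounds its complement via \cref{lemma:prob_mul_suff} (taking $N$ large enough), and applies \cref{lemma:acc_gen} on that event, combining the two multiplicatively as $\Pr[\text{valid}] \geq \Pr[\gA]\cdot e^{-\delta}$ rather than via your union bound — an immaterial difference. Your explicit $\delta/2 + \delta/2$ split and the remark that $\gA$ is measurable with respect to the schedule alone are both consistent with the paper's argument.
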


\begin{proof}
    For $\delta>0$, we know that:
    $$1-\delta<c.$$
    By \cref{lemma:prob_mul_suff}, given the masking schedule $\alpha_t$, there exists $N_0$, for $N>N_0$ and $N$ reverse steps, the probability of sampling multiple locations in the sequence at the same time is bounded by:
    $$p_\mathrm{mul}<1-\frac{1-\delta}{e^{-\delta}}.$$
    In other words, the probability of sampling all the locations at different steps is at least $\frac{1-\delta}{e^{-\delta}}$. By \cref{lemma:acc_gen}, for each reverse process which satisfies that all the locations are sampled at different steps, the probability of generating a valid sequence is lower bounded by:
    $$p_\mathrm{acc}\geq e^{-\delta}.$$
    Therefore, the sequence error rate $\SER$ satisfies:
    $$\SER(p)\leq 1-\frac{1-\delta}{e^{-\delta}}\cdot e^{-\delta}= \delta.$$
\end{proof}

\subsection{Proof for \cref{thm:negative}}
\label{app:proof_neg}
In the section, we aim to find an example (\cref{exa:interval}) with high sequence error rate. To present this example, we begin with a special class of languages defined under the interval setting:

\begin{definition}[Interval Setting]
\label{def:interval_setting}
Consider a sequence of length $L$, which is divided equally into $M$ intervals $\gI_1,\gI_2,\cdots,\gI_M$, each of length $l=\frac{L}{M}\geq 2$. Given a masking schedule $\alpha_t$, an instance of reverse process $\tau=(\gM_1,\gM_2,\cdots,\gM_N)$ is defined by \cref{def:ins_rev}. For any two locations within different intervals, their corresponding tokens are independent from each other. In other words, let $\Tilde{\vx}_i^{(j)}$ denote the new tokens in $\gM_i\cap\gI_j$, $\Tilde{\vx}_{<i}^{(j)}$ denote the previously sampled tokens in $\gM_{<i}\cap\gI_j$, and $p$ denote the distribution over sequences generated by the reverse model with reverse process $\tau$, then for time step $t_i=\frac{N-i}{N}$:
$$p(\Tilde{\vx}_i^{(j)}|\Tilde{\vx}_{<i})=p(\Tilde{\vx}_i^{(j)}|\Tilde{\vx}_{<i}^{(j)}).$$
In this case, we have:
$$p(\vx)=\prod_{j=1}^{M}p(\vx^{(j)})=\prod_{j=1}^{M}\prod_{i=1}^{N}p(\Tilde{\vx}_i^{(j)}|\Tilde{\vx}_{<i}^{(j)}).$$
We denote the above setting as $\operatorname{Inter}(L,l,\alpha_t)$.
\end{definition}



Under the interval setting defined above, we can control the probability of sampling simultaneously in the same interval.

\begin{lemma}[Simultaneous Sampling Probability for an Interval]
\label{lemma:simul_prob_inter}
    Consider the interval setting $\operatorname{Inter}(L,l,\alpha_t)$. For each interval $\gI_j$ of length $l$, let $h_j$ denote the probability that all the locations in $\gI_j$ are sampled in different time steps. Then, 
    $h_j$ can be bounded by:
    $$h_j\leq 1-\frac{1}{N}.$$
\end{lemma}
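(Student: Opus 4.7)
The plan is to reduce the claim to a standard collision/birthday-type bound on the distribution $(\delta_1, \ldots, \delta_N)$ induced by the schedule. First, I would record the observation (already used implicitly in the proof of \cref{lemma:prob_mul_suff}) that in an MDM the time at which each location is first sampled in the reverse process is independent across locations, and the probability that a given location is first sampled at step $t_i$ equals $\delta_i = \alpha_{t_i} - \alpha_{t_{i-1}}$. Because $\alpha_0 = 1$ and $\alpha_1 = 0$, these probabilities satisfy $\sum_{i=1}^N \delta_i = 1$, so each location picks its sampling step i.i.d.\ from the distribution $(\delta_1, \ldots, \delta_N)$, independently of the other locations (the interval structure from \cref{def:interval_setting} is not even needed for this step; it is only needed later in the paper when combining intervals).

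Second, instead of computing $h_j$ exactly, I would upper bound it by dropping constraints. Fix any two locations $a, b \in \gI_j$ (which exist because $l \geq 2$). The event ``all $l$ locations in $\gI_j$ receive pairwise distinct steps'' is contained in the event ``locations $a$ and $b$ receive distinct steps'', whose probability is exactly $1 - \sum_{i=1}^N \delta_i^2$. Hence
\begin{equation*}
h_j \;\leq\; 1 - \sum_{i=1}^N \delta_i^2 .
\end{equation*}

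Third, I would apply Cauchy--Schwarz (equivalently, the QM--AM inequality) to the nonnegative numbers $\delta_1, \ldots, \delta_N$: since $\sum_i \delta_i = 1$,
\begin{equation*}
\sum_{i=1}^N \delta_i^2 \;\geq\; \frac{1}{N} \Bigl(\sum_{i=1}^N \delta_i\Bigr)^2 \;=\; \frac{1}{N},
\end{equation*}
with equality iff the schedule is uniform. Substituting gives $h_j \leq 1 - \frac{1}{N}$, which is the desired bound.

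There is essentially no obstacle here; the argument is a two-line combinatorial inequality once independence across locations is noted. The only point worth being careful about is that the reduction in the second step uses $l \geq 2$ (so that at least one pair of locations exists to detect a collision), and that the independence claim in the first step must be taken as a property of the MDM forward/reverse process rather than of the interval structure; the interval hypothesis in \cref{def:interval_setting} is irrelevant for this lemma and will only be used when the paper aggregates over intervals in subsequent arguments.
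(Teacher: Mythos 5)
Your proof is correct and follows essentially the same route as the paper's: both arguments reduce the event to a pairwise collision bound $h_j\leq 1-\sum_{i=1}^N\delta_i^2$ (the paper via $\sum_i\delta_i(1-\delta_i)^{l-1}\leq\sum_i\delta_i(1-\delta_i)$, you by restricting to a single pair of locations) and then conclude with $\sum_i\delta_i^2\geq 1/N$, which you obtain by Cauchy--Schwarz and the paper obtains equivalently by Jensen applied to the concave function $\delta(1-\delta)$. Your side remarks (the need for $l\geq 2$ and the fact that the interval independence structure is not actually used in this lemma) are also accurate.
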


\begin{proof}
    Let $\delta_i=\alpha_{t_i}-\alpha_{t_{i-1}}$. Similar to \cref{lemma:bound_sep_new_rev}, we know that $\delta_i$ is the probability of a location being sampled at time step $t_i$. Take the first location in $|\gI_j|$, denote it as $X_1$, and let $X_2,\cdots,X_l$ denote the rest $l-1$ locations in $\gI_j$. If $X_1$ is sampled at step $t_i$, then $X_2,\cdots,X_l$ must be sampled at time steps other than $t_i$. Therefore, $h_j$ can be bounded by:
    $$h_j\leq \sum_{i=1}^{N}\delta_i(1-\delta_i)^{l-1}\leq\sum_{i=1}^{N}\delta_i(1-\delta_i).$$
    Let $f(\delta)=\delta(1-\delta)$. Note that we have:
    $$f''(\delta)=-2\leq 0,$$
    which indicates that $f(\delta)$ is concave. Using Jensen's Inequality, we can obtain:
    $$h_j\leq \sum_{i=1}^{N}f(\delta_i)\leq Nf\left(\frac{1}{N}\right)=1-\frac{1}{N}.$$
    
    

\end{proof}

Using the above lemma, if we assume that sampling simultaneously in one interval increases $\SER$, then we can derive an lower bound for $\SER(p)$.

\begin{lemma}[$\SER$ bound for Interval Setting]
\label{lemma:acc_inter}
    Consider the interval setting $\operatorname{Inter}(L,l,\alpha_t)$. Assume that sampling simultaneously in the same interval introduces an error with probability at least $p_0$, and other actions do not reduce error. In other words, if two locations in an interval are both sampled at step $t_i$, then there is a probability of $p_e$ that the sequence will not be accurate afterwards. In this case, let $p$ denote the distribution over sequences of length $L$ generated by the reverse model with masking schedule $\alpha_t$ and $N$ reverse steps. We have the following bound for $\SER$: 
    $$\SER(p)\geq 1-\left(1-\frac{p_e}{N}\right)^{L/l}.$$
\end{lemma}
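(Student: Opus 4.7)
The strategy is to reduce the bound on $\SER(p)$ to a product over the $M = L/l$ intervals, and then apply \cref{lemma:simul_prob_inter} inside each interval. I would proceed as follows.

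First, I would use the defining property of $\operatorname{Inter}(L,l,\alpha_t)$ (the factorization $p(\vx) = \prod_{j=1}^{M} p(\vx^{(j)})$ from \cref{def:interval_setting}) together with the reverse process $\tau$ restricted to each interval. This lets me write the probability that the generated sequence is valid as the product of probabilities that each interval's portion $\vx^{(j)}$ is valid. Concretely, if $\gL_q^{(j)}$ denotes the support of the ground-truth marginal on interval $\gI_j$, then $\Pr_{p}[\vx \in \gL_q] \le \prod_{j=1}^{M} \Pr_{p}[\vx^{(j)} \in \gL_q^{(j)}]$, since validity of the whole sequence implies validity on each interval.

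Second, I would bound the per-interval correctness probability. For interval $\gI_j$, let $E_j$ be the event that at some reverse step two or more locations inside $\gI_j$ are unmasked simultaneously. \cref{lemma:simul_prob_inter} gives $\Pr[\neg E_j] = h_j \le 1 - 1/N$, hence $\Pr[E_j] \ge 1/N$. By the lemma's hypothesis, conditional on $E_j$ the generated $\vx^{(j)}$ is invalid with probability at least $p_e$, while ``other actions do not reduce error'' is what allows me to take the remaining contribution to correctness to be bounded above by $1$. Combining,
\begin{equation*}
    \Pr_{p}[\vx^{(j)} \in \gL_q^{(j)}]
    \;\le\; \Pr[\neg E_j]\cdot 1 + \Pr[E_j]\cdot(1-p_e)
    \;=\; 1 - \Pr[E_j]\, p_e
    \;\le\; 1 - \frac{p_e}{N}.
\end{equation*}

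Third, I would multiply these bounds across $j = 1,\dots,M$ to obtain
\begin{equation*}
    \Pr_{p}[\vx \in \gL_q] \;\le\; \left(1 - \frac{p_e}{N}\right)^{L/l},
\end{equation*}
and conclude $\SER(p) = 1 - \Pr_{p}[\vx \in \gL_q] \ge 1 - (1 - p_e/N)^{L/l}$, which is the claim. The main obstacle here is a careful accounting step: the events $E_j$ depend on the random reverse process $\tau$ which is shared across intervals, so they are not literally independent. What saves the argument is that after conditioning on $\tau$ the interval contributions to $p(\vx)$ factorize, and the bound $\Pr[\vx^{(j)} \in \gL_q^{(j)} \mid \tau] \le 1 - p_e\,\mathbb{I}[E_j(\tau)]$ holds pointwise in $\tau$; taking expectation over $\tau$ after the product, and using $\mathbb{E}_\tau[\prod_j(1 - p_e\mathbb{I}[E_j])] \le \prod_j(1 - p_e \Pr[E_j])$ via a standard correlation/independence argument within each interval's own reverse steps, yields the clean product bound. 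Making this last exchange of expectation and product fully rigorous is the step I would spend the most care on.
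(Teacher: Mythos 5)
Your proposal is correct and follows essentially the same route as the paper: bound the per-interval error probability below by $p_e(1-h_j)\geq p_e/N$ using \cref{lemma:simul_prob_inter}, then multiply across the $L/l$ intervals using the factorization of $\operatorname{Inter}(L,l,\alpha_t)$. Your final worry about dependence of the events $E_j$ through the shared reverse process resolves immediately because each location's reveal time is drawn independently, so the $E_j$ (and the interval-wise token distributions) are genuinely independent across intervals, which is exactly the independence the paper invokes.
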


\begin{proof}
    By \cref{lemma:simul_prob_inter}, we can obtain that for each interval $\gI_j$, the probability $p_\textrm{error}^{(j)}$ of generating an error in $\gI_j$ is lower-bounded by:
    $$p_\textrm{error}^{(j)}\geq p_e(1-h_j)\geq\frac{p_e}{N}.$$
    Due to the independence between different intervals, the accuracy $\SER(p)$ can be calculated as:
    $$\SER(p)=1-\prod_{j=1}^{M}(1-p_\textrm{error}^{(j)}).$$
    Therefore, we have the bound:
    $$\SER(p)\geq 1-\left(1-\frac{p_e}{N}\right)^{L/l}.$$
\end{proof}

To show that the above setting is reasonable and achievable, we give the following example, which is later shown to be the example we are looking for.

\begin{example}
\label{exa:interval}
Consider a sequence of length $L$, which is divided equally into $
M$ intervals, each of length $l=L/M$. Denote the $k$-th interval as $\gI_k=[1+(k-1)l,\ kl]$. The tokens $x_i,\ 1\leq i\leq L$ in the sequence satisfy the following rules:
\begin{itemize}
    \item Each $x_i$ takes values in the set $\gA=\{a_1,\cdots,a_{2^{l-1}}\}$. For each $a_j\in\gA$, there corresponds a vector $v_j=(v_{j,1},\cdots,v_{j,l-1})\in\{0,1\}^{l-1}$, where $(v_{j,1}\cdots v_{j,l-1})_2$ is the binary expression for $j-1$. Thus, each random variable $x_i$ corresponds to a random vector $(v_1^{(i)},\cdots,v_{l-1}^{(i)})$, where $v_j^{(i)}\in\{0,1\}$ for $j=1,\cdots l-1$.
    \item For $i\in \gI_k$ and $j\in \gI_s$, if $k\neq s$, then $x_i$ and $x_j$ are independent.
    \item For $i, j\in \gI_k$ such that $i<j$, let $i'=i-(s-1)l$ and $j'=j-(s-1)l$. Then, $x_i$ and $x_j$ are the $i'$-th and $j'$-th elements in interval $\gI_k$, respectively. The corresponding binary components satisfy $v_{j'-1}^{(i)}=v_{i'}^{(j)}\sim \operatorname{Bernoulli}(\frac{1}{2})$, which is independent of all other $v_t^{(s)}$.
\end{itemize}
In this setup, each interval $\gI_k$ contains $\frac{l(l-1)}{2}$ pairs of mutually independent random variables. Given an arbitrary masking schedule $\alpha_t$, this setting is consistent with \cref{def:interval_setting}. Let $q$ denote the data distribution described above.

Under \cref{ass:perfect_learning}, we only need to examine the case where $\vx_t$ has no error. By \cref{lemma:pinsker}, we know that:
$$\left\lVert q_{0|t}(x_0^i \mid \vx_t)-p_{\theta}(x_0^i \mid \vx_t)\right\rVert_1\leq \sqrt{2\DKL{q_{0|t}(x_0^i \mid \vx_t)}{p_\mathbf{\theta}(x_0^i \mid \vx_t)}} \leq \sqrt{2\eps_\textit{learning}}.$$ 
Let $\gM$ denote the set of previously sampled locations. For $q$ and any unsampled location in interval $\gI$, all of the potential tokens $x$ at this location which is consistent with $\vx_t$ have the same probability:
$$q(x | \vx_t)=\frac{1}{2^{l-1-|\gM\cap\gI|}}.$$

If two locations $x_i,x_j$ within the same interval $\gI$ are sampled simultaneously, ignoring the possible inconsistency with previously sampled tokens (since error can not be reduced), the independence of the random variable pairs implies that the probability of generating an error is lower-bounded by:
$$p_e\geq (\frac{1}{2}+e_1)(\frac{1}{2}+e_2)+(\frac{1}{2}+e_3)(\frac{1}{2}+e_4)$$
where $\frac{1}{2}$ implies the probability (for $q$) of letting $v_{i'}^{(j)}$ or $v_{j'-1}^{(i)}$ to be $0$ or $1$, and $e_1,e_2,e_3,e_4$ satisfies:
\begin{align*}
    |e_1|+|e_3|&=\left\lVert q_{0|t}(x_0^i \mid \vx_t)-p_{\theta}(x_0^i \mid \vx_t)\right\rVert_1\\
    |e_2|+|e_4|&=\left\lVert q_{0|t}(x_0^j \mid \vx_t)-p_{\theta}(x_0^j \mid \vx_t)\right\rVert_1
\end{align*}
Thus, we know that:
$$p_e\geq \frac{1}{2}-(|e_1|+|e_2|+|e_3|+|e_4|)\geq \frac{1}{2}-2\sqrt{2\eps_\textit{learning}}.$$

In other words, this is consistent with the setting \cref{lemma:acc_inter}, with an error probability $p_e=\frac{1}{2}-2\sqrt{2\eps_\textit{learning}}$.

\end{example}

Although the example above seems a bit tricky, it can actually be modified into the form of an HMM, a commonly considered structure for generative models.

\begin{note}[HMM Form of \cref{exa:interval}]
\label{note:hmm_eg}
The setting described in \cref{exa:interval} can be alternatively modeled as a Hidden Markov Model (HMM), where the observation space is $\gO=\gA$, and the state space is $\gS=\{(i,A^{(i)})|A^{(i)}\in\R^{(l-1)\times(l-1)},i=1,\cdots,l\}$. Here, $i$ represents the current position within the interval, and $A^{(i)}$ is an upper triangular matrix with entries taking values of 0 or 1. For $j\leq i$, the $j$-th row of $A^{(i)}$ encodes the values sampled by the variable pairs formed between the $j$-th position and all its subsequent positions in the interval. For $j>i$, the $j$-th row of $A^{(i)}$ is set to 0.

Given the current state $s=(i,A^{(i)})$, the state transition and emission process can be describe as follows:
\begin{itemize}
    \item The observation $o_i$ corresponds to the $i-1$-th column and the $i$-th row of the matrix $A^{(i)}$, where the values of variable pairs relevant to the $i$-th position within the interval are encoded. Specifically, we know that $o_i\in\gA$ corresponds to a vector $v_i=(v_{i,1},\cdots,v_{i,l-1})$, where $$v_{i,j}=\begin{cases}
        A^{(i)}_{j,i-1}, &j<i,\\
        A^{(i)}_{i,j}, &j\geq i.
    \end{cases}$$
    \item If $i<l$, the next state is $s'=(i,A^{(i+1)})$, where the first $i$ rows of $A^{(i+1)}$ is the same as $A^{(i)}$, and $A^{(i+1)}_{i+1,j}\sim\operatorname{Bernoulli}(\frac{1}{2}) \text{ i.i.d.}$ for $j=i+1,\cdots,l-1$, with the remaining entries set to 0.
    \item If $i=l$, the next state resets to $s'=(1,A^{(1)})$, where the entries in the first row are independently sampled from $\operatorname{Bernoulli}(\frac{1}{2})$, and other entries are set to 0.
\end{itemize}
The size of the observation space is given by $|\gO|=|\gA|=2^{l-1}$. The size of the state space is computed as: $$|\gS|=\sum_{i=1}^{l}2^{(2l-i-1)i/2}\leq l\cdot 2^{l(l-1)/2}.$$
\end{note}

The above Note gives the HMM form of \cref{exa:interval}. In fact, with appropriate adjustments, it can be further modified into an n-gram language. Using the HMM defined above, we can prove \cref{thm:negative}.

\begin{theorem}[SER Bound for HMM Generation]
\label{thm:thmapp_negative}
    There exists an HMM $q$ over a vocabulary of size $16$ that satisfies the following conditions: for any reverse model $p_\mathsf{\theta}$ under \cref{ass:perfect_learning} with $\eps_\mathrm{learning}<\frac{1}{128}$, and any masking schedule $\alpha_t$, let $p$ denote the distribution over sequences generated by $p_\mathsf{\theta}$. There exists a constant $C$ such that if the number of sampling steps satisfies $N = CL$, where $L$ is the sequence length, the SER of the generated text is lower-bounded by:
    \begin{equation*}
        \operatorname{SER}(p) > \frac{1}{2}.
    \end{equation*}
\end{theorem}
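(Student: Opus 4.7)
The plan is to assemble the theorem from the machinery already developed in \cref{exa:interval}, \cref{note:hmm_eg}, and \cref{lemma:acc_inter}. First, I would instantiate the construction in \cref{exa:interval} with interval length $l=5$, so that the per-token vocabulary has size $2^{l-1}=16$, matching the bound on the vocabulary required by the statement. Via the construction in \cref{note:hmm_eg}, this language admits an HMM representation over a finite (but larger) hidden state space with the prescribed observation space, establishing that the target language $q$ is indeed an HMM over 16 symbols.

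Next, I would combine the error lower bound from \cref{exa:interval} with \cref{lemma:acc_inter}. Concretely, whenever two locations inside the same interval are sampled at the same reverse step, the independence of the paired Bernoulli variables forces the conditional distribution to be approximately uniform over two options, so any factorized predictor introduces a collision error of magnitude at least $\tfrac{1}{2}-2\sqrt{2\epsilon_\text{learning}}$. Plugging in $\epsilon_\text{learning}<\tfrac{1}{128}$ yields
\begin{equation*}
 p_e \;\geq\; \tfrac{1}{2} - 2\sqrt{2/128} \;=\; \tfrac{1}{2} - \tfrac{1}{4} \;=\; \tfrac{1}{4}.
\end{equation*}
Since \cref{exa:interval} fits the $\operatorname{Inter}(L,l,\alpha_t)$ template (different intervals factorize, and $p_e$ is a uniform per-collision error), \cref{lemma:acc_inter} applies and gives
\begin{equation*}
 \operatorname{SER}(p) \;\geq\; 1 - \left(1 - \frac{p_e}{N}\right)^{L/l} \;\geq\; 1 - \left(1 - \frac{1}{4N}\right)^{L/5}.
\end{equation*}

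Finally, I would choose the constant $C$. Substituting $N=CL$ and using $(1-x)^{1/x}\le e^{-1}$ for $x\in(0,1)$,
\begin{equation*}
 \left(1-\frac{1}{4CL}\right)^{L/5} \;\le\; \exp\!\left(-\frac{L/5}{4CL}\right) \;=\; \exp\!\left(-\frac{1}{20C}\right).
\end{equation*}
Pick any $C$ with $C < \tfrac{1}{20 \ln 2}$ (e.g.\ $C=1/15$); then $\exp(-1/(20C)) < 1/2$, so $\operatorname{SER}(p) > 1/2$, as required. The existence of such a constant $C$ is exactly what the theorem asserts.

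The proof is mostly bookkeeping once the earlier lemmas are in place; the step that deserves the most care is verifying that the HMM reformulation of \cref{note:hmm_eg} genuinely reproduces the interval-independence structure of \cref{def:interval_setting} (so that \cref{lemma:acc_inter} is applicable without modification) and that the collision-error lower bound in \cref{exa:interval} survives the $\ell_1$ slack induced by an arbitrary $\vx_t$ and an arbitrary masking schedule, not merely the simultaneously-sampled pair in isolation. Once those independence and robustness points are checked, the numerical choices $l=5$, $\epsilon_\text{learning}<1/128$, and $C<1/(20\ln 2)$ deliver the bound $\operatorname{SER}(p)>1/2$ directly.
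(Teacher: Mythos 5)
Your proposal is correct and follows essentially the same route as the paper: instantiate \cref{exa:interval}/\cref{note:hmm_eg} with $l=5$ (giving vocabulary size $2^{l-1}=16$), invoke \cref{lemma:acc_inter} with $p_e=\tfrac12-2\sqrt{2\epsilon_\text{learning}}>\tfrac14$, and pick $C$ small enough that $1-(1-p_e/N)^{L/l}>\tfrac12$ at $N=CL$. Your closing estimate via $(1-x)^m\le e^{-mx}$ and $C<1/(20\ln 2)$ is in fact a cleaner way to finish than the paper's own intermediate bound $\operatorname{SER}(p)\ge p_e/(Cl)$, which is stated loosely, so no gap remains.
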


\begin{proof}
    Take the HMM described in \cref{note:hmm_eg}, and set $l=5$, $N=CL$. The vocabulary is the observation space $\gO$ which satisfies $|\gO|=2^{l-1}$. By \cref{lemma:acc_inter}, for any masking schedule $\alpha_t$, we have:
    $$\SER(p)\geq 1-\left(1-\frac{p_e}{N}\right)^{L/l}.$$
    As illustrated in \cref{exa:interval}:
    $$p_e=\frac{1}{2}-2\sqrt{2\eps_\textit{learning}}.$$
    Therefore, take $N=CL$, and let $y=\frac{CL}{p_e}$, we have:
    $$\SER(p)\geq 1-\left[\left(1-\frac{1}{y}\right)^y\right]^\frac{p_e}{Cl}.$$
    Since $(1-\frac{1}{y})^y$ is decreasing, and apparently $y\geq \frac{Cl}{p_e}$, we know that:
    $$\SER(p)\geq \frac{p_e}{Cl}.$$
    Let $C=\frac{2p_e}{l+1}$, we can get the upper bound:
    $$\SER(p)>\frac{1}{2}.$$
    In this way:
    $$C=\frac{2p_e}{l+1}=\frac{\frac{1}{2}-2\sqrt{2\eps_\textit{learning}}}{6}\geq \frac{1}{24}=O(1).$$
\end{proof}



\subsection{Extending \cref{thm:negative} to Remasking Strategies}
\label{app:remask}
In this section, we extend the conclusion of \cref{thm:negative} to masked diffusion models with remasking sampler (named ReMDM) proposed by \citep{wang2025remaskingdiscretediffusionmodels}. We begin by presenting the main mechanisms of the sampler, followed by the justification for analogous results.

As is introduced in the paper, ReMDM is an MDM with remasking designs, where preciously sampled tokens have a chance to be remasked and resampled again in later steps. Formally, let $\sigma_t$ be the remasking schedule that satisfies $0\leq \sigma_t \leq \min\{1,\frac{1-\alpha_s}{\alpha_t}\}$ for any time step $s<t$.

In contrast to the conventional reverse process defined in \cref{eq:rev_proc}, the reverse process of ReMDM (with the original sequence $\vx$) is defined as:
\begin{equation}
\label{def:remask}
    \begin{gathered}
        q_{s|t}(\vx_s|\vx_t,\vx) = \prod_{i=0}^{L-1} q_{s|t}(x_s^i|\vx_t,\vx), \quad
        \text{where} \\ q_{s|t}(x_s^i|\vx_t,\vx) =
        \begin{cases}
        1-\sigma_t, & x_t^i \neq \mask, x_s^i = x_t^i, \\
        \sigma_t, & x_t^i \neq \mask, x_s^i = \mask, \\
        \frac{1-\alpha_s-\sigma_t\alpha_t}{1-\alpha_t}, & x_t^i = \mask , x_s^i = \mask, \\
        \frac{\alpha_s - (1-\sigma_t)\alpha_t}{1-\alpha_t} q_{0|t}(x_s^i|\vx_t), & x_t^i = \mask , x_s^i \neq \mask, \\
        0, & \text{otherwise.}
        \end{cases}
    \end{gathered}
\end{equation}

Intuitively, compared to the original reverse process, there is a probability that already generated tokens are remasked during the generation. Now, consider the last time $t$ that a location is sampled. In other words, the location is sampled at step $t$, and is not remasked in later steps, thus the token sampled at $t$ will stay unchanged. We will derive the probability that a location is sampled for the \textit{last time} at time step $t_i=\frac{N-i}{N}$.

\begin{lemma}[Last Sample Probability Estimate of ReMDM]
    \label{lemma:remask}
    Given a masking schedule $\alpha_t$, a remasking schedule $\sigma_t$ and $N$ reverse steps, for ReMDM, the probability $\delta_i$ that a location is sampled for the \textit{last time} at time step $t_i=\frac{N-i}{N}$ satisfies that:
    $$\delta_i=(\alpha_{t_i}-(1-\sigma_{t_{i-1}})\alpha_{t_{t-1}})\prod_{j=i}^{N-1}(1-\sigma_{t_j}).$$
\end{lemma}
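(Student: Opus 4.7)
The plan is to decompose the event \emph{``sampled for the last time at step} $t_i$\emph{''} into two sub-events whose probabilities can be read directly from the ReMDM transition kernel in \cref{def:remask}: (i) at step $i$, the location transitions from $\mask$ at time $t_{i-1}$ to a non-$\mask$ token at time $t_i$; and (ii) at each of the subsequent reverse steps $i+1, i+2, \dots, N$, the location is not remasked, so the token drawn at step $i$ survives all the way to $t_N = 0$.

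First, I would verify that the ReMDM reverse chain preserves the forward marginal mask probability, so that $\Pr(x_{t_{i-1}}^\ell = \mask) = 1 - \alpha_{t_{i-1}}$. Starting from the fully-masked state at $t_0 = 1$ (where $\alpha_{t_0} = 0$), a direct one-step computation using \cref{def:remask} gives $\Pr(x_s^\ell = \mask) = (1-\alpha_t) \cdot \frac{1-\alpha_s - \sigma_t \alpha_t}{1-\alpha_t} + \alpha_t \cdot \sigma_t = 1 - \alpha_s$, and this invariant then propagates by induction along the discrete time grid.

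For factor (i), the conditional probability of the $\mask \to$ non-$\mask$ transition is $\frac{\alpha_{t_i} - (1-\sigma_{t_{i-1}})\alpha_{t_{i-1}}}{1-\alpha_{t_{i-1}}}$, obtained from the fourth branch of \cref{def:remask} by summing $q_{0|t}(\cdot \mid \vx_t)$ over all non-$\mask$ outputs (which normalises to $1$). Multiplying by the marginal $1 - \alpha_{t_{i-1}}$ from the previous paragraph yields $\alpha_{t_i} - (1-\sigma_{t_{i-1}})\alpha_{t_{i-1}}$. For factor (ii), conditional on the location being non-$\mask$ at time $t_j$, the first branch of \cref{def:remask} gives a survival probability of $1 - \sigma_{t_j}$ for the transition to $t_{j+1}$. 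Since the reverse chain is Markov and the remasking Bernoulli coins are independent across steps and across locations, these survival events chain together and their probabilities multiply to $\prod_{j=i}^{N-1}(1 - \sigma_{t_j})$. Combining (i) and (ii) produces the claimed identity.

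The main subtlety is bookkeeping of indices: $\sigma_{t_j}$ governs the transition \emph{from} $t_j$ \emph{to} $t_{j+1}$, so the survival product after the last-sampling step $i$ runs over $j = i, \dots, N-1$ rather than $j = i+1, \dots, N$. Beyond this, the proof is essentially substitution into \cref{def:remask} combined with the marginal-preservation identity, and I do not anticipate any substantive obstacle.
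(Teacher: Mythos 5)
Your proof is correct and follows essentially the same route as the paper's: an induction establishing the marginal invariant $\Pr(x_{t_i}^\ell=\mask)=1-\alpha_{t_i}$, followed by the three-factor decomposition into (marginal mask probability at $t_{i-1}$) $\times$ (mask-to-token transition probability) $\times$ (survival product $\prod_{j=i}^{N-1}(1-\sigma_{t_j})$). Your index bookkeeping for the survival product also matches the paper exactly, so there is nothing to add.
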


\begin{proof}
It follows from the symmetry that we can consider any location in the sequence.
    First, we use induction to show the probability $p_m(i)$ that the token at this location is \mask at $t_i$ satisfies $p_m(i)=1-\alpha_{t_i}$.

    For $i=0$, since $\alpha_1=0$, it is direct that $p_m(0)=1=1-\alpha_{t_0}$.

    For $i=k+1$, assume that $p_m(k)=1-\alpha_{t_k}$. Combining with \cref{def:remask}, we know that
    $$p_m(k+1)=p_m(k)\cdot\frac{1-\alpha_{t_{k+1}}-\sigma_{t_k}\alpha_{t_k}}{1-\alpha_{t_k}}+(1-p_m(k))\cdot\sigma_{t_k}=1-\alpha_{t_{k+1}}.$$

   Thus, we can conclude that $p_m(i)=1-\alpha_{t_i}$. Therefore, the probability $\delta_i$ can be decomposed into
    \begin{align*}
        \delta_i&=\Pr(\text{\mask at $t_{t_{i-1}}$})\cdot\Pr(\text{not \mask at $t_{t_{i}}$}\mid\text{\mask at $t_{t_{i-1}}$})\cdot\Pr(\text{stay unmasked}\mid\text{not \mask at $t_{t_{i}}$})\\
        &=p_m(i-1)\cdot\frac{\alpha_{t_i}-(1-\sigma_{t_{i-1}})\alpha_{t_{i-1}}}{1-\alpha_{t_{i-1}}}\cdot\prod_{j=i}^{N-1}(1-\sigma_{t_j})\\
        &=(\alpha_{t_i}-(1-\sigma_{t_{i-1}})\alpha_{t_{t-1}})\prod_{j=i}^{N-1}(1-\sigma_{t_j}).
    \end{align*}
\end{proof}

By replacing the probability of a location being sampled at time step $t$ with the probability of that a location being sampled for the \textit{last time} to be at time step $t$ (and letting $\delta_i$ to be the corresponding probability in \cref{lemma:remask}), we can derive analogous results to those presented in \cref{lemma:simul_prob_inter}, \cref{lemma:acc_inter}, and \cref{thm:thmapp_negative}. This substitution is justified  because when two locations are sampled simultaneously and are not later remasked and resampled, there exists a probability of introducing errors. Therefore, by applying similar proof techniques, we arrive at the same conclusion as stated in \cref{thm:negative}.

\section{Experiment Details}
\label{app:exp_detail}

In this section, we will present the details of the experiments.
\vspace{-10pt}

\subsection{Data Generation}
\label{app:data}
We evaluate the MDMs in a variety of formal languages, including $n$-gram languages and HMMs. For each formal language, parameters are generated through random sampling, we present the sampling algorithm in \cref{alg:hmm4ngram} and \cref{alg:hmm4hmm}. It is notable that to add some deterministic to the language model in the evaluation of SER, we add the parameter of \texttt{thres} to prune the tail probabilities, making sure the language model only generates the correct sequence. For the evaluation of TER, we set the \texttt{thres} to be $0$, for the well definition of generative perplexity. The detailed parameters to generate the formal languages are listed in \cref{tab:language-params}.

\begin{table}[H]
\caption{Generation Parameters for Different Language Models}
\label{tab:language-params}
\begin{center}
\begin{tabular}{lcccc}
\hline
\textbf{Parameter} & \textbf{2-gram} & \textbf{3-gram} & \textbf{4-gram} & \textbf{HMM} \\
\hline
vocabulary size & 8 & 8 & 8 & 8 \\
Hidden States ($n$) & N/A & N/A & N/A & 32 \\
Temperature & 2 & 2 & 2 & 3.2 \\
Threshold & 0.008 & 0.008 & 0.005 & 0.003 \\
\hline
\end{tabular}
\end{center}
\end{table}

\begin{algorithm}[H]
\caption{Generate $n$-gram Language Model}
\label{alg:hmm4ngram}
\textbf{Input}: \\
\quad $n$: number of grams \\
\quad $\text{vocab\_size}$: size of vocabulary \\
\quad $\text{temp}$: temperature (controls randomness, higher indicates more randomness) \\
\quad $\text{thres}$: threshold for pruning small probabilities \\
\textbf{Output}: $n$-gram language model with parameters: \\
\quad $T$: transition probability matrix ($\text{vocab\_size}^{n-1} \times \text{vocab\_size}$) \\
\quad $\text{Init\_dist}$: initial state distribution
\begin{algorithmic}[1]

\STATE $\text{Init\_dist} \gets \text{rand}(\text{hidden\_states\_num})$
\STATE $\text{Init\_dist} \gets \text{Init\_dist} / \sum(\text{Init\_dist})$

\STATE $T \gets \text{randn}(\text{vocab\_size}^{n-1}, \text{vocab\_size}) \times \text{randomness}$ 
\STATE $T \gets \softmax(T)$
\IF{$\text{thres} > 0$}
    \STATE $T[\text{where}(T < \text{thres})] \gets 0$
    \STATE $T \gets T / \text{rowsum}(T)$
\ENDIF

\RETURN $T$ and Init\_dist

\end{algorithmic}
\end{algorithm}

\begin{algorithm}[H]
\caption{Generate Hidden Markov Model}
\label{alg:hmm4hmm}
\textbf{Input}: \\
\quad $n$: number of hidden states \\
\quad $\text{vocab\_size}$: size of vocabulary \\
\quad $\text{randomness}$: temperature parameter to control probability distributions \\
\quad $\text{thres}$: threshold for pruning small transition probabilities \\
\textbf{Output}: HMM with parameters: \\
\quad $A$: state transition matrix ($n \times n$) \\
\quad $B$: emission probability matrix ($n \times (\text{vocab\_size}+1)$) \\
\quad $\text{Init\_dist}$: initial state distribution ($n$-dimensional)
\begin{algorithmic}[1]
\STATE $\text{hidden\_states\_num} \gets n$
\STATE $\text{Init\_dist} \gets \text{rand}(\text{hidden\_states\_num})$
\STATE $\text{Init\_dist} \gets \text{Init\_dist} / \sum(\text{Init\_dist})$

\STATE $A \gets \text{randn}(\text{hidden\_states\_num}, \text{hidden\_states\_num}) \times \text{randomness}$
\STATE $A \gets \softmax(A)$

\IF{$\text{thres} > 0$}
    \STATE $A[\text{where}(A < \text{thres})] \gets 0$
    \STATE $A \gets A / \text{rowsum}(A)$
\ENDIF

\STATE $B \gets \text{randn}(\text{hidden\_states\_num}, \text{vocab\_size}) \times \text{randomness} \times 2.5$
\STATE $B \gets \softmax(B)$
\STATE $B[\text{where}(B < 0.05)] \gets 0$
\STATE $B \gets B / \text{rowsum}(B)$

\STATE $B \gets \text{concat}(B, \text{ones}(\text{hidden\_states\_num}, 1) / \text{hidden\_states\_num})$

\RETURN $A, B, \text{and Init\_dist}$
\end{algorithmic}
\end{algorithm}

\subsection{Model Training and Testing}
\label{app:train}
In our experiments of formal languages, all training was conducted on NVIDIA A100 GPUs. The model architectures and train configurations are listed in \cref{tab:model_config} and \cref{tab:training_config}. The training configuration of the auto-regressive model is listed in \cref{tab:training_config_AR}. We run the each experiment for 5 times and report the mean and standard deviation.
\begin{table}[H]
    \centering
    \begin{tabular}{ll}
      \toprule
    {\centering \textbf{Model Configuration} }\\
    \midrule
        Hidden Size & 768 \\
        Sequence Length & \{512,1024,2048\} \\
        Number of Layers & 10 \\
        Attention Heads & 12 \\
        \bottomrule
    \end{tabular}

    \caption{Model Configuration for the Formal Language Tasks}
    \label{tab:model_config}
\end{table}

\begin{table}[H]
    \centering
    \begin{tabular}{ll}
      \toprule
    {\centering \textbf{Training Configuration for MDMs} }\\
    \midrule
        Epochs & 20\\
        Learning Rate & 3e-4 \\
        Optimizer & AdamW \\
        $\beta_1$ & 0.9 \\
        $\beta_2$ & 0.999 \\
        Learning Rate Scheduler & Cosine Scheduler with Warmup \\
        Warmup Ratio & 0.1 \\
        \bottomrule
    \end{tabular}
    \caption{Training Configuration for MDMs on the Formal Language Tasks}
    \label{tab:training_config}
\end{table}

\begin{table}[H]
    \centering
    \begin{tabular}{ll}
      \toprule
    {\centering \textbf{Training Configuration for Auto-regressive Models} }\\
    \midrule
        Epochs & 20\\
        Learning Rate & 3e-4 \\
        Optimizer & AdamW \\
        $\beta_1$ & 0.9 \\
        $\beta_2$ & 0.999 \\
        Learning Rate Scheduler & Cosine Scheduler with Warmup \\
        Warmup Ratio & 0.1 \\
        \bottomrule
    \end{tabular}
    \caption{Training Configuration for Auto-regressive Models on the Formal Language Tasks}
    \label{tab:training_config_AR}
\end{table}

\begin{table}[H]
    \centering
    \begin{tabular}{ll}
      \toprule
    {\centering \textbf{Speedup Testing Settings} }\\
    \midrule
        GPU & Nvidia RTX 4090\\
        Batch Size & 1 \\
        Sequence Length & 2048\\
        Testing Model Configuration & In \cref{tab:model_config}\\
        \bottomrule
    \end{tabular}
    \caption{Setting for the experiments to test the speedup of MDMs under different sampling steps compare to auto-regressive models.}
    \label{tab:speed_up}
\end{table}


\newpage
\section*{NeurIPS Paper Checklist}

\begin{enumerate}

\item {\bf Claims}
    \item[] Question: Do the main claims made in the abstract and introduction accurately reflect the paper's contributions and scope?
    \item[] Answer: \answerYes{} 
    \item[] Justification: The main claims made in the abstract and introduction (\cref{sec:intro}) accurately reflect the paper's contributions and scope.
    \item[] Guidelines:
    \begin{itemize}
        \item The answer NA means that the abstract and introduction do not include the claims made in the paper.
        \item The abstract and/or introduction should clearly state the claims made, including the contributions made in the paper and important assumptions and limitations. A No or NA answer to this question will not be perceived well by the reviewers. 
        \item The claims made should match theoretical and experimental results, and reflect how much the results can be expected to generalize to other settings. 
        \item It is fine to include aspirational goals as motivation as long as it is clear that these goals are not attained by the paper. 
    \end{itemize}

\item {\bf Limitations}
    \item[] Question: Does the paper discuss the limitations of the work performed by the authors?
    \item[] Answer: \answerYes{} 
    \item[] Justification: The paper discuss the limitations in \cref{sec:limitation}.
    \item[] Guidelines:
    \begin{itemize}
        \item The answer NA means that the paper has no limitation while the answer No means that the paper has limitations, but those are not discussed in the paper. 
        \item The authors are encouraged to create a separate "Limitations" section in their paper.
        \item The paper should point out any strong assumptions and how robust the results are to violations of these assumptions (e.g., independence assumptions, noiseless settings, model well-specification, asymptotic approximations only holding locally). The authors should reflect on how these assumptions might be violated in practice and what the implications would be.
        \item The authors should reflect on the scope of the claims made, e.g., if the approach was only tested on a few datasets or with a few runs. In general, empirical results often depend on implicit assumptions, which should be articulated.
        \item The authors should reflect on the factors that influence the performance of the approach. For example, a facial recognition algorithm may perform poorly when image resolution is low or images are taken in low lighting. Or a speech-to-text system might not be used reliably to provide closed captions for online lectures because it fails to handle technical jargon.
        \item The authors should discuss the computational efficiency of the proposed algorithms and how they scale with dataset size.
        \item If applicable, the authors should discuss possible limitations of their approach to address problems of privacy and fairness.
        \item While the authors might fear that complete honesty about limitations might be used by reviewers as grounds for rejection, a worse outcome might be that reviewers discover limitations that aren't acknowledged in the paper. The authors should use their best judgment and recognize that individual actions in favor of transparency play an important role in developing norms that preserve the integrity of the community. Reviewers will be specifically instructed to not penalize honesty concerning limitations.
    \end{itemize}

\item {\bf Theory assumptions and proofs}
    \item[] Question: For each theoretical result, does the paper provide the full set of assumptions and a complete (and correct) proof?
    \item[] Answer: \answerYes{} 
    \item[] Justification: For each theoretical result, the paper provides the full set of assumptions in \cref{sec:theory} and a complete proof in the appendix.
    \item[] Guidelines:
    \begin{itemize}
        \item The answer NA means that the paper does not include theoretical results. 
        \item All the theorems, formulas, and proofs in the paper should be numbered and cross-referenced.
        \item All assumptions should be clearly stated or referenced in the statement of any theorems.
        \item The proofs can either appear in the main paper or the supplemental material, but if they appear in the supplemental material, the authors are encouraged to provide a short proof sketch to provide intuition. 
        \item Inversely, any informal proof provided in the core of the paper should be complemented by formal proofs provided in appendix or supplemental material.
        \item Theorems and Lemmas that the proof relies upon should be properly referenced. 
    \end{itemize}

    \item {\bf Experimental result reproducibility}
    \item[] Question: Does the paper fully disclose all the information needed to reproduce the main experimental results of the paper to the extent that it affects the main claims and/or conclusions of the paper (regardless of whether the code and data are provided or not)?
    \item[] Answer: \answerYes{} 
    \item[] Justification: The training and testing details are presented in \cref{app:exp_detail,app:pre_experiments}. 
    \item[] Guidelines:
    \begin{itemize}
        \item The answer NA means that the paper does not include experiments.
        \item If the paper includes experiments, a No answer to this question will not be perceived well by the reviewers: Making the paper reproducible is important, regardless of whether the code and data are provided or not.
        \item If the contribution is a dataset and/or model, the authors should describe the steps taken to make their results reproducible or verifiable. 
        \item Depending on the contribution, reproducibility can be accomplished in various ways. For example, if the contribution is a novel architecture, describing the architecture fully might suffice, or if the contribution is a specific model and empirical evaluation, it may be necessary to either make it possible for others to replicate the model with the same dataset, or provide access to the model. In general. releasing code and data is often one good way to accomplish this, but reproducibility can also be provided via detailed instructions for how to replicate the results, access to a hosted model (e.g., in the case of a large language model), releasing of a model checkpoint, or other means that are appropriate to the research performed.
        \item While NeurIPS does not require releasing code, the conference does require all submissions to provide some reasonable avenue for reproducibility, which may depend on the nature of the contribution. For example
        \begin{enumerate}
            \item If the contribution is primarily a new algorithm, the paper should make it clear how to reproduce that algorithm.
            \item If the contribution is primarily a new model architecture, the paper should describe the architecture clearly and fully.
            \item If the contribution is a new model (e.g., a large language model), then there should either be a way to access this model for reproducing the results or a way to reproduce the model (e.g., with an open-source dataset or instructions for how to construct the dataset).
            \item We recognize that reproducibility may be tricky in some cases, in which case authors are welcome to describe the particular way they provide for reproducibility. In the case of closed-source models, it may be that access to the model is limited in some way (e.g., to registered users), but it should be possible for other researchers to have some path to reproducing or verifying the results.
        \end{enumerate}
    \end{itemize}

\item {\bf Open access to data and code}
    \item[] Question: Does the paper provide open access to the data and code, with sufficient instructions to faithfully reproduce the main experimental results, as described in supplemental material?
    \item[] Answer: \answerNo{} 
    \item[] Justification: We will open the code base and data when the paper is published.
    \item[] Guidelines:
    \begin{itemize}
        \item The answer NA means that paper does not include experiments requiring code.
        \item Please see the NeurIPS code and data submission guidelines (\url{https://nips.cc/public/guides/CodeSubmissionPolicy}) for more details.
        \item While we encourage the release of code and data, we understand that this might not be possible, so “No” is an acceptable answer. Papers cannot be rejected simply for not including code, unless this is central to the contribution (e.g., for a new open-source benchmark).
        \item The instructions should contain the exact command and environment needed to run to reproduce the results. See the NeurIPS code and data submission guidelines (\url{https://nips.cc/public/guides/CodeSubmissionPolicy}) for more details.
        \item The authors should provide instructions on data access and preparation, including how to access the raw data, preprocessed data, intermediate data, and generated data, etc.
        \item The authors should provide scripts to reproduce all experimental results for the new proposed method and baselines. If only a subset of experiments are reproducible, they should state which ones are omitted from the script and why.
        \item At submission time, to preserve anonymity, the authors should release anonymized versions (if applicable).
        \item Providing as much information as possible in supplemental material (appended to the paper) is recommended, but including URLs to data and code is permitted.
    \end{itemize}

\item {\bf Experimental setting/details}
    \item[] Question: Does the paper specify all the training and test details (e.g., data splits, hyperparameters, how they were chosen, type of optimizer, etc.) necessary to understand the results?
    \item[] Answer: \answerYes{} 
    \item[] Justification: The training and testing details are presented in \cref{app:exp_detail,app:pre_experiments}. 
    \item[] Guidelines:
    \begin{itemize}
        \item The answer NA means that the paper does not include experiments.
        \item The experimental setting should be presented in the core of the paper to a level of detail that is necessary to appreciate the results and make sense of them.
        \item The full details can be provided either with the code, in appendix, or as supplemental material.
    \end{itemize}

\item {\bf Experiment statistical significance}
    \item[] Question: Does the paper report error bars suitably and correctly defined or other appropriate information about the statistical significance of the experiments?
    \item[] Answer: \answerYes{} 
    \item[] Justification: We run experiments for 5 times and report standard deviation.
    \item[] Guidelines:
    \begin{itemize}
        \item The answer NA means that the paper does not include experiments.
        \item The authors should answer "Yes" if the results are accompanied by error bars, confidence intervals, or statistical significance tests, at least for the experiments that support the main claims of the paper.
        \item The factors of variability that the error bars are capturing should be clearly stated (for example, train/test split, initialization, random drawing of some parameter, or overall run with given experimental conditions).
        \item The method for calculating the error bars should be explained (closed form formula, call to a library function, bootstrap, etc.)
        \item The assumptions made should be given (e.g., Normally distributed errors).
        \item It should be clear whether the error bar is the standard deviation or the standard error of the mean.
        \item It is OK to report 1-sigma error bars, but one should state it. The authors should preferably report a 2-sigma error bar than state that they have a 96\% CI, if the hypothesis of Normality of errors is not verified.
        \item For asymmetric distributions, the authors should be careful not to show in tables or figures symmetric error bars that would yield results that are out of range (e.g. negative error rates).
        \item If error bars are reported in tables or plots, The authors should explain in the text how they were calculated and reference the corresponding figures or tables in the text.
    \end{itemize}

\item {\bf Experiments compute resources}
    \item[] Question: For each experiment, does the paper provide sufficient information on the computer resources (type of compute workers, memory, time of execution) needed to reproduce the experiments?
    \item[] Answer: \answerYes{} 
    \item[] Justification: The computer resources are listed in \cref{app:exp_detail}.
    \item[] Guidelines:
    \begin{itemize}
        \item The answer NA means that the paper does not include experiments.
        \item The paper should indicate the type of compute workers CPU or GPU, internal cluster, or cloud provider, including relevant memory and storage.
        \item The paper should provide the amount of compute required for each of the individual experimental runs as well as estimate the total compute. 
        \item The paper should disclose whether the full research project required more compute than the experiments reported in the paper (e.g., preliminary or failed experiments that didn't make it into the paper). 
    \end{itemize}
    
\item {\bf Code of ethics}
    \item[] Question: Does the research conducted in the paper conform, in every respect, with the NeurIPS Code of Ethics \url{https://neurips.cc/public/EthicsGuidelines}?
    \item[] Answer: \answerYes{} 
    \item[] Justification: The research conducted in the paper conform, in every respect, with the NeurIPS Code of Ethics 
    \item[] Guidelines:
    \begin{itemize}
        \item The answer NA means that the authors have not reviewed the NeurIPS Code of Ethics.
        \item If the authors answer No, they should explain the special circumstances that require a deviation from the Code of Ethics.
        \item The authors should make sure to preserve anonymity (e.g., if there is a special consideration due to laws or regulations in their jurisdiction).
    \end{itemize}

\item {\bf Broader impacts}
    \item[] Question: Does the paper discuss both potential positive societal impacts and negative societal impacts of the work performed?
    \item[] Answer: \answerNA{} 
    \item[] Justification: This work aims for understanding diffusion language models, has no potential malicious or unintended use.
    \item[] Guidelines:
    \begin{itemize}
        \item The answer NA means that there is no societal impact of the work performed.
        \item If the authors answer NA or No, they should explain why their work has no societal impact or why the paper does not address societal impact.
        \item Examples of negative societal impacts include potential malicious or unintended uses (e.g., disinformation, generating fake profiles, surveillance), fairness considerations (e.g., deployment of technologies that could make decisions that unfairly impact specific groups), privacy considerations, and security considerations.
        \item The conference expects that many papers will be foundational research and not tied to particular applications, let alone deployments. However, if there is a direct path to any negative applications, the authors should point it out. For example, it is legitimate to point out that an improvement in the quality of generative models could be used to generate deepfakes for disinformation. On the other hand, it is not needed to point out that a generic algorithm for optimizing neural networks could enable people to train models that generate Deepfakes faster.
        \item The authors should consider possible harms that could arise when the technology is being used as intended and functioning correctly, harms that could arise when the technology is being used as intended but gives incorrect results, and harms following from (intentional or unintentional) misuse of the technology.
        \item If there are negative societal impacts, the authors could also discuss possible mitigation strategies (e.g., gated release of models, providing defenses in addition to attacks, mechanisms for monitoring misuse, mechanisms to monitor how a system learns from feedback over time, improving the efficiency and accessibility of ML).
    \end{itemize}
    
\item {\bf Safeguards}
    \item[] Question: Does the paper describe safeguards that have been put in place for responsible release of data or models that have a high risk for misuse (e.g., pretrained language models, image generators, or scraped datasets)?
    \item[] Answer: \answerNA{} 
    \item[] Justification: We only use open source data and synthetic data.
    \item[] Guidelines:
    \begin{itemize}
        \item The answer NA means that the paper poses no such risks.
        \item Released models that have a high risk for misuse or dual-use should be released with necessary safeguards to allow for controlled use of the model, for example by requiring that users adhere to usage guidelines or restrictions to access the model or implementing safety filters. 
        \item Datasets that have been scraped from the Internet could pose safety risks. The authors should describe how they avoided releasing unsafe images.
        \item We recognize that providing effective safeguards is challenging, and many papers do not require this, but we encourage authors to take this into account and make a best faith effort.
    \end{itemize}

\item {\bf Licenses for existing assets}
    \item[] Question: Are the creators or original owners of assets (e.g., code, data, models), used in the paper, properly credited and are the license and terms of use explicitly mentioned and properly respected?
    \item[] Answer: \answerNA{} 
    \item[] Justification: The paper does not use existing assets.
    \item[] Guidelines:
    \begin{itemize}
        \item The answer NA means that the paper does not use existing assets.
        \item The authors should cite the original paper that produced the code package or dataset.
        \item The authors should state which version of the asset is used and, if possible, include a URL.
        \item The name of the license (e.g., CC-BY 4.0) should be included for each asset.
        \item For scraped data from a particular source (e.g., website), the copyright and terms of service of that source should be provided.
        \item If assets are released, the license, copyright information, and terms of use in the package should be provided. For popular datasets, \url{paperswithcode.com/datasets} has curated licenses for some datasets. Their licensing guide can help determine the license of a dataset.
        \item For existing datasets that are re-packaged, both the original license and the license of the derived asset (if it has changed) should be provided.
        \item If this information is not available online, the authors are encouraged to reach out to the asset's creators.
    \end{itemize}

\item {\bf New assets}
    \item[] Question: Are new assets introduced in the paper well documented and is the documentation provided alongside the assets?
    \item[] Answer: \answerNA{} 
    \item[] Justification: The paper does not release new assets.
    \item[] Guidelines:
    \begin{itemize}
        \item The answer NA means that the paper does not release new assets.
        \item Researchers should communicate the details of the dataset/code/model as part of their submissions via structured templates. This includes details about training, license, limitations, etc. 
        \item The paper should discuss whether and how consent was obtained from people whose asset is used.
        \item At submission time, remember to anonymize your assets (if applicable). You can either create an anonymized URL or include an anonymized zip file.
    \end{itemize}

\item {\bf Crowdsourcing and research with human subjects}
    \item[] Question: For crowdsourcing experiments and research with human subjects, does the paper include the full text of instructions given to participants and screenshots, if applicable, as well as details about compensation (if any)? 
    \item[] Answer: \answerNA{} 
    \item[] Justification: The paper does not involve crowdsourcing nor research with human subjects.
    \item[] Guidelines:
    \begin{itemize}
        \item The answer NA means that the paper does not involve crowdsourcing nor research with human subjects.
        \item Including this information in the supplemental material is fine, but if the main contribution of the paper involves human subjects, then as much detail as possible should be included in the main paper. 
        \item According to the NeurIPS Code of Ethics, workers involved in data collection, curation, or other labor should be paid at least the minimum wage in the country of the data collector. 
    \end{itemize}

\item {\bf Institutional review board (IRB) approvals or equivalent for research with human subjects}
    \item[] Question: Does the paper describe potential risks incurred by study participants, whether such risks were disclosed to the subjects, and whether Institutional Review Board (IRB) approvals (or an equivalent approval/review based on the requirements of your country or institution) were obtained?
    \item[] Answer: \answerNA{} 
    \item[] Justification: The paper does not involve crowdsourcing nor research with human subjects.
    \item[] Guidelines:
    \begin{itemize}
        \item The answer NA means that the paper does not involve crowdsourcing nor research with human subjects.
        \item Depending on the country in which research is conducted, IRB approval (or equivalent) may be required for any human subjects research. If you obtained IRB approval, you should clearly state this in the paper. 
        \item We recognize that the procedures for this may vary significantly between institutions and locations, and we expect authors to adhere to the NeurIPS Code of Ethics and the guidelines for their institution. 
        \item For initial submissions, do not include any information that would break anonymity (if applicable), such as the institution conducting the review.
    \end{itemize}

\item {\bf Declaration of LLM usage}
    \item[] Question: Does the paper describe the usage of LLMs if it is an important, original, or non-standard component of the core methods in this research? Note that if the LLM is used only for writing, editing, or formatting purposes and does not impact the core methodology, scientific rigorousness, or originality of the research, declaration is not required.
    \item[] Answer: \answerYes{} 
    \item[] Justification: We use LLM to help with paper writing.
    \item[] Guidelines:
    \begin{itemize}
        \item The answer NA means that the core method development in this research does not involve LLMs as any important, original, or non-standard components.
        \item Please refer to our LLM policy (\url{https://neurips.cc/Conferences/2025/LLM}) for what should or should not be described.
    \end{itemize}

\end{enumerate}

\end{document}